\titlespacing{\section}{0pt}{*0.7}{*0.5}
\titlespacing{\subsection}{0pt}{*0.7}{*0.5}
\setlist{nosep, leftmargin=*}
\DeclareMathOperator{\tr}{\mathop{Trace}}
\DeclareMathOperator{\diag}{\mathop{diag}}
\newtheorem{theorem}{Theorem}
\newtheorem{lemma}{Lemma}
\newtheorem{proposition}{Proposition}
\newtheorem{remark}{Remark}
\newtheorem{definition}{Definition}
\begin{document}

\title{\huge Non-submodular Visual Attention for Robot Navigation}

\author{
 Reza Vafaee$^{\dagger}$, Kian Behzad$^{\dagger}$, Milad Siami$^{\dagger}$, Luca Carlone$^{\ddagger}$, and Ali Jadbabaie$^{\ddagger}$
\thanks{$^{\dagger}$ Department of Electrical \& Computer Engineering,
Northeastern University, Boston, MA 02115 USA (Emails: {\tt\small \{vafaee.r, behzad.k, m.siami\}@northeastern.edu}).}
\thanks{$^{\ddagger}$ Laboratory for
Information \& Decision Systems (LIDS), Massachusetts Institute of Technology,
Cambridge, MA 02139 USA (Emails: {\tt\small
\{jadbabai,lcarlone\}@mit.edu}).}
\thanks{This material is based upon work supported in part by the U.S. Office of Naval Research under Grant Award N00014-21-1-2431; in part by the U.S. National Science Foundation under Grant Award 2121121 and Grant Award 2208182; in part by the U.S. Department of Homeland Security under Grant Award 22STESE00001-03-02; in part by the DEVCOM Analysis Center and was accomplished under Contract Number W911QX-23-D0002 (RV, KB, MS). AJ acknowledges support from the DARPA Artificial Intelligence Quantified (AIQ) program. The views and conclusions contained in this document are those of the authors and should not be interpreted as representing the official policies, either expressed or implied, of the DEVCOM Analysis Center, the U.S. Department of Homeland Security, or the U.S. Government. The U.S. Government is authorized to reproduce and distribute reprints for Government purposes notwithstanding any copyright notation herein.}}

\maketitle
\allowdisplaybreaks

\begin{abstract}
    This paper presents a task-oriented computational framework to enhance Visual-Inertial Navigation (VIN) in robots, addressing challenges such as limited time and energy resources. The framework strategically selects visual features using a Mean Squared Error (MSE)-based, non-submodular objective function and a simplified dynamic anticipation model. To address the NP‐hardness of this problem, we introduce four polynomial‐time approximation algorithms: a classic greedy method with constant‐factor guarantees; a low‐rank greedy variant that significantly reduces computational complexity; a randomized greedy sampler that balances efficiency and solution quality; and a linearization‐based selector based on a first‐order Taylor expansion for near‐constant‐time execution. We establish rigorous performance bounds by leveraging submodularity ratios, curvature, and element‐wise curvature analyses. Extensive experiments on both standardized benchmarks and a custom control‐aware platform validate our theoretical results, demonstrating that these methods achieve strong approximation guarantees while enabling real‐time deployment.
\end{abstract}

\begin{IEEEkeywords}
    Autonomous agents, greedy-based algorithms, localization, non-submodularity, visual-based navigation.
\end{IEEEkeywords}

\section{Introduction}

\IEEEPARstart{A}{chieving}  efficient navigation in unpredictable environments remains a significant challenge in robotics. Recent advancements in computing devices have opened up new possibilities, propelling substantial progress in this research area~\cite{yang2018grand}. These advancements enable near-real-time resolution of estimation and planning tasks in specific applications. However, navigating robots in rapidly changing environments still faces computational complexities. Despite the availability of high-performance computational units, the increasing demand for agility and autonomy necessitates more efficient onboard processing to ensure timely decision-making and execution in dynamic conditions.

One of the key challenges in robot navigation is achieving accurate visual odometry while minimizing computational resource usage. Many researchers have focused on the issue of selecting visual features (e.g.,~\cite{sala2006landmark,lerner2007landmark,mu2017two, carlone2018attention,mousavi2020estimation, pandey2024scalable}). The core idea is that, based on the robot's current state and its planned future motion (i.e., the task at hand), tracking specific features over a time horizon can provide more valuable information than tracking others. Essentially, some visual features may require greater attention than others.

In this context,~\cite{davison2005active} employs a greedy approach to select a subset of pre-identified visual features, thereby streamlining the robot's pose estimation. In~\cite{strasdat2009landmark}, the authors combine simultaneous localization and mapping (SLAM) using unscented Kalman filtering with reinforcement learning to create policies for feature selection.~\cite{mu2017two} proposes a two-stage methodology for measurement planning, where the initial stage involves selecting a subset of landmarks for observation, followed by determining observation times for each feature. Additionally, in~\cite{lerner2007landmark}, a task-aware approach is investigated to select a subset of features with the goal of minimizing an uncertainty metric.

Related sparsification problems are explored in other contexts. For example,~\cite{siami2017growing,siami2018network} address optimizing feedback interconnections in linear consensus networks, while~\cite{siami2020deterministic,siami2020separation} focus on selecting sparse sets of sensors or actuators to preserve system observability and controllability. However, these frameworks are not directly applicable to the motion-aware feature selection problem discussed here, as they focus on dyads rather than selecting a limited number of Positive Semidefinite (PSD) matrices.

Prior works such as~\cite{carlone2018attention,mousavi2020estimation} show that feature selection for robot navigation can be framed as maximizing a non-negative monotone submodular function under a matroid constraint, with the greedy algorithm achieving an optimal approximation of $1-1/e \approx 0.632$ for cardinality constraints~\cite{nemhauser1978analysis}. This process accelerates with lazy evaluations~\cite{badanidiyuru2014fast} and further optimizes with randomization for linear time complexity~\cite{mirzasoleiman2015lazier}. Recently,~\cite{buchbinder2024deterministic} propose a deterministic non-oblivious local search algorithm with a $1 - 1/e - \varepsilon$ (for any $\epsilon > 0$) guarantee. However, these methods are not applicable to our problem, since MSE, our primary performance metric, is not submodular.

The most closely related prior work to the present study is conducted by~\cite{carlone2018attention} and~\cite{mousavi2020estimation}. In~\cite{carlone2018attention}, the focus is on visual-inertial navigation, in which the design variable is the selection of features to track over a fixed time horizon. The authors use convex relaxations and a greedy method for feature selection, providing a quantitative assessment of performance guarantees for the quality of the resulting state estimations. They evaluate the effectiveness of the greedy heuristic using the submodularity ratio from~\cite{das2011submodular}. However, calculating the submodularity ratio for the proposed set functions is computationally challenging due to the combinatorial complexity involved in its definition~\cite{summers2019performance}. Additionally, the complexity of a simple semidefinite programming (SDP) convex relaxation scales cubically with the number of detected features, making it infeasible for real-time implementation. Moreover, their measures used to quantify the confidence ellipsoid of the forward predictor are not explicitly related to MSE, which is often the performance metric of interest for estimation problems~\cite{hashemi2018randomized}.

\begin{figure*}[tb]
    \centering
    \includegraphics[width=0.95\linewidth]{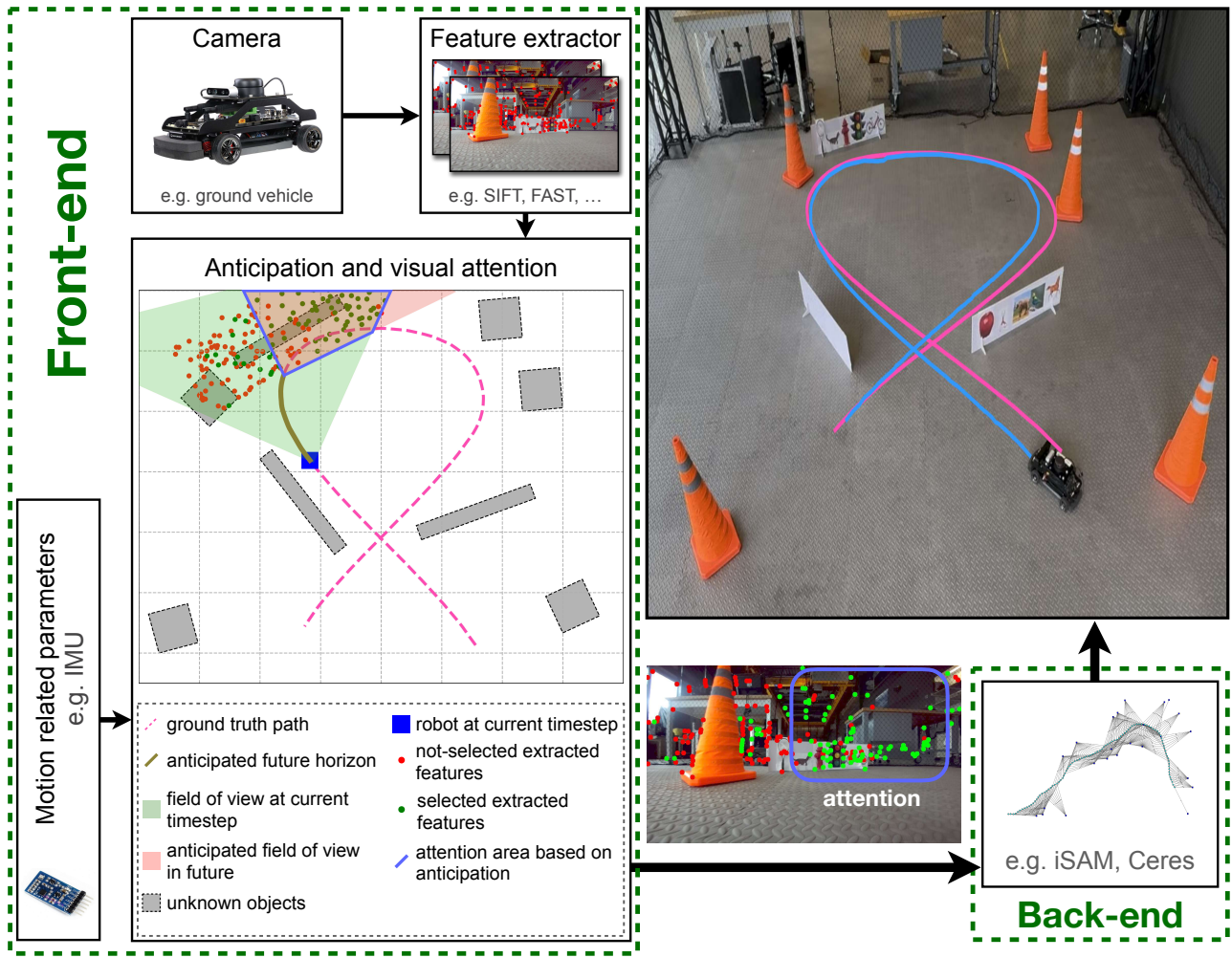}
    \caption{Detailed illustration of the visual attention mechanism as the front‐end’s filtering block. Image features extracted from each keyframe are combined with motion priors provided by the IMU to form a baseline information estimate. For every candidate feature, the mechanism projects the feature into predicted future poses to simulate its visibility and then computes its expected reduction in estimation error. Features are then ranked by this information‐gain metric, and only the highest‐value features are forwarded to the back‐end optimizer for state estimation. This selective process reduces computational load while maintaining high‐accuracy pose estimates.}
    \label{fig:diagram}
\end{figure*}

A recent study detailed in~\cite{zhao2020good} also tackles the challenge of reducing latency in Visual Odometry (VO)/Visual SLAM (VSLAM) systems by identifying and matching a subset of features deemed most valuable for pose estimation. While their focus lies in optimizing feature matching latency using the Max-logDet metric, our work concentrates on selecting informative features that minimize MSE under motion-aware dynamics, providing theoretical guarantees under non-submodular objectives.

In~\cite{mousavi2020estimation}, the authors propose a randomized sampling algorithm for feature selection instead of using the greedy method or convex relaxations. In their approach, a sampling probability (a number between zero and one) is assigned to each available feature, with these probabilities interpreted as measures of informativeness during the sampling process. This procedure provides a performance guarantee compared to using all visual features, rather than the optimal set, for a range of VIN measures, including MSE. This is achieved by approximating the complete spectrum of the information matrix. However, for some measures, such as the worst-case error, this approach imposes unnecessary computational burden because we only need to approximate the minimum eigenvalue of the information matrix, rather than the full spectrum. Additionally, the proposed approach requires sampling $\mathcal{O}((3 T + 3) \log (3 T + 3)/\epsilon^2)$ features, where $T$ is the forward time horizon and $\epsilon \in (0,1)$ is an approximation parameter, to guarantee the proposed performance bounds. We observe that this requirement for the number of sampled features is significant and, in practical scenarios, necessitates sampling all features extracted in the estimation problems.

The authors of~\cite{pandey2024scalable} later address the problem of sparse feature selection for localizing a team of agents, where they exchange relative measurements leading to a graphical network. Compared to~\cite{mousavi2020estimation}, they improve the probabilistic bound of the randomized feature selection algorithm, although the related problems mentioned in the previous paragraph still exist.

VIN methods are broadly categorized into filtering-based approaches, fixed-lag smoothing, and full smoothing. Filtering methods, such as the extended Kalman filter-based and its variants analyzed in~\cite{hesch2014camera}, offer real-time performance by sequentially processing data. However, they typically underutilize cross-time correlations, which may limit estimation accuracy in complex scenarios.

Fixed-lag smoothers~\cite{leutenegger2015keyframe,sibley2010sliding} enhance estimation accuracy via sliding-window optimization. Specifically,~\cite{leutenegger2015keyframe} integrates visual and inertial data in a keyframe-based bundle adjustment framework for visual-inertial SLAM. In contrast,~\cite{sibley2010sliding} employs a constant-time sliding-window filter using stereo imagery for real-time relative pose estimation during planetary landing. While both methods retain recent poses and marginalize older ones, their objectives and sensing modalities differ.

Full smoothing methods~\cite{forster2015imu,lupton2011visual} jointly optimize over all past states using nonlinear optimization. The first achieves efficient maximum-a-posteriori estimation through IMU preintegration on manifolds, while the second handles high-dynamic motion without requiring prior state initialization. These methods offer increased accuracy but incur higher computational complexity due to global optimization.

More recently, ORB-SLAM3~\cite{campos2021orb} has emerged as a leading keyframe-based SLAM system that integrates visual and inertial data via local bundle adjustment and global pose graph optimization. While it does not implement full smoothing over all states, its flexibility and support for multiple sensor configurations make it a strong practical baseline. This paper focuses on the complementary task of selecting informative visual measurements, which can enhance the front-end performance of VIN systems across all categories.

{\it Our contributions}: To address the existing issues, in this work,

\begin{itemize}
    \item [-] using a similar simplified model for forward simulation of robot dynamics as~\cite{carlone2018attention}, we formulate the task of feature selection as maximizing a monotone non-submodular objective function directly related to the mean-square state estimation error.
    
    \item [-] we propose constant factor approximation bounds for the greedy algorithm based on recent concepts of submodularity ratio and curvature~\cite{bian2017guarantees}. We derive bounds on these values according to the spectrum of the information matrices, eliminating the need for combinatorial search. These bounds suggest an easy-to-obtain performance guarantee for the greedy approach.

    \item [-] we exploit the low-rank structure of the landmark information matrices to significantly reduce the computational cost of matrix inversion in each greedy iteration. This leads to a fast version of the greedy algorithm that maintains the same approximation guarantees as the standard approach, while offering substantial practical speedups.
 
    \item [-] inspired by~\cite{hashemi2020randomized}, we include a randomization step in a simple greedy procedure to increase computational efficiency and practicality for scenarios with numerous detected features or longer prediction horizons. This randomized greedy framework is supported by a performance bound based on the element-wise curvature concept.
    
    \item [-] we transform the problem into an efficient modular maximization by using a first-order Taylor series approximation of the VIN performance measure. In this form, the greedy selection method guarantees an optimal solution.

    \item [-] we validate the proposed selectors in two complementary experiments.  First, multiple EuRoC sequences~\cite{burri2016euroc} allow direct comparison with prior work.  Because EuRoC lacks control inputs for horizon prediction, we also run a full visual–inertial experiment on the QCar platform~\cite{QCar_Quanser}, where stereo images, IMU data, and control commands are all available.  This two-pronged evaluation demonstrates both benchmark accuracy and real-world applicability.
\end{itemize}

Unlike~\cite{mousavi2020estimation}, all proposed performance bounds characterize proximity to the optimal visual feature set. We provide some of the proofs in Appendix~\ref{sec:missing_proofs}, to keep the main text focused on the discussion of the problem and proposed solutions.

Fig.~\ref{fig:diagram} provides a comprehensive overview of the visual attention mechanism investigated in this paper. This mechanism functions as a critical sub-block in the front-end, where it processes inputs such as extracted features from keyframes and motion-related parameters obtained, for example, from an IMU. The primary objective of the visual attention mechanism is to determine which features should be selected for back-end optimization. This selection process is guided by evaluating both the quality of the features and their potential impact in decreasing uncertainties based on robot dynamics. By effectively filtering and prioritizing features, the visual attention mechanism ensures that the back-end optimization process operates with the most relevant and high-quality data, thereby enhancing the overall performance and accuracy of the system.

\section{Preliminaries and Problem Definition}
\subsection{Notations}
Integers are denoted by $\mathbb{Z}$ and real numbers by $\mathbb{R}$. 
The set of integers (respectively real numbers) greater than or equal to $a \in \mathbb{R}$ is denoted by $\mathbb{Z}_{\geq a}$ (respectively $\mathbb{R}_{\geq a}$). 
Given any integer~\(n \in \mathbb{Z}_{\geq 1}\), we define~\([n] = \{1, \ldots, n\}\). Finite sets are denoted by sans-serif fonts (e.g.,~\(\mathsf{A}\)). The cardinality of a finite set~\(\mathsf{A}\) is denoted by~\(|\mathsf{A}|\). 
\(I\) denotes an identity matrix, with its dimension inferred from context. The transpose and rank of a matrix~\(X\) are denoted by~\(X^\top\) and~\(\operatorname{rank}(X)\), respectively. 
The set of positive definite (respectively positive semidefinite) matrices of size~\(n\) is denoted by~\(\mathbb{S}_{++}^n\) (respectively~\(\mathbb{S}_{+}^n\)). 
The symbol~\(\|\cdot\|_2\) denotes the Euclidean norm when applied to vectors and the spectral norm when applied to matrices. The symbol~\(\|\cdot\|_F\) denotes the Frobenius norm for matrices.
For vector~\(x\),~\(\diag x\) is a diagonal matrix with the elements of~\(x\) orderly sitting on its diagonal.~\(\mathcal{N}(\mu, \Sigma)\) denotes a normal (Gaussian) distribution with mean (average)~\(\mu\) and covariance~\(\Sigma\).

\subsection{Forward-simulation Optimization}
The forward-simulation model considered in this paper depends on the anticipated future motion of the robot, as we will later show that the Inertial Measurement Unit (IMU) and vision models are functions of the robot's predicted future poses. Anticipation is a key element of our approach: the feature selection mechanism is aware of the robot's immediate future intended motion and selects features that remain in the field of view longer given these intentions~\cite{carlone2018attention}.

In typical VIN pipelines, the primary objective is to estimate the state of the robot at each frame. Let~\(k\) denote the time frame for estimation, where our focus is on selecting a limited number of features. Let~\(x(k)\) represent the state of the robot at time~\(k\). For now, readers can consider~\(x(k)\) to include the pose and velocity of the robot at time instant~\(k\), as well as the IMU biases. We denote
\[
    \hat{x}(k:k+T) \triangleq
    \begin{bmatrix}
    \hat{x}^\top(k) & \hat{x}^\top(k+1) & \cdots & \hat{x}^\top(k+T)    
    \end{bmatrix}^\top,
\]
as the future state estimates within the horizon~\(T\). Moreover, we denote~\(P\) as the covariance matrix of our estimate~\(\hat{x}(k:k+T)\), and~\(\Omega \triangleq P^{-1}\) as the corresponding Fisher information matrix. The error variance for the MSE estimator~\(\hat{x}(k:k+T)\) can then be obtained by summing the variances of its individual scalar components, i.e.,
\begin{equation}\label{eq:mse}
    \mathbb{E} \, \big \|x(k:k+T) - \hat{x}(k:k+T) \big \|_2^2 = \tr \Omega^{-1},    
\end{equation}
where~\(x(k:k+T)\) is the stacked vector of actual values of the state vectors over the horizon~\(T\).

Let~\(\mathsf{U}\), with~\(|\mathsf{U}| = N\), denote the set of all available features at time frame~\(k\). Our goal is to select a limited number of~\(\kappa \ll N\) features that maximize
\begin{equation}\label{problem::main}
     \max_{\mathsf{S} \, \subset \, \mathsf{U}} \, f(\mathsf{S}) \quad \text{subject to} \quad |\mathsf{S}| \leq \kappa.
\end{equation}
with~\(f(\mathsf{S}) \triangleq \tr \Omega_{\emptyset}^{-1} - \tr \Omega_{\mathsf{S}}^{-1}\),~\(\Omega_{\mathsf{S}} \triangleq \Omega_{\emptyset} + \sum_{l \in \mathsf{S}} \Delta_l\), where~\(\Omega_{\emptyset}\) is the information matrix of the estimate when no features are selected, and~\(\Delta_l\) is the information matrix associated with the selection of the~\(l\)-th visual feature.

It is evident that the maximization problem~\eqref{problem::main} aims to minimize the MSE of the estimate as defined in~\eqref{eq:mse}. Furthermore, we incorporate the term~\(\tr \Omega_{\emptyset}^{-1}\) in the objective of~\eqref{problem::main} to ensure the normalization of the objective, such that it returns zero for empty sets.
It is important to note that the MSE-based objective $f$ is not necessarily submodular, as demonstrated by counterexamples in~\cite{olshevsky2017non}.
We note that similar criteria have been developed in~\cite{vafaee2024learning, vafaee2023exploring, hashemi2020randomized} for sparse sensing purposes.

At first glance, the problem definition in~\eqref{problem::main} may appear similar to the often-studied sensor selection problem for Kalman filtering, such as in~\cite{vafaee2023exploring},~\cite{hashemi2020randomized}, or~\cite{chamon2020approximate}. However, the contribution of each visual feature to the information matrix, i.e., the~\(\Delta_{l}\) matrices, as we will see, is PSD and typically of higher rank. In contrast, each sensor's contribution to the information matrix of Kalman state estimation is generally a rank-one matrix, often in the form of dyads. This key difference in the structure of the information matrices imposes challenges that preclude directly applying similar algorithms from sensor selection contexts to the visual feature selection problem.

\begin{remark}
    The authors of~\cite{buchbinder2024deterministic} have recently developed a significant advancement in submodular maximization under a matroid constraint, introducing a deterministic non-oblivious local search algorithm with an approximation guarantee of $1 - 1/e - \varepsilon$ (for any $\varepsilon > 0$). This approach effectively bridges the gap between deterministic and randomized algorithms for such problems. However, their method is specifically designed for submodular functions. Since our problem~\eqref{problem::main} involves the MSE measure, which is non-submodular, their deterministic algorithm cannot be directly applied to achieve the desired optimization in our context.
\end{remark}

In the subsequent two sections, we discuss how the integration of IMU and camera measurements can be utilized to determine the information matrices~\(\Omega_{\emptyset}\) and~\(\Delta_l\) at each time frame.

\subsection{IMU Model}
There is often a significant discrepancy in measurement frequencies, with the IMU operating at a much higher frequency than the camera.
Consequently, following the methodology outlined in~\cite{carlone2018attention}, we integrate a set of IMU measurements between two consecutive camera frames~\(k\) and~\(j\), treating this integrated measurement as a single measurement that constrains the states~\(x(k)\) and~\(x(j)\) using a linear measurement model.

Our simplified IMU model assumes that the accumulation of rotation error through gyroscope integration over time is negligible. Given this assumption of accurate rotation estimates from the gyroscope, we reduce the state variables to the robot's position, linear velocity, and accelerometer bias.

The on-board accelerometer measures the acceleration~\(a(k)\) of the sensor with respect to the inertial frame. This measurement is affected by additive white noise~\(\xi(k)\) and a slowly varying sensor bias~\(b(k)\). Therefore, the measurement~\(\tilde{a}(k) \in \mathbb{R}^3\) acquired by the accelerometer at time~\(k\) is given as
\begin{equation}\label{eq:IMU}
    \tilde{a}(k) = R^\top(k) \left(a(k) - g\right) + b(k) + \xi(k),
\end{equation}
where~\(g\) is the gravity vector expressed in the inertial frame, and~\(R(k)\) denotes the attitude of the robot at time~\(k\), assumed to be known from gyroscope integration over the horizon.

Using~\eqref{eq:IMU}, we can show that the vector of IMU measurements between frame~\(k\) and the future frame~\(j\) exhibits linear dependence on the stack of state vectors~\(x(k:k+T)\). By applying these linear models to all consecutive frames~\(k\) and~\(j\) within the horizon, we can deduce the information matrix for the estimate of~\(x(k:k+T)\) based on the IMU data.

Note that the information matrix obtained only from relative measurements lacks a constraint anchoring the information to a global reference, making it rank deficient. To address this deficiency, we integrate a prior on the state at time~\(k\) from the VIN back-end into the information matrix. This incorporation yields a Positive Definite (PD) information matrix~\(\Omega_{\emptyset} \in \mathbb{S}_{++}^{9T+9}\). This PD matrix is essential for our problem definition in~\eqref{problem::main}.

The derivation of the information matrix~\(\Omega_{\emptyset}\) follows the same process as in~\cite[III-B1]{carlone2018attention}. To avoid repetition, detailed derivations are omitted in this paper. For a comprehensive derivation, please refer to the relevant section of the cited paper.

\subsection{Vision Model}
We use a linearized version of the nonlinear perspective projection model as the vision model. In this approach, we represent a pixel measurement as a linear function of the unknown state we aim to estimate. A calibrated pixel measurement of an external 3-D point (or feature)~\(l\) provides the 3-D bearing of the visual feature in the camera frame.

Mathematically, let~\(u_l(k)\) be the unit vector corresponding to the (calibrated) pixel observation of feature~\(l\) from the robot's pose at time~\(k\). This unit vector~\(u_l(k)\) satisfies the following relation:
\begin{equation}\label{eq:vision}
    u_l(k) \times \left( \left(R^{\text{W}}_{\text{cam}}(k)\right)^\top \left(p_l - t^{\text{W}}_{\text{cam}}(k)\right)\right) = \mathbf{0}_3,
\end{equation}
where~\(\times\) denotes the cross product between two vectors,~\(p_l\) is the 3-D position of visual feature~\(l\) in the world frame, and~\(R^{\text{W}}_{\text{cam}}(k)\) and~\(t^{\text{W}}_{\text{cam}}(k)\) are the rotation and translation describing the camera pose at time~\(k\) with respect to the world frame.

Since our state at each time step~\(k\) includes the position, velocity of the robot in the IMU frame, and accelerometer bias, we reparametrize~\eqref{eq:vision} to incorporate these variables. This reparametrization uses the known extrinsic transformation between the camera and IMU obtained from calibration. It also enables us to derive a linear model for the calibrated feature pixel as a function of the state vector~\(x(k:k+T)\) and~\(p_l\), the unknown position of feature~\(l\).

In the context of the obtained linear measurement model, our objective is to determine the visibility of each feature~\(l\) at future time points over the horizon~\(T\). The key idea is that if the~\(l\)-th feature is expected to remain in view throughout the future horizon, it will provide more information compared to a feature that is anticipated to be quickly obscured. To achieve this, we project the~\(l\)-th feature onto the image plane of the robot's camera at future time instances. Ultimately, our task is to verify whether the feature~\(l\) will be visible in the forward predictions. Forward simulation and visibility checking are illustrated in Fig.~\ref{fig:visibility}.

\begin{figure}
\centering
\includegraphics[width=0.9\columnwidth]{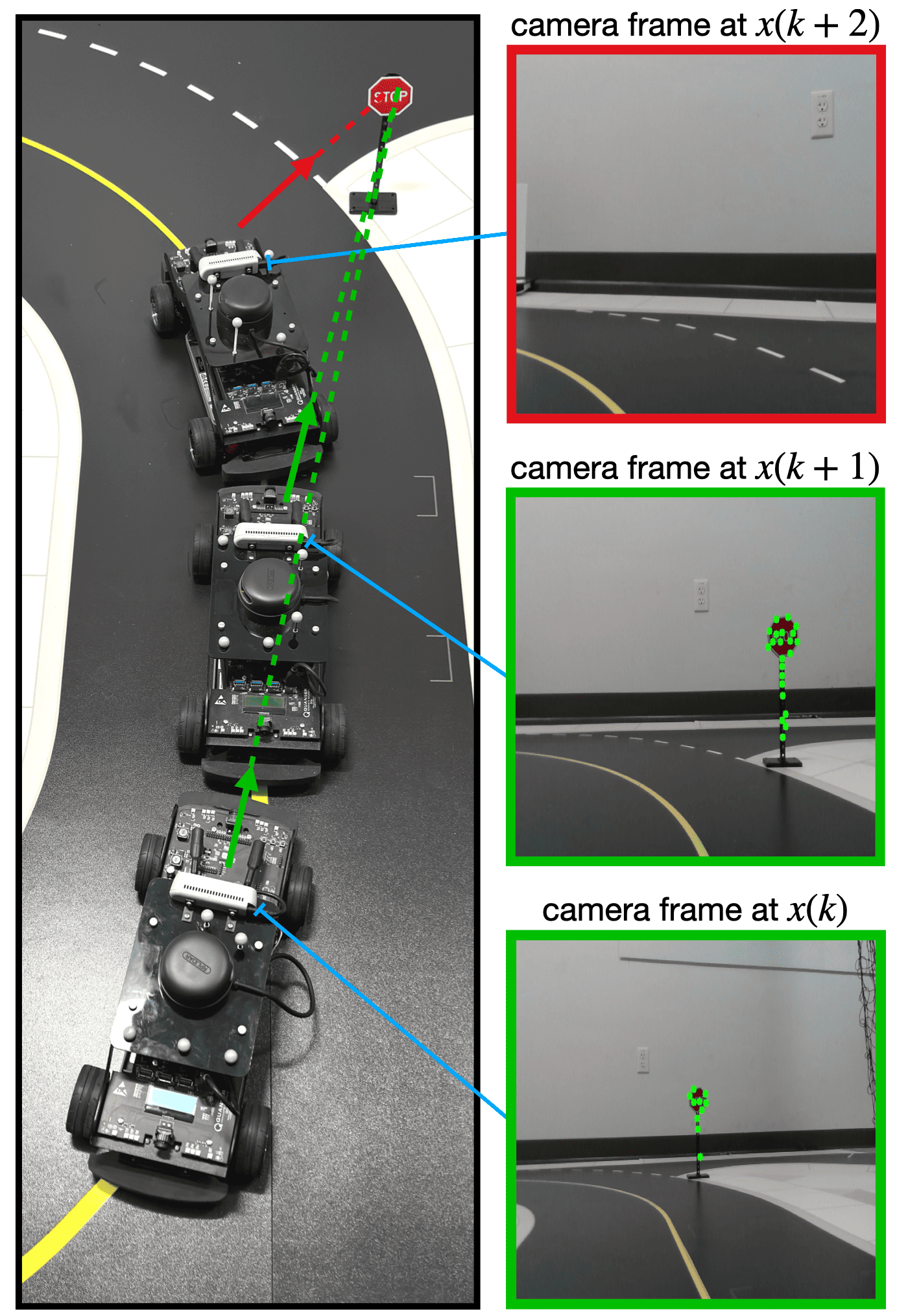} 
    \caption{Forward propagation of a feature’s bearing vector from a 3-D object (a stop sign) detected at frame~\(k\). Left: the robot’s positions and bearing vectors to the stop sign features at time steps~\(k+1\) and~\(k+2\). Right: the corresponding camera image planes at those time steps. Colored rectangles around each image indicate whether the feature is visible (green) or not (red). In this example, the predicted bearing at~\(k+2\) falls outside the camera’s field of view, causing the visibility check to fail. See Section~\ref{sec:experiments:QCar} for details.}
    \label{fig:visibility}
\end{figure}

We then stack the linear measurement model for each observation pose from which feature~\(l\) is visible. It is important to note that the stacked linear model cannot be directly used to estimate our state vector~\(x(k:k+T)\) because it includes the unknown position of feature, $p_l$. To address this challenge, we employ the Schur complement trick~\cite{carlone2014eliminating} to analytically eliminate the 3D point from the estimation process.
To achieve this, we first construct the information matrix for the joint state vector~\(\begin{bmatrix} x(k:k+T)^\top & p_{l}^\top \end{bmatrix}^\top\) using the stacked linear measurement model:
\begin{equation}
    \Omega^{(l)} = \begin{bmatrix}
        F_l^\top F_l & F_l^\top E_l\\
        E_l^\top F_l & E_l^\top E_l
    \end{bmatrix} \in \mathbb{R}^{(9(T+1)+3) \times (9(T+1) + 3)}.
\end{equation}

For detailed definitions of the matrices~\(F_l\) and~\(E_l\), we refer interested readers to~\cite{carlone2018attention}. By applying the Schur complement trick, we marginalize out the feature~\(l\) and obtain the information matrix for our state~\(x(k:k+T)\) given the measurements:
\begin{equation}\label{eq:delta_l}
    \Delta_l \triangleq F_l^\top F_l - F_l^\top E_l \left( E_l^\top E_l \right)^{-1} E_l^\top F_l \in \mathbb{R}^{9(T+1) \times 9(T+1)}.
\end{equation} 
This matrix~\(\Delta_l\) represents the additive contribution to the information matrix of our state estimate resulting from the measurements of a single visual landmark~\(l\). It is a PSD matrix. 

\section{Approximation Methods}
The combinatorial problem in~\eqref{problem::main} is NP-hard by reduction to the classical set cover problem; see, e.g.,~\cite{hashemi2020randomized, williamson2011design}, requiring an exhaustive search over all selections of~\(\kappa\) features to find the optimal solution. Therefore, in this section, we discuss various polynomial-time greedy-based algorithms to approximate its solution. We start with the more computationally expensive algorithms and then move on to the less demanding ones.

\begin{remark}
    The approximation algorithms proposed in this paper, along with their theoretical analyses and performance guarantees, can be extended to alternative set function objectives, such as the minimum eigenvalue or the log-determinant criterion discussed in~\cite{carlone2018attention}. In this work, however, we focus on the MSE objective due to its direct and interpretable connection to estimation accuracy, making it particularly suitable for the scenarios considered.
\end{remark}

\subsection{Greedy}
In this section, we analytically demonstrate why a simple greedy algorithm provides a reliable approximate solution to problem~\eqref{problem::main}. Our findings are based on recent literature on submodularity and submodular maximization~\cite{bian2017guarantees}.

As previously defined,~\(\mathsf{U}\) represents the ground set containing all extracted features, and~\(f(\cdot)\) denotes our set function as in~\eqref{problem::main}.
We define the marginal gain of a feature set~\(\mathsf{R} \subset \mathsf{U}\) in the context of a set~\(\mathsf{S} \subset \mathsf{U}\) as
\begin{equation}\label{eq:marginal}
    f_{\mathsf{R}}(\mathsf{S}) \triangleq f(\mathsf{R} \cup \mathsf{S}) - f(\mathsf{S}).    
\end{equation}
For future analysis, we may use the shorthand notation~\(l\) and~\(f_l(\cdot)\) for~\(\{l\}\) and~\(f_{\{l\}}(\cdot)\), respectively.

\begin{definition}
    A set function
    \begin{itemize}
        \item [-]~\( f : 2^{\mathsf{U}} \to \mathbb{R} \) is called \emph{submodular} if for all subsets~\( \mathsf{R} \subseteq \mathsf{S} \subseteq \mathsf{U} \) and all elements~\( l \notin \mathsf{S} \), it holds that
        \begin{equation}\label{eq:sub}
            f(\mathsf{R} \cup \{l\}) - f(\mathsf{R}) \geq f(\mathsf{S} \cup \{l\}) - f(\mathsf{S}).
        \end{equation}
        A set function is \emph{supermodular} if the reversed inequality in~\eqref{eq:sub} holds, and is \emph{modular} if~\eqref{eq:sub} holds with equality.

    \item [-] is monotone nondecreasing if for all~\( \mathsf{R} \subseteq \mathsf{U} \) and~\( l \in \mathsf{U} \), $f(\mathsf{R} \cup \{l\}) \geq f(\mathsf{R})$.

    \item [-] is normalized if~\( f(\emptyset) = 0 \).
    \end{itemize}
\end{definition}

Submodularity intuitively reflects the diminishing returns property: adding an element to a smaller set yields a greater benefit than adding it to a larger set. This principle is used to derive constant-factor approximation guarantees, specifically~\((1 - 1/e)\), for the greedy algorithm when applied to submodular maximization problems with cardinality constraints~\cite{nemhauser1978analysis}.

Algorithm~\ref{alg::greedy} operates by iteratively selecting the feature that provides the maximum marginal gain in reducing the estimation error, as defined by the objective function~\(f(\mathsf{S})\). At each iteration, the feature that contributes most to decreasing the MSE is added to the current subset, until the cardinality constraint~\(\kappa\) is met. However, since the set function~\(f(\cdot)\) is not submodular~\cite{bian2017guarantees}, a quantitative assessment of solutions using the simple greedy procedure (Algorithm~\ref{alg::greedy}) requires further elaboration. To address this, we need the following two technical definitions from~\cite{bian2017guarantees}.

\begin{algorithm}[tb]
    \small
    \caption{Simple Greedy Algorithm}
    \label{alg::greedy}
    \begin{algorithmic}[1]
    \STATE {\bfseries Input:} $\Omega_{\emptyset}$, $\Delta_l$ for all $l \in \mathsf{U}$, and $\kappa$
    \STATE {\bfseries Output:} subset $\mathsf{S} \subset \mathsf{U}$ with $|\mathsf{S}| = \kappa$
    \STATE $\mathsf{S} \gets \emptyset$ 
    \WHILE {$|\mathsf{S}| < \kappa$}
        \STATE $\mathsf{S} \gets \mathsf{S} \cup \{\arg \max_{l \in \mathsf{U} \setminus \mathsf{S}} \left[ f(\mathsf{S} \cup \{l\}) - f(\mathsf{S}) \right]\}$
    \ENDWHILE
    \end{algorithmic}
\end{algorithm}

\begin{definition}[Curvature, $\alpha$]\label{def:curv}
    The curvature of any nonnegative set function~\(f(\cdot)\) is the smallest scalar~\(\alpha \in \mathbb{R}_{\geq 0}\) that satisfies:
    \begin{equation}\label{eq:curve}
        f_{l}\left(\mathsf{S} \backslash \{l\} \cup \mathsf{R}\right) \geq (1 - \alpha) \cdot f_{l}(\mathsf{S} \backslash \{l\}),
    \end{equation}
    for all subsets~\(\mathsf{R}\) and~\(\mathsf{S}\) of~\(\mathsf{U}\), and for all~\(l \in \mathsf{S} \backslash \mathsf{R}\).
\end{definition}

For a nondecreasing set function~\(f(\cdot)\), the curvature~\(\alpha\) satisfies~\(\alpha \in [0, 1]\). The function~\(f(\cdot)\) is supermodular if and only if~\(\alpha = 0\).

\begin{definition}[Submodularity ratio, $\gamma$]\label{def:sub-ratio}
    The submodularity ratio of any nonnegative set function~\(f(\cdot)\) is the largest scalar~\(\gamma \in \mathbb{R}_{\geq 0}\) such that
    \begin{equation}\label{eq:submo}
        \sum_{l \in \mathsf{R} \backslash \mathsf{S}} f_{l}(\mathsf{S}) \geq \gamma \, f_{\mathsf{R}}(\mathsf{S}),
    \end{equation}
    for all~\(\mathsf{R}, \mathsf{S} \subseteq \mathsf{U}\).
\end{definition}

For a nondecreasing set function~\(f(\cdot)\), the submodularity condition holds if and only if~\(\gamma = 1\). However, in general,~\(\gamma\) lies within the interval~\([0, 1]\).

\begin{proposition}[Approximate non-submodular maximization~\cite{bian2017guarantees}]\label{lem:nonsub_bound}
    Let $f$ be a nonnegative, nondecreasing, normalized set function with submodularity ratio $\gamma \in [0, 1]$ and curvature $\alpha \in [0, 1]$. Then, Algorithm~\ref{alg::greedy}, when applied to problem~\eqref{problem::main}, provides the following approximation guarantee:
    \begin{equation}\label{eq:greedy_bound}
        f(\mathsf{S}_{\text{greedy}}) \geq \frac{1}{\alpha} \left(1 - e^{-\alpha \gamma}\right) f(\mathsf{S}_{\text{OPT}}),
    \end{equation}
    where $\mathsf{S}_{\text{greedy}}$ is the subset returned by Algorithm~\ref{alg::greedy} and $\mathsf{S}_{\text{OPT}}$ is the optimal solution to problem~\eqref{problem::main}.
\end{proposition}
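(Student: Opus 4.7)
The plan is to adapt the Nemhauser–Wolsey telescoping argument for greedy submodular maximization, substituting the standard diminishing-returns step with the two weaker relaxations encoded by the submodularity ratio $\gamma$ (Definition~\ref{def:sub-ratio}) and the curvature $\alpha$ (Definition~\ref{def:curv}). Write $\mathsf{S}_i$ for the greedy set after iteration $i$, so that $\mathsf{S}_0 = \emptyset$ and $\mathsf{S}_\kappa = \mathsf{S}_{\text{greedy}}$, and track the potential $\rho_i \triangleq f(\mathsf{S}_{\text{OPT}}) - \alpha\, f(\mathsf{S}_i)$. The target is the one-step contraction
\begin{equation*}
    \rho_{i+1} \,\leq\, \left(1 - \frac{\alpha\gamma}{\kappa}\right)\rho_i,
\end{equation*}
which, unrolled over $\kappa$ iterations together with $(1-\alpha\gamma/\kappa)^\kappa \leq e^{-\alpha\gamma}$ and the normalization $f(\mathsf{S}_0)=0$, rearranges to the claimed bound. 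The degenerate case $\alpha = 0$ is recovered as a limit, where the factor collapses to $\gamma$.

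First I would establish a greedy-averaging inequality: since Algorithm~\ref{alg::greedy} selects the element of maximum marginal gain across $\mathsf{U}\setminus \mathsf{S}_i$, and $|\mathsf{S}_{\text{OPT}}\setminus \mathsf{S}_i|\leq \kappa$,
\begin{equation*}
    f(\mathsf{S}_{i+1}) - f(\mathsf{S}_i) \,\geq\, \frac{1}{\kappa}\sum_{l \, \in \, \mathsf{S}_{\text{OPT}}\setminus \mathsf{S}_i} f_l(\mathsf{S}_i).
\end{equation*}
Specializing Definition~\ref{def:sub-ratio} with $\mathsf{R}=\mathsf{S}_{\text{OPT}}$ and $\mathsf{S}=\mathsf{S}_i$ then collapses this sum into a joint marginal, yielding
\begin{equation*}
    f(\mathsf{S}_{i+1}) - f(\mathsf{S}_i) \,\geq\, \frac{\gamma}{\kappa}\bigl[f(\mathsf{S}_{\text{OPT}}\cup \mathsf{S}_i) - f(\mathsf{S}_i)\bigr].
\end{equation*}

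The crux---and the step I expect to be the main obstacle---is to sharpen the trivial monotonicity inequality $f(\mathsf{S}_{\text{OPT}}\cup \mathsf{S}_i) - f(\mathsf{S}_i) \geq f(\mathsf{S}_{\text{OPT}}) - f(\mathsf{S}_i)$ into the curvature-aware bound
\begin{equation*}
    f(\mathsf{S}_{\text{OPT}}\cup \mathsf{S}_i) - f(\mathsf{S}_i) \,\geq\, f(\mathsf{S}_{\text{OPT}}) - \alpha\, f(\mathsf{S}_i),
\end{equation*}
which is exactly the slack required to preserve $\alpha$ in the final bound. The plan is to telescope $f(\mathsf{S}_{\text{OPT}}\cup \mathsf{S}_i) - f(\mathsf{S}_{\text{OPT}})$ over the elements of $\mathsf{S}_i = \{s_1,\ldots,s_i\}$ with prefixes $\mathsf{A}_{j-1} = \{s_1,\ldots,s_{j-1}\}$, observing that terms for $s_j \in \mathsf{S}_{\text{OPT}}$ vanish since $s_j \in \mathsf{S}_{\text{OPT}}\cup \mathsf{A}_{j-1}$. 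For each $s_j \in \mathsf{S}_i \setminus \mathsf{S}_{\text{OPT}}$, applying Definition~\ref{def:curv} with $\mathsf{S}=\mathsf{A}_j$, $l=s_j$, and $\mathsf{R}=\mathsf{S}_{\text{OPT}}\cup \mathsf{A}_{j-1}$ (admissible since $s_j \notin \mathsf{R}$) gives $f_{s_j}(\mathsf{S}_{\text{OPT}}\cup \mathsf{A}_{j-1}) \geq (1-\alpha)\, f_{s_j}(\mathsf{A}_{j-1})$. Summation, combined with the companion identity $\sum_{j} f_{s_j}(\mathsf{A}_{j-1}) = f(\mathsf{S}_i)$, produces the desired inequality. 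The delicate point is accounting for the overlap $\mathsf{S}_i \cap \mathsf{S}_{\text{OPT}}$: a naïve telescoping bounds only $(1-\alpha) f(\mathsf{S}_i \setminus \mathsf{S}_{\text{OPT}})$, so closing the remaining gap requires choosing the ordering carefully (overlap elements placed so their marginals are absorbed by the monotonicity of $f$) and invoking the universal form of Definition~\ref{def:curv} over the base set $\mathsf{S}\setminus\{l\}$ rather than only $\emptyset$.

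Chaining the three ingredients gives the per-step progress $f(\mathsf{S}_{i+1}) - f(\mathsf{S}_i) \geq (\gamma/\kappa)\bigl[f(\mathsf{S}_{\text{OPT}}) - \alpha\,f(\mathsf{S}_i)\bigr]$; multiplying by $\alpha$ is exactly the contraction on $\rho_i$. Unrolling the recurrence and isolating $f(\mathsf{S}_\kappa)$ delivers $f(\mathsf{S}_\kappa) \geq \frac{1}{\alpha}\bigl(1 - (1-\alpha\gamma/\kappa)^\kappa\bigr) f(\mathsf{S}_{\text{OPT}}) \geq \frac{1}{\alpha}(1-e^{-\alpha\gamma}) f(\mathsf{S}_{\text{OPT}})$, as claimed. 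The entire argument hinges on pinning down the curvature lemma in step three; once that lemma is in hand, the remaining bookkeeping is a direct mimic of the classical $(1-1/e)$ proof, with $\gamma$ replacing the submodularity factor and $\alpha$ converting $1-e^{-\gamma}$ into $(1-e^{-\alpha\gamma})/\alpha$.
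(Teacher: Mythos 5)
The paper does not actually prove this proposition; it is imported verbatim from~\cite{bian2017guarantees}, so there is no in-paper argument to compare against. Judged on its own terms, your skeleton is the standard Nemhauser--Wolsey adaptation, and the first two ingredients are sound: the greedy-averaging inequality and the specialization of Definition~\ref{def:sub-ratio} with $\mathsf{R}=\mathsf{S}_{\text{OPT}}$ are both correct, and the final unrolling of $\rho_{i+1}\le(1-\alpha\gamma/\kappa)\rho_i$ is routine.

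The gap is exactly where you flag it, and it is not closable by the device you propose. The curvature-aware inequality $f(\mathsf{S}_{\text{OPT}}\cup\mathsf{S}_i)-f(\mathsf{S}_i)\ge f(\mathsf{S}_{\text{OPT}})-\alpha f(\mathsf{S}_i)$ is \emph{false} in general: take $\mathsf{S}_i\subseteq\mathsf{S}_{\text{OPT}}$ with $f(\mathsf{S}_i)>0$; then the left side equals $f(\mathsf{S}_{\text{OPT}})-f(\mathsf{S}_i)$ and the claim forces $(\alpha-1)f(\mathsf{S}_i)\ge0$, contradicting $\alpha<1$. Writing $\mathsf{S}_i=\{s_1,\dots,s_i\}$ with prefixes $\mathsf{A}_{j-1}$ and gains $\rho_j=f_{s_j}(\mathsf{A}_{j-1})$, the telescoping you describe only yields
\begin{equation*}
  f(\mathsf{S}_{\text{OPT}}\cup\mathsf{S}_i)\;\ge\;f(\mathsf{S}_{\text{OPT}})+(1-\alpha)\sum_{j:\,s_j\notin\mathsf{S}_{\text{OPT}}}\rho_j ,
\end{equation*}
because the terms with $s_j\in\mathsf{S}_{\text{OPT}}$ vanish identically on the left while contributing $(1-\alpha)\rho_j>0$ on the right; no reordering or appeal to monotonicity recovers the missing $(1-\alpha)\sum_{j:\,s_j\in\mathsf{S}_{\text{OPT}}}\rho_j$. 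Consequently the clean contraction on $\rho_i=f(\mathsf{S}_{\text{OPT}})-\alpha f(\mathsf{S}_i)$ does not follow whenever greedy happens to pick optimal elements. A correct proof must carry the decomposition $f(\mathsf{S}_i)=\Phi_i+\Psi_i$ (gains from picks outside versus inside $\mathsf{S}_{\text{OPT}}$) through the recursion, tracking the potential $f(\mathsf{S}_{\text{OPT}})-\Psi_i-\alpha\Phi_i$ and exploiting that $|\mathsf{S}_{\text{OPT}}\setminus\mathsf{S}_i|\le\kappa-|\mathsf{S}_{\text{OPT}}\cap\mathsf{S}_i|$ in the averaging step, in the style of the Conforti--Cornu\'ejols curvature analysis that~\cite{bian2017guarantees} generalizes; note that the naive potential only delivers the weaker factor $1-e^{-\alpha\gamma}$ rather than $\tfrac{1}{\alpha}(1-e^{-\alpha\gamma})$ once the overlap terms are treated honestly. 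Until that bookkeeping is pinned down, the proof is incomplete.
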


To apply the results from Proposition~\ref{lem:nonsub_bound} to our feature selection problem, we must demonstrate that the objective function in~\eqref{problem::main} satisfies the conditions specified in the lemma. This verification is provided in Lemma~\ref{lem:one}.

Let $n = 9T + 9$ denote the dimension of the information matrices in the VIN problem. It is easy to see that $\Omega_{\mathsf{S}} = \Omega_{\emptyset} + \sum_{l \in \mathsf{S}} \Delta_{l}$ is a symmetric positive definite matrix. This allows for factorization as $V \, \diag 
\begin{bmatrix}
    \lambda_1(\Omega_{\mathsf{S}}) & \cdots & \lambda_{n}(\Omega_{\mathsf{S}})
\end{bmatrix}^\top \, V^\top$, according to the eigendecomposition. Consequently, one can derive~\(\tr \Omega_{\mathsf{S}}^{-1} = \sum_{i = 1}^{n} \frac{1}{\lambda_i(\Omega_{\mathsf{S}})}\).

\begin{lemma}\label{lem:one}
    The objective of~\eqref{problem::main} is nonnegative, monotone increasing, and normalized.
\end{lemma}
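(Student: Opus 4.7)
The plan is to verify the three claimed properties in the natural order: normalization, monotonicity, and then nonnegativity as a direct consequence. Throughout, I will use the fact that each $\Delta_l$ is PSD (established in the vision-model derivation culminating in~\eqref{eq:delta_l}) and that $\Omega_{\emptyset} \in \mathbb{S}_{++}^{n}$ (established in the IMU-model section after injecting the prior), so that $\Omega_{\mathsf{S}} = \Omega_{\emptyset} + \sum_{l \in \mathsf{S}} \Delta_l \in \mathbb{S}_{++}^{n}$ for every $\mathsf{S} \subseteq \mathsf{U}$. Normalization is immediate: by construction $f(\emptyset) = \tr \Omega_{\emptyset}^{-1} - \tr \Omega_{\emptyset}^{-1} = 0$.

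For monotonicity, I would pick an arbitrary $\mathsf{S} \subseteq \mathsf{U}$ and $l \in \mathsf{U} \setminus \mathsf{S}$ and show $f(\mathsf{S} \cup \{l\}) - f(\mathsf{S}) \geq 0$. Since the $\tr \Omega_{\emptyset}^{-1}$ terms cancel, this reduces to proving $\tr \Omega_{\mathsf{S} \cup \{l\}}^{-1} \leq \tr \Omega_{\mathsf{S}}^{-1}$. By the PSD assumption on $\Delta_l$ we have $\Omega_{\mathsf{S} \cup \{l\}} = \Omega_{\mathsf{S}} + \Delta_l \succeq \Omega_{\mathsf{S}} \succ 0$, and the classical operator monotonicity of the matrix inverse on $\mathbb{S}_{++}^{n}$ (i.e., $A \succeq B \succ 0 \Rightarrow B^{-1} \succeq A^{-1}$) yields $\Omega_{\mathsf{S}}^{-1} \succeq \Omega_{\mathsf{S} \cup \{l\}}^{-1}$. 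Taking traces preserves the inequality because the trace of a PSD matrix is nonnegative, giving the desired conclusion. Monotonicity over arbitrary supersets follows by induction on the elements added one at a time.

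Nonnegativity then comes for free: for any $\mathsf{S} \subseteq \mathsf{U}$, monotonicity applied along the chain $\emptyset \subseteq \mathsf{S}$ gives $f(\mathsf{S}) \geq f(\emptyset) = 0$. Alternatively, one can invoke the eigendecomposition identity $\tr \Omega_{\mathsf{S}}^{-1} = \sum_{i=1}^{n} 1/\lambda_i(\Omega_{\mathsf{S}})$ noted immediately before the lemma, combined with Weyl's inequality $\lambda_i(\Omega_{\mathsf{S}}) \geq \lambda_i(\Omega_{\emptyset})$ (since we are adding PSD matrices), to obtain $\tr \Omega_{\mathsf{S}}^{-1} \leq \tr \Omega_{\emptyset}^{-1}$ directly, which gives both monotonicity and nonnegativity in one stroke.

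No step is a genuine obstacle here; the only substantive ingredient is the operator monotonicity of the inverse (or, equivalently, Weyl's eigenvalue interlacing), which is standard. The main care needed is simply to cite the PSD/PD properties of $\Delta_l$ and $\Omega_{\emptyset}$ established earlier so that all inverses are well defined and the monotonicity argument applies globally, rather than only locally to a single marginal increment.
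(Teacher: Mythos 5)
Your proposal is correct and follows essentially the same route as the paper: the paper also verifies normalization directly, derives nonnegativity from the PD/PSD structure, and proves monotonicity by comparing $\tr \Omega_{\mathsf{S}}^{-1} = \sum_i 1/\lambda_i(\Omega_{\mathsf{S}})$ term by term using the fact that adding the PSD increment $\Delta_l$ cannot decrease any eigenvalue (which the paper attributes to ``Cauchy interlacing for singular values'' but is really Weyl's monotonicity principle, exactly the fact you cite in your alternative). Your primary phrasing via operator monotonicity of the inverse on $\mathbb{S}_{++}^{n}$ is an equivalent and equally standard packaging, and your use of $\geq$ rather than the paper's strict $>$ is in fact the more careful statement, since a degenerate $\Delta_l = 0$ would make the marginal gain vanish.
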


\begin{proof}
    The nonnegativity follows from $\Omega_{\emptyset}$ being positive definite and $\Delta_{l}$ for all $l$ being positive semidefinite. It is also straightforward to confirm that $f(\emptyset) = 0$, thus ensuring normalization. To demonstrate that the objective in~\eqref{problem::main} is monotone nondecreasing, we need to show that for all $l \in \mathsf{U} \setminus \mathsf{S}$, the inequality $f(\mathsf{S} \cup \{l\}) - f(\mathsf{S}) \geq 0$ holds. This is established as follows: $f(\mathsf{S} \cup \{l\}) - f(\mathsf{S}) =  \sum_{i=1}^{n}\big[\frac{1}{\lambda_i(\Omega_{\mathsf{S}})} - \frac{1}{\lambda_i(\Omega_{\mathsf{S} \cup \{l\}})}\big] > 0$, due to the \emph{Cauchy interlacing inequality} for singular values.
\end{proof}

Lemma~\ref{lem:one} confirms that the constant-factor approximation bound of Proposition~\ref{lem:nonsub_bound} applies to our VIN problem. The challenge, however, lies in the intractability of computing the submodularity ratio and curvature for a given set function due to the combinatorial number of constraints (of the order of $2^{2|\mathsf{U}|}$) in~\eqref{eq:curve} and~\eqref{eq:submo}, respectively. This difficulty is akin to the exhaustive search required for solving problem~\eqref{problem::main}. Nonetheless, establishing a positive lower bound on the submodularity ratio and an upper bound on the curvature for $f$ justifies the use of the greedy algorithm for problem~\eqref{problem::main} via Proposition~\ref{lem:nonsub_bound}. Our objective is to derive such bounds for $f$ corresponding to the non-submodular objective in~\eqref{problem::main}.

\begin{theorem}\label{thrm:one}
    The Greedy Algorithm~\ref{alg::greedy} provides the following approximation guarantee for the feature selection problem~\eqref{problem::main}:
    \begin{equation}\label{eq:theorem_one} 
        f(\mathsf{S}_{\text{greedy}}) \geq \frac{1}{\alpha} (1 - e^{-\alpha \gamma}) f(\mathsf{S}_{\text{OPT}}) \geq \frac{1}{\overline{\alpha}} \left(1 - e^{-\overline{\alpha}\underline{\gamma}}\right) f(\mathsf{S}_{\text{OPT}}),
    \end{equation}
    where $\overline{\alpha}$ and $\underline{\gamma}$ are the upper bound on the curvature and the lower bound on the submodularity ratio of the objective in~\eqref{problem::main}, respectively. These bounds can be obtained using the spectrum of the information matrices at each time frame of the VIN pipeline as follows:
    \begin{equation}\label{eq:mes_one}
        \overline{\alpha} = 1 - \underline{\gamma} \, = \, 1 - \frac{\delta \cdot \lambda_{\min}(\Omega_{\emptyset})}{\lambda_{\max}(\Omega_{\mathsf{U}}) \cdot (\lambda_{\max}(\Omega_{\mathsf{U}}) - \lambda_{\min}(\Omega_{\emptyset})},
    \end{equation}
    where $\Omega_{\emptyset}$ is the information matrix of the estimate when no features are selected, $\Omega_{\mathsf{U}}$ is the information matrix when all features are utilized, and $\delta$ is a positive number such that $\tr \Delta_l \geq \delta$ for all $l \in \mathsf{U}$.
\end{theorem}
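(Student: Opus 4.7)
The plan is to decouple~\eqref{eq:theorem_one} into its two inequalities and handle them independently. The left inequality is immediate: Lemma~\ref{lem:one} already supplies the hypotheses (nonnegativity, normalization, monotone nondecrease), so Proposition~\ref{lem:nonsub_bound} applied to $f$ gives $f(\mathsf{S}_{\text{greedy}})\geq\frac{1}{\alpha}(1-e^{-\alpha\gamma})\,f(\mathsf{S}_{\text{OPT}})$ for the exact (but unknown) values of $\alpha$ and $\gamma$ of $f$. The right inequality will then follow once I (i) verify a monotonicity property so that substituting upper/lower bounds for $\alpha,\gamma$ only weakens the guarantee, and (ii) derive the explicit spectral expressions in~\eqref{eq:mes_one}.

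For step (i), a short calculus check shows that $\varphi(\alpha,\gamma)\triangleq(1-e^{-\alpha\gamma})/\alpha$ is non-increasing in $\alpha\in(0,1]$ and non-decreasing in $\gamma\in[0,1]$: the first claim follows from $\alpha^{2}\partial_{\alpha}\varphi=e^{-\alpha\gamma}(1+\alpha\gamma)-1\leq 0$ (because $1+x\leq e^{x}$ for $x\geq 0$), the second is obvious. Hence $\alpha\leq\overline{\alpha}$ and $\gamma\geq\underline{\gamma}$ yield $\varphi(\alpha,\gamma)\geq\varphi(\overline{\alpha},\underline{\gamma})$, producing the second inequality.

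For step (ii), my starting point is the identity
\begin{equation*}
    \Omega_{\mathsf{S}}^{-1}-\Omega_{\mathsf{S}'}^{-1}=\Omega_{\mathsf{S}}^{-1}\bigl(\Omega_{\mathsf{S}'}-\Omega_{\mathsf{S}}\bigr)\Omega_{\mathsf{S}'}^{-1}\quad(\mathsf{S}\subseteq\mathsf{S}'),
\end{equation*}
which recasts the individual and joint marginal gains as
\begin{equation*}
    f_l(\mathsf{S})=\tr\bigl(\Omega_{\mathsf{S}}^{-1}\Delta_l\,\Omega_{\mathsf{S}\cup\{l\}}^{-1}\bigr),\qquad f_{\mathsf{R}}(\mathsf{S})=\sum_{l\in\mathsf{R}\setminus\mathsf{S}}\tr\bigl(\Omega_{\mathsf{S}}^{-1}\Delta_l\,\Omega_{\mathsf{R}\cup\mathsf{S}}^{-1}\bigr).
\end{equation*}
Because $\Omega_{\emptyset}\preceq\Omega_{\mathsf{S}}\preceq\Omega_{\mathsf{U}}$ for every $\mathsf{S}\subseteq\mathsf{U}$, the outer factors sit inside the sandwich $\lambda_{\max}(\Omega_{\mathsf{U}})^{-1}I\preceq\Omega_{\mathsf{S}}^{-1}\preceq\lambda_{\min}(\Omega_{\emptyset})^{-1}I$. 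Combining the lower half of the sandwich with $\tr(\Delta_l)\geq\delta$ produces a uniform lower bound on each $f_l(\mathsf{S})$ of order $\delta/\lambda_{\max}(\Omega_{\mathsf{U}})^{2}$. Passing to the eigenvalue-ordered form $f_{\mathsf{R}}(\mathsf{S})=\sum_i\bigl(1/\lambda_i(\Omega_{\mathsf{S}})-1/\lambda_i(\Omega_{\mathsf{R}\cup\mathsf{S}})\bigr)$, legitimate by Cauchy interlacing, and applying the upper half of the sandwich to every term produces the matching upper bound with the factor $(\lambda_{\max}(\Omega_{\mathsf{U}})-\lambda_{\min}(\Omega_{\emptyset}))/[\lambda_{\min}(\Omega_{\emptyset})\lambda_{\max}(\Omega_{\mathsf{U}})]$. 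Dividing these two estimates yields $\underline{\gamma}$. The curvature bound $\overline{\alpha}$ is derived by the same two envelopes applied to the ratio $f_l(\mathsf{B})/f_l(\mathsf{A})$ for nested $\mathsf{A}\subseteq\mathsf{B}$ and an arbitrary $l\notin\mathsf{B}$; because the identical extreme-eigenvalue pair appears in the numerator and denominator, the algebra collapses to $\overline{\alpha}=1-\underline{\gamma}$ by inspection.

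The delicate step I expect to be the principal obstacle is the single-feature lower bound $\tr(\Omega_{\mathsf{S}}^{-1}\Delta_l\,\Omega_{\mathsf{S}\cup\{l\}}^{-1})\geq\delta/\lambda_{\max}(\Omega_{\mathsf{U}})^{2}$, because the triple product inside is not symmetric, so the textbook rule $\tr(PQ)\geq\lambda_{\min}(P)\tr(Q)$ for PSD $P,Q$ does not apply directly. I plan to resolve this by writing each outer factor as $cI+P'$ with $c=\lambda_{\max}(\Omega_{\mathsf{U}})^{-1}$ and $P'\succeq 0$, expanding the product into four trace terms, and arguing that each is nonnegative: the constant--constant term already delivers the advertised $c^{2}\tr(\Delta_l)\geq c^{2}\delta$; the two mixed terms $c\tr(P'\Delta_l)$ reduce to traces of PSD--PSD products; and the remaining triple-PSD term $\tr(A'\Delta_l B')$ is nonnegative by the standard fact that $\tr(XYZ)\geq 0$ whenever $X,Y,Z\succeq 0$. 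Once this single estimate is established, the rest of the proof is bookkeeping with the two spectral envelopes.
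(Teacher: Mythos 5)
Your overall architecture is sound and matches the paper's: invoke Lemma~\ref{lem:one} plus Proposition~\ref{lem:nonsub_bound} for the first inequality, check monotonicity of $\varphi(\alpha,\gamma)=(1-e^{-\alpha\gamma})/\alpha$ (a step the paper leaves implicit, and your calculus check of it is correct), and then derive the spectral envelopes. However, the step you yourself flag as the principal obstacle is resolved by a false lemma. It is \emph{not} true that $\tr(XYZ)\geq 0$ whenever $X,Y,Z\succeq 0$; nonnegativity of the trace of a product of PSD matrices holds for two factors but fails for three. Concretely, take $X=\diag\begin{bmatrix}1 & \epsilon\end{bmatrix}^\top$, $Y=uu^\top$, $Z=vv^\top$ with $u=\tfrac12(1,\sqrt3)^\top$, $v=\tfrac12(-1,\sqrt3)^\top$; then $\tr(XYZ)=(u^\top v)(v^\top Xu)=\tfrac12\left(-\tfrac14+\tfrac{3\epsilon}{4}\right)<0$ for $\epsilon<1/3$. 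Since $\Omega_{\mathsf{S}}^{-1}$, $\Delta_l$, and $\Omega_{\mathsf{S}\cup\{l\}}^{-1}$ do not commute in general, your fourth term $\tr(A'\Delta_l B')$ has no sign guarantee, and the lower bound $f_l(\mathsf{S})\geq\delta/\lambda_{\max}^2(\Omega_{\mathsf{U}})$ does not follow from your expansion. Because both the numerator of the submodularity-ratio fraction and the numerator of the curvature fraction rest on this single-feature estimate, the gap propagates to both of the bounds in~\eqref{eq:mes_one}.

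The estimate itself is true, and the repair is available inside your own toolkit: the paper proves it by staying entirely in the eigenvalue-ordered form you already use for the denominator. Writing $f_l(\mathsf{S})=\sum_{i=1}^{n}\bigl[\lambda_i^{-1}(\Omega_{\mathsf{S}})-\lambda_i^{-1}(\Omega_{\mathsf{S}\cup\{l\}})\bigr]=\sum_{i=1}^{n}\frac{\lambda_i(\Omega_{\mathsf{S}\cup\{l\}})-\lambda_i(\Omega_{\mathsf{S}})}{\lambda_i(\Omega_{\mathsf{S}})\,\lambda_i(\Omega_{\mathsf{S}\cup\{l\}})}$, each numerator is nonnegative by Weyl monotonicity, each denominator is at most $\lambda_{\max}^2(\Omega_{\mathsf{U}})$, and the numerators telescope to $\tr\Delta_l\geq\delta$. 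This avoids the non-symmetric triple product entirely. (If you prefer a matrix identity, use the symmetric Woodbury form $\Omega_{\mathsf{S}}^{-1}-\Omega_{\mathsf{S}\cup\{l\}}^{-1}=\Omega_{\mathsf{S}}^{-1}\Delta_l^{1/2}(I+\Delta_l^{1/2}\Omega_{\mathsf{S}}^{-1}\Delta_l^{1/2})^{-1}\Delta_l^{1/2}\Omega_{\mathsf{S}}^{-1}$, whose trace is a product of two PSD factors; but the constants then come out differently from~\eqref{eq:mes_one}.) Separately, when you say you apply the upper envelope ``to every term'' of the denominator, make explicit that the eigenvalue-interlacing step must first reduce the $n$ terms to $|\mathsf{R}\setminus\mathsf{S}|$ surviving terms, as in the paper's inequality~(i); otherwise you pick up a spurious factor of $n$ and the ratio no longer collapses to $\underline{\gamma}$.
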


\begin{proof}
    See Appendix~\ref{sec:missing_proofs}.
\end{proof}

The authors of~\cite{carlone2018attention} utilize the concept of the submodularity ratio, as proposed by~\cite{das2011submodular}, to evaluate the effectiveness of a simple greedy heuristic in addressing a problem similar to problem~\eqref{problem::main}. However, determining the submodularity ratio for a given set function remains intractable due to the combinatorial number of constraints inherent in the definition presented by~\cite{das2011submodular}. In our results, as shown in Theorem~\ref{thrm:one}, we also incorporate the notion of curvature to derive an improved constant-factor approximation. To avoid the combinatorial search required for obtaining these quantities, we present a set of computable bounds on the submodularity ratio and curvature.

In our analysis, we establish the relationship $\overline{\alpha} = 1 - \underline{\gamma}$. This correlation arises from the conservative approach taken in formulating these bounds. However, it is important to emphasize that this connection is specific to the derived values and does not generalize to the broader context of submodularity ratio and curvature values. For instance, for a submodular function, the submodularity ratio ($\gamma$) is equal to 1. However, this does not imply that the curvature ($\alpha$) is necessarily $\alpha \leq 1 - 1 = 0$. Such an implication would hold true only if the function were also supermodular and, consequently, modular.
Despite the conservative nature of these bounds, their existence is promising and aligns with empirical observations regarding the effectiveness of the greedy algorithm in optimizing non-submodular set functions.

It is worth emphasizing that the values of curvature and submodularity ratio are not intrinsic properties of the set function alone, as they also depend on the choice of ground set and the budget parameter~\(\kappa\). In other words, even for a fixed function~\(f(\cdot)\), the computed values of~\(\alpha\) and~\(\gamma\) may vary depending on the context in which they are evaluated. This implies that the approximation guarantee given in Proposition~\ref{lem:nonsub_bound} is meaningful only after verifying that~\(\gamma > 0\) for the particular setup at hand. In practice, this can often be assessed numerically, as we demonstrate in our empirical results (e.g., Fig.~\ref{fig:bound}(a)), where a nonzero submodularity ratio validates the use of the greedy algorithm in our VIN pipeline.

In greedy Algorithm~\ref{alg::greedy}, we iteratively examine all feature candidates and find the one whose addition will enhance the estimation quality the most. This method requires $\mathcal{O}(\kappa N T^3)$ operations, implying that the complexity of computation grows quadratically with the number of available features if, in the worst-case, $\kappa = \mathcal{O}(N)$, and cubically with the length of the time horizon. Therefore, the greedy algorithm may be less practical in scenarios where the number of detected features or the length of the considered horizon is substantial.

After marginalizing out the 3-D position of landmark~$l$, the expression in~\eqref{eq:delta_l} can be rewritten in a structured, low-rank form:
\begin{equation}\label{eq:delta_decom}
    \Delta_l = F_l^{\top} Q_l F_l, \qquad Q_l \triangleq I - E_l \left(E_l^{\top}E_l\right)^{-1}E_l^{\top},
\end{equation}
where $F_l \in \mathbb{R}^{3n_\ell \times 9(T+1)}$ and $E_l \in \mathbb{R}^{3n_\ell \times 3}$. Here, $n_\ell$ denotes the \emph{number of frames in which landmark~$l$ is observed}, which is readily available via visibility checks (see~\cite{carlone2018attention} for details). This decomposition enables a computational shortcut: instead of recomputing the matrix inverse $(\Omega_{\mathsf{S}} + \Delta_l)^{-1}$ at each iteration, we maintain $\Omega_{\mathsf{S}}^{-1}$ and apply the Sherman–Morrison–Woodbury (SMW) identity. Since the update cost scales with $\operatorname{rank}(\Delta_l)^2 \cdot n$ rather than $n^3$, this yields a practical ``fast greedy'' algorithm with a per-iteration complexity of approximately $\mathcal{O}(\kappa N n_\ell^2 T)$, compared to the original $\mathcal{O}(\kappa N T^3)$. The next section details this efficient implementation.

\subsection{Fast Low-Rank Greedy}\label{subsec:fast_greedy}
Selecting the top~$\kappa$ landmarks from a set of $N$ candidates is appealing in theory, but quickly becomes computationally infeasible in practice. Even for a modest time horizon $T = 20$, the matrix dimension becomes $n = 9(T+1) = 189$, and inverting a dense $189 \times 189$ matrix for each candidate landmark at every greedy step entails a cost of $\mathcal{O}(n^3) \approx 7 \times 10^6$ flops. This makes real-time decision-making impractical when $N$ is large.

To overcome this bottleneck, we leverage the low-rank structure of each landmark update $\Delta_l$ derived in~\eqref{eq:delta_decom}. By applying the SMW identity and maintaining the inverse $\Omega_{\mathsf{S}}^{-1}$ throughout the algorithm, each matrix inverse can be updated efficiently with complexity $\mathcal{O}(n \cdot n_\ell^2)$, where $n_\ell$ is the number of frames in which landmark $l$ is observed.

The efficient update for a candidate $l \notin \mathsf{S}$ proceeds by computing
\begin{equation}\label{eq:SMW}
    (\Omega_{\mathsf{S}} + \Delta_l)^{-1} = \Omega_{\mathsf{S}}^{-1} - \Omega_{\mathsf{S}}^{-1} F_l^{\top} \left(Q_l^{-1} + F_l \Omega_{\mathsf{S}}^{-1} F_l^{\top} \right)^{-1} F_l \Omega_{\mathsf{S}}^{-1},
\end{equation}
where $Q_l = I - E_l (E_l^{\top} E_l)^{-1} E_l^{\top}$. Substituting~\eqref{eq:SMW} into the objective function, the marginal information gain $f(\mathsf{S} \cup \{l\}) - f(\mathsf{S})$ becomes 
\begin{equation}\label{eq:marginal_info}
    \tr \left( \Omega_{\mathsf{S}}^{-1} F_l^{\top} \left(Q_l^{-1} + F_l \Omega_{\mathsf{S}}^{-1} F_l^{\top} \right)^{-1} F_l \Omega_{\mathsf{S}}^{-1} \right),
\end{equation}
offering a substantial reduction in runtime compared to full inversion.

\begin{algorithm}[tb]
    \small
    \caption{Fast Low-Rank Greedy}
    \label{alg::fast_greedy}
    \begin{algorithmic}[1]
        \STATE {\bfseries Input:} $\Omega_{\emptyset}$, $\{F_l, Q_l\}$ for all $l \in \mathsf{U}$, and $\kappa$
        \STATE {\bfseries Output:} subset $\mathsf{S} \subset \mathsf{U}$ with $|\mathsf{S}| = \kappa$, and $\Omega_{\mathsf{S}}^{-1}$
        \STATE $\mathsf{S} \gets \emptyset$, \quad $\Omega_{\mathsf{S}}^{-1} \gets \Omega_{\emptyset}^{-1}$
        \WHILE{$|\mathsf{S}| < \kappa$}
            \FOR{each $l \in \mathsf{U} \setminus \mathsf{S}$}
                \STATE $W_l \gets Q_l^{-1} + F_l \Omega_{\mathsf{S}}^{-1} F_l^{\top}$
                \STATE $f_l(\mathsf{S}) \gets \tr \left( \Omega_{\mathsf{S}}^{-1} F_l^{\top} W_l^{-1} F_l \Omega_{\mathsf{S}}^{-1} \right)$
            \ENDFOR
            \STATE $j \gets \arg \max_{l \in \mathsf{U} \setminus \mathsf{S}} f_l(\mathsf{S})$
            \STATE $\mathsf{S} \gets \mathsf{S} \cup \{j\}$
            \STATE $\Omega_{\mathsf{S}}^{-1} \gets \Omega_{\mathsf{S}}^{-1} - \Omega_{\mathsf{S}}^{-1} F_j^{\top} W_j^{-1} F_j \Omega_{\mathsf{S}}^{-1}$ \hfill\(\triangleright\) SMW update
        \ENDWHILE
    \end{algorithmic}
\end{algorithm}

Algorithm~\ref{alg::fast_greedy} presents the full greedy selection method based on the SMW-based update~\eqref{eq:SMW} and the marginal gain computation~\eqref{eq:marginal_info}. This version of the greedy algorithm maintains the same theoretical guarantees as the classic method but reduces each iteration’s computational cost from $\mathcal{O}(N n^3)$ to $\mathcal{O}(N n n_\ell^2)$.

Although this method significantly reduces the cubic dependence on $T$, the overall complexity still scales linearly with $\kappa$, the number of selected landmarks. To further enhance efficiency, we draw inspiration from the randomized greedy strategy introduced in~\cite{hashemi2020randomized}. This approach samples a random subset of candidates at each iteration, which reduces the evaluation cost without severely degrading performance. While it is possible to apply the low-rank SMW update within this randomized framework, we opt not to combine the two here in order to keep the conceptual contributions clearly delineated.

\subsection{Randomized Greedy}\label{sec:randomized}
The complexity associated with the greedy algorithm for feature selection can become prohibitive. To address this, we have devised a computationally efficient randomized greedy algorithm, inspired by the techniques introduced in~\cite{hashemi2020randomized,mirzasoleiman2015lazier}. This algorithm, detailed in Algorithm~\ref{alg::random}, is accompanied by a comprehensive explanation of its performance guarantees in this section.

Algorithm~\ref{alg::random} performs the task of feature selection as follows. At each iteration, a subset $\mathsf{V}$ of size $r$ is sampled uniformly at random and without replacement from the set of available features at the given time frame. The marginal gain provided by each of these $r$ features to the objective function $f(\cdot)$ in~\eqref{problem::main} is computed, and the feature yielding the highest marginal gain is added to the set of selected features. This procedure is repeated $\kappa$ times.

The parameter $\epsilon \in [e^{-\kappa}, 1]$ in Algorithm~\ref{alg::random}, which determines the size of the uniformly sampled features at each iteration, represents a predefined constant selected to achieve a desired trade-off between performance and complexity. When $\epsilon = e^{-\kappa}$, each iteration involves all non-selected features in $\mathsf{U}$, reducing Algorithm~\ref{alg::random} to the simple greedy scheme of Algorithm~\ref{alg::greedy}. Conversely, as $\epsilon$ approaches 1, the size of $\mathsf{V}$ and, consequently, the overall computational complexity decrease. However, as we will see, this reduction in complexity comes at the cost of decreased performance.

\begin{definition}\label{def:elementwise}
    The element-wise curvature $\alpha_i$ of a monotone nondecreasing function $f$ is defined as $\alpha_i \triangleq \underset{(\mathsf{S}, \mathsf{R}, l) \in \mathsf{\Pi}_i}{\max} \frac{f_l(\mathsf{R})}{f_l(\mathsf{S})}$, where $\mathsf{\Pi}_i =  \{(\mathsf{S}, \mathsf{R}, l)~|~ \mathsf{S} \subset \mathsf{R} \subset \mathsf{U}, l \in \mathsf{U} \setminus \mathsf{R}, |\mathsf{R} \setminus \mathsf{S}| = i, |\mathsf{U}| = N \}$.
    The maximum element-wise curvature is denoted by $\alpha_{\max} = \max_{i} \alpha_i$.
\end{definition}

A set function is submodular if and only if $\alpha_{\max} \leq 1$. In the following theoretical analysis, we evaluate Algorithm~\ref{alg::random} and establish a bound on its performance when used to approximate a solution to problem~\eqref{problem::main}.

\begin{theorem}\label{thrm:two}
    Let $\alpha_{\max}$ be the maximum element-wise curvature of the VIN performance measure~\eqref{problem::main}, $\mathsf{S}_{\text{rand}}$ denote the subset of features selected by Algorithm~\ref{alg::random}, and $\mathsf{S}_{\text{OPT}}$ be the optimal solution to Problem~\eqref{problem::main}. Then, on expectation, $f(\mathsf{S}_{\text{rand}})$ is a constant multiplicative factor away from $f(\mathsf{S}_{\text{OPT}})$, specifically,
    \begin{equation}\label{eq:therm2}
        \mathbb{E}[f(\mathsf{S}_{\text{rand}})] \geq \left(1 - e^{-1/c} - \frac{\epsilon^{\eta}}{c} \right) f(\mathsf{S}_{\text{OPT}}),
    \end{equation}
    where $c = \max\{\alpha_{\max}, 1\}$, $\epsilon \in [e^{-\kappa}, 1]$, and $\eta = 1 + \max\big\{0, \frac{r}{2N} - \frac{1}{2(N-r)}\big\}$.
\end{theorem}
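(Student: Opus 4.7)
My plan is to follow the framework of~\cite{mirzasoleiman2015lazier,hashemi2020randomized} for analyzing randomized greedy, replacing the usual submodular diminishing-returns step with an inequality derived from the element-wise curvature of Definition~\ref{def:elementwise}. Throughout, let $\mathsf{S}_i$ denote the selected set after $i$ iterations, let $\mathsf{O}_i = \mathsf{S}_{\text{OPT}} \setminus \mathsf{S}_i$, and let $k_i = |\mathsf{O}_i| \leq \kappa$. The recursion I want to reach has the form $\mathbb{E}[f(\mathsf{S}_{\text{OPT}}) - f(\mathsf{S}_{i+1})] \leq (1 - (1-\epsilon^{\eta})/(c\kappa)) \, \mathbb{E}[f(\mathsf{S}_{\text{OPT}}) - f(\mathsf{S}_i)]$, after which the standard $(1 - x/\kappa)^\kappa \leq e^{-x}$ and a first-order linearization finish the bound.

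First I would telescope $f(\mathsf{S}_i \cup \mathsf{S}_{\text{OPT}}) - f(\mathsf{S}_i)$ along an arbitrary ordering of $\mathsf{O}_i$; applying Definition~\ref{def:elementwise} to each incremental term and invoking monotonicity $f(\mathsf{S}_{\text{OPT}}) \leq f(\mathsf{S}_i \cup \mathsf{S}_{\text{OPT}})$ produces the non-submodular residual bound
$$\sum_{l \in \mathsf{O}_i} f_l(\mathsf{S}_i) \;\geq\; \frac{1}{c}\bigl[f(\mathsf{S}_{\text{OPT}}) - f(\mathsf{S}_i)\bigr],$$
where the cutoff $c = \max\{\alpha_{\max}, 1\}$ handles both the submodular and genuinely non-submodular regimes uniformly. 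Next I would control the expected marginal gain of the random pick $j^*$: conditioning on the hit event $\mathsf{V} \cap \mathsf{O}_i \neq \emptyset$, using $f_{j^*}(\mathsf{S}_i) \geq \max_{l \in \mathsf{V} \cap \mathsf{O}_i} f_l(\mathsf{S}_i) \geq \mathrm{avg}_{l \in \mathsf{V} \cap \mathsf{O}_i} f_l(\mathsf{S}_i)$, and then exploiting a symmetry argument (given the intersection size, every element of $\mathsf{O}_i$ lies in $\mathsf{V} \cap \mathsf{O}_i$ with the same probability) to equate the conditional expected average with the global average $(1/k_i)\sum_{l \in \mathsf{O}_i} f_l(\mathsf{S}_i)$, yields
$$\mathbb{E}[f_{j^*}(\mathsf{S}_i)] \;\geq\; \Pr(\mathrm{hit}) \cdot \frac{1}{k_i}\sum_{l \in \mathsf{O}_i} f_l(\mathsf{S}_i).$$
Chaining the two inequalities and using $k_i \leq \kappa$ delivers the target recursion, and once $\Pr(\mathrm{hit}) \geq 1 - \epsilon^{\eta}$ is in hand, the closing linearization $e^{-(1-\epsilon^{\eta})/c} \leq e^{-1/c}(1 + \epsilon^{\eta}/c) \leq e^{-1/c} + \epsilon^{\eta}/c$ (using $e^{-1/c} \leq 1$) completes the proof.

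The hard part will be extracting the sharpened exponent $\eta$ from the hypergeometric miss probability $\prod_{j=0}^{r-1}(1 - k_i/(N-j))$. A naive $(1 - k_i/N)^r \leq \epsilon$ estimate (valid for sampling with replacement) delivers only $\eta = 1$. To obtain the stronger $\epsilon^{\eta}$ I would take logs and invoke the Euler–Maclaurin expansion $\sum_{j=0}^{r-1} 1/(N-j) = \ln(N/(N-r)) + 1/(2N) - 1/(2(N-r)) + O(N^{-2})$ together with the quadratic Taylor expansion of $\ln(1 - k_i/(N-j))$. Under the design choice $r = (N/\kappa)\ln(1/\epsilon)$, carrying the second-order correction through the exponent produces the exponent $1 + r/(2N) - 1/(2(N-r))$. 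Because this correction can flip sign for very small or very large $r$, the definition $\eta = 1 + \max\{0, r/(2N) - 1/(2(N-r))\}$ caps it from below at unity, ensuring a valid lower bound on the hit probability across all regimes. The remaining algebra after this probabilistic refinement is the standard geometric-series unrolling described above.
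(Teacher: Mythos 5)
The paper does not actually write out a proof of Theorem~\ref{thrm:two}; it states only that the result is a straightforward variation of the proof of Theorem~2 in~\cite{hashemi2020randomized}. Your reconstruction follows exactly that framework (the lazier-than-lazy-greedy recursion of~\cite{mirzasoleiman2015lazier}, with the diminishing-returns step replaced by the element-wise-curvature telescoping that yields $\sum_{l\in\mathsf{O}_i} f_l(\mathsf{S}_i)\ge \frac{1}{c}\bigl[f(\mathsf{S}_{\text{OPT}})-f(\mathsf{S}_i)\bigr]$), so in approach you are aligned with what the paper points to. The skeleton is sound, but two of your closing steps are wrong as written and need repair.

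First, the claim $\Pr(\mathrm{hit})\ge 1-\epsilon^{\eta}$ is false when $k_i=|\mathsf{O}_i|<\kappa$: the miss probability is roughly $\epsilon^{\eta k_i/\kappa}$, which for $k_i=1$ and large $\kappa$ is close to $1$, so the hit probability can be far below $1-\epsilon^{\eta}$. You cannot invoke that bound and $k_i\le\kappa$ as two independent facts; the correct argument couples them through the ratio, using that $x\mapsto (1-\epsilon^{\eta x/\kappa})/x$ is decreasing so that $\Pr(\mathrm{hit})/k_i\ge(1-\epsilon^{\eta})/\kappa$ uniformly in $k_i$. That is the step that actually delivers the factor $(1-\epsilon^{\eta})/(c\kappa)$ in the recursion. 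Second, your final linearization uses $e^{\epsilon^{\eta}/c}\le 1+\epsilon^{\eta}/c$, which is the reverse of the true inequality $e^{x}\ge 1+x$. The endpoint you want, $e^{-(1-\epsilon^{\eta})/c}\le e^{-1/c}+\epsilon^{\eta}/c$, is nevertheless correct; it follows, e.g., from $e^{t}-1\le t\,e^{t}$ with $t=\epsilon^{\eta}/c$, giving $e^{-1/c}(e^{t}-1)\le t\,e^{(\epsilon^{\eta}-1)/c}\le t$ since $\epsilon^{\eta}\le 1$. Both fixes are local and standard, but as written these two steps would not survive scrutiny.
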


\begin{proof}
    The proof is a straightforward variation of the proof of~\cite[Theorem 2]{hashemi2020randomized} and is not repeated here.
\end{proof}

Theorem~\ref{thrm:two} demonstrates that Algorithm~\ref{alg::random} identifies a subset of visual features that, on average, achieves a VIN objective within a constant multiplicative factor of the objective obtained by the optimal set.

\begin{algorithm}[tb]
    \small
    \caption{Randomized Greedy Algorithm}
    \label{alg::random}
    \begin{algorithmic}[1]
    \STATE {\bfseries Input:} $\Omega_{\emptyset}$, $\Delta_l$ for all $l \in \mathsf{U}$, $\kappa$, and $\epsilon \in [e^{-\kappa}, 1]$
    \STATE {\bfseries Output:} subset $\mathsf{S} \subset \mathsf{U}$ with $|\mathsf{S}| = \kappa$
    \STATE $\mathsf{S} \gets \emptyset$ 
    \FOR {$j = 1$ to $\kappa$}
        \STATE sample subset $\mathsf{V}$ with $r = \frac{N}{\kappa} \log \left(\frac{1}{\epsilon}\right)$ features uniformly from $\mathsf{U} \setminus \mathsf{S}$
        \STATE $\mathsf{S} \gets \mathsf{S} \cup \left\{\arg \max_{l \in \mathsf{V}} \left[ f(\mathsf{S} \cup \{l\}) - f(\mathsf{S}) \right] \right\}$
    \ENDFOR
    \end{algorithmic}
\end{algorithm}

Drawing from the classical analysis in~\cite{nemhauser1978analysis}, we can establish that the approximation factor for the greedy algorithm is $1 - e^{-1/c}$. Thus, the term $\epsilon^{\eta}/c$ in the multiplicative factor in~\eqref{eq:therm2} quantifies the gap between the performance of the proposed randomized greedy algorithm and the traditional greedy approach of Algorithm~\ref{alg::greedy}.

Notice that the multiplicative factor in~\eqref{eq:therm2} decreases with both $c$ and $\epsilon$. While $c$ is entirely dependent on the objective, reducing $\epsilon$ increases the multiplicative factor, leading to a better approximation. However, reducing $\epsilon$ also means approaching the conventional greedy algorithm, thereby increasing complexity.

Observe that line 6 in Algorithm~\ref{alg::random} has a computational cost of $\mathcal{O}(\frac{N}{\kappa} T^3 \log(\frac{1}{\epsilon}))$, due to the requirement to evaluate $\frac{N}{\kappa} \log(\frac{1}{\epsilon})$ marginal gains, each necessitating $\mathcal{O}(T^3)$ operations. With $\kappa$ such iterations, the overall computational complexity of Algorithm~\ref{alg::random} is $\mathcal{O}(NT^3 \log(1/\epsilon))$. Here, $N$ represents the total number of extracted features at the current time frame, and $T$ denotes the forward horizon estimation. Consequently, this approach results in a significant complexity reduction, by a factor of $\kappa / \log(1/\epsilon)$. This efficiency gain is particularly beneficial in practical applications with a large number of detected features and extended forward horizons.

\subsection{Linearization-based Greedy}
This approach relies on a first-order (linear) approximation of the MSE objective~\eqref{eq:mse}.  
The approximation is valid when the Frobenius norms of the candidate information increments~\(\Delta_{l}\) (\(l\in\mathsf U\)) are sufficiently small relative to the baseline matrix~\(\Omega_{\emptyset}\).  
The next result derives the Taylor expansion of~\(\rho\) by evaluating its directional derivative.

\begin{lemma}[First-order Taylor expansion of $\rho$]
    Let $\rho : \mathbb{S}_{++}^{n} \to \mathbb{R}$ be defined by $\rho(A) \;=\; \tr\bigl(A^{-1}\bigr)$, i.e., the mean-squared-error objective in~\eqref{eq:mse}.  
    Fix $\Omega_{\emptyset} \in \mathbb{S}_{++}^{n}$ and define $\Delta_{\mathsf{S}} \;\triangleq\; \sum_{l \in \mathsf{S}} \Delta_{l}$ with $\Delta_{l} \in \mathbb{S}_{+}^{n}$.

    If $\rho$ is differentiable at $\Omega_{\emptyset}$, then for any
    sufficiently small $\epsilon > 0$,
    \begin{equation}\label{eq:Taylor}
        \rho\bigl(\Omega_{\emptyset} + \epsilon \, \Delta_{\mathsf{S}}\bigr) \;=\; \rho\bigl(\Omega_{\emptyset}\bigr) \;+\; \epsilon\,\tr\bigl(\nabla \rho(\Omega_{\emptyset}) \, \Delta_{\mathsf{S}}\bigr) \;+\; \mathcal{O}\bigl(\epsilon^{2}\bigr).
    \end{equation}
\end{lemma}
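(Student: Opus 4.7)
The overall strategy is to expand the matrix inverse via the Neumann series, take its trace, and isolate the $\mathcal{O}(\epsilon)$ term. The key identity is the first-order expansion of $(\Omega_{\emptyset} + \epsilon \Delta_{\mathsf{S}})^{-1}$, which, together with the cyclic property of the trace and the standard Fréchet derivative of the matrix inverse, delivers both the linear coefficient and an explicit bound on the remainder.

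The first step is to factor $\Omega_{\emptyset} + \epsilon \Delta_{\mathsf{S}} = \Omega_{\emptyset}\bigl(I + \epsilon \, \Omega_{\emptyset}^{-1} \Delta_{\mathsf{S}}\bigr)$ and argue that whenever $\epsilon < \|\Omega_{\emptyset}^{-1} \Delta_{\mathsf{S}}\|_2^{-1}$, which is the concrete meaning of ``sufficiently small'' here, the Neumann series $\bigl(I + \epsilon \, \Omega_{\emptyset}^{-1} \Delta_{\mathsf{S}}\bigr)^{-1} = \sum_{k \geq 0} (-\epsilon)^{k} (\Omega_{\emptyset}^{-1} \Delta_{\mathsf{S}})^{k}$ converges absolutely in spectral norm. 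Multiplying on the right by $\Omega_{\emptyset}^{-1}$ produces $(\Omega_{\emptyset} + \epsilon \Delta_{\mathsf{S}})^{-1} = \Omega_{\emptyset}^{-1} - \epsilon \, \Omega_{\emptyset}^{-1} \Delta_{\mathsf{S}} \Omega_{\emptyset}^{-1} + \sum_{k \geq 2} (-\epsilon)^{k} \Omega_{\emptyset}^{-1} (\Delta_{\mathsf{S}} \Omega_{\emptyset}^{-1})^{k}$, with convergence in the Frobenius (hence trace) norm on the stated range of $\epsilon$.

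The second step is to take the trace term-by-term, which is legitimate by the operator-norm convergence just noted. By cyclicity, the first-order contribution is $-\epsilon \, \tr\bigl(\Omega_{\emptyset}^{-2} \Delta_{\mathsf{S}}\bigr)$, and this matches $\epsilon \, \tr\bigl(\nabla \rho(\Omega_{\emptyset}) \Delta_{\mathsf{S}}\bigr)$ once one identifies the Fréchet derivative $\nabla \rho(\Omega_{\emptyset}) = -\Omega_{\emptyset}^{-2}$, a standard computation obtained by differentiating $A A^{-1} = I$ and inserting the result into $\rho(A) = \tr A^{-1}$. The zeroth-order term is $\rho(\Omega_{\emptyset})$ by inspection.

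The third step is to bound the tail. Using submultiplicativity of the spectral norm, the elementary inequality $|\tr(M)| \leq n \|M\|_2$, and summation of the geometric series, the remainder is bounded by $n \, \epsilon^{2} \|\Omega_{\emptyset}^{-1}\|_2 \|\Omega_{\emptyset}^{-1} \Delta_{\mathsf{S}}\|_2^{2} \big/ \bigl(1 - \epsilon \|\Omega_{\emptyset}^{-1} \Delta_{\mathsf{S}}\|_2\bigr)$, which is $\mathcal{O}(\epsilon^{2})$ as $\epsilon \to 0^{+}$. The only non-mechanical step is justifying the exchange of trace and infinite sum so the term-by-term identification is valid; this follows immediately from absolute convergence of the partial sums in the trace norm, so there is no real obstacle, and the rest of the argument is textbook matrix calculus.
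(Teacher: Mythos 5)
Your proof is correct, but it takes a more constructive route than the paper does for this particular lemma. The paper's own proof is essentially two lines of abstract reasoning: it invokes the differentiability of $\rho$ at $\Omega_{\emptyset}$ (established separately via spectral-function theory in Lemma~\ref{lem:spec_diff} and Proposition~\ref{prop:differentiable}) to assert the existence of a first-order expansion with an $\mathcal{O}(\epsilon^{2})$ remainder, and then rewrites the directional derivative $D\rho(\Omega_{\emptyset})[\Delta_{\mathsf S}]$ as $\tr\bigl(\nabla\rho(\Omega_{\emptyset})\,\Delta_{\mathsf S}\bigr)$ using the trace inner product on $\mathbb{S}^{n}$, without ever computing $\nabla\rho$ explicitly. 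You instead derive everything from the Neumann series of $\bigl(I+\epsilon\,\Omega_{\emptyset}^{-1}\Delta_{\mathsf S}\bigr)^{-1}$, which is precisely the technique the paper defers to the proof of Proposition~\ref{prop:quadratic_error}; in effect you have folded that later argument into the lemma. What your approach buys is concreteness: an explicit threshold $\epsilon<\|\Omega_{\emptyset}^{-1}\Delta_{\mathsf S}\|_{2}^{-1}$ giving meaning to ``sufficiently small,'' the closed form $\nabla\rho(\Omega_{\emptyset})=-\Omega_{\emptyset}^{-2}$, and a quantitative remainder bound $n\,\epsilon^{2}\|\Omega_{\emptyset}^{-1}\|_{2}\|\Omega_{\emptyset}^{-1}\Delta_{\mathsf S}\|_{2}^{2}/\bigl(1-\epsilon\|\Omega_{\emptyset}^{-1}\Delta_{\mathsf S}\|_{2}\bigr)$ --- and it makes the separate differentiability machinery unnecessary for this statement. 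What the paper's route buys is generality: the same two-line argument applies verbatim to any differentiable spectral objective (e.g., $\log\det$ or $\lambda_{\min}$), which is why the authors keep the abstract formulation here and reserve the series expansion for the error analysis. All the steps you flag as needing justification (term-by-term trace, interchange with the infinite sum) are indeed immediate in finite dimensions, so there is no gap.
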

\begin{proof}
    The differentiability of $\rho$ at $\Omega_{\emptyset}$ (established later) allows us to write its first‐order Taylor expansion in the direction $\Delta_{\mathsf{S}}$.  For sufficiently small $\epsilon>0$,
    \begin{equation*}
       \rho\bigl(\Omega_{\emptyset} + \epsilon \Delta_{\mathsf{S}}\bigr) \;=\; \rho \bigl(\Omega_{\emptyset}\bigr) \;+\;
        \epsilon\,D\rho(\Omega_{\emptyset})\,[\Delta_{\mathsf{S}}] \;+\; \mathcal{O}\bigl(\epsilon^{2}\bigr),     
    \end{equation*}
    where $D\rho(\Omega_{\emptyset})[\cdot]$ denotes the directional derivative of $\rho$ at $\Omega_{\emptyset}$.

    On the space $\mathbb{S}^{n}$ of symmetric matrices equipped with the trace inner product $\langle X, Y \rangle \triangleq \tr(XY)$, this directional derivative can be expressed through the gradient matrix $\nabla \rho(\Omega_{\emptyset})$ as
    \begin{equation*}
        D\rho(\Omega_{\emptyset})\,[\Delta_{\mathsf{S}}] \;=\;
        \bigl\langle \nabla \rho(\Omega_{\emptyset}), \Delta_{\mathsf{S}} \bigr\rangle \;=\; \tr\bigl(\nabla \rho(\Omega_{\emptyset})\,\Delta_{\mathsf{S}}\bigr).
    \end{equation*}
    Substituting this representation into the expansion yields~\eqref{eq:Taylor}, completing the proof.
\end{proof}

Given~\eqref{eq:Taylor} and the monotonicity of the performance measure $\rho$, if the norms of the information matrices associated with the candidate features are sufficiently small, then the original problem~\eqref{problem::main} can be approximated by
\begin{equation}\label{problem::main_l}
    \max_{\mathsf{S} \subset \mathsf{U}} \; -\sum_{l \in \mathsf{S}} \tr\bigl(\nabla \rho(\Omega_{\emptyset}) \, \Delta_l\bigr)
    \quad \text{subject to} \quad |\mathsf{S}| \leq \kappa.
\end{equation}

We now show that the MSE performance measure $\rho$ is differentiable and derive its directional derivative at $\Omega_{\emptyset}$ in the direction of $\Delta_l$.
\begin{algorithm}[tb]
    \small
    \caption{Linearization-based Algorithm}
    \label{alg::linear}
    \begin{algorithmic}[1]
    \STATE {\bfseries Input:} $\Omega_{\emptyset}$, $\Delta_l$ for all $l \in \mathsf{U}$, and $\kappa$
    \STATE {\bfseries Output:} subset $\mathsf{S} \subset \mathsf{U}$ with $|\mathsf{S}| = \kappa$
    \STATE $\mathsf{S} \gets \emptyset$
    \STATE compute $\tr (\Omega_{\emptyset}^{-2} \Delta_l)$ for all $l \in \mathsf{U}$
    \STATE select the indices of the $\kappa$ largest elements to form $\mathsf{S}$
    \end{algorithmic}
\end{algorithm}

Let~\(\operatorname{sort} : \mathbb{R}^{n} \to \mathbb{R}^{n}\) denote the operator that rearranges a vector’s components in non-increasing order.  
A function~\(f : \mathbb{R}^{n} \to \mathbb{R}\) is called \emph{symmetric} if
\(f(x) = f\bigl(\operatorname{sort}(x)\bigr)\) for every~\(x \in \mathbb{R}^{n}\); that is, permuting the entries of~\(x\) leaves the value of~\(f\) unchanged.  
When a symmetric function is applied to the eigenvalues of a symmetric matrix, the resulting mapping is referred to as a \emph{spectral function}.

The MSE-based objective $\rho : \mathbb{S}_{++}^{n} \to \mathbb{R}_{>0}$, $\rho(\Omega) \;=\; \tr \bigl(\Omega^{-1}\bigr)$ with $n = 9T + 9$ is a spectral function whose value depends solely on the eigenvalues of~\(\Omega\). In particular, we can write~\(\rho\) as the composition of the eigenvalue map~\(\lambda\) with the scalar function~\(\phi\): $\rho(\Omega) = (\phi\circ\lambda) (\Omega) = \phi\bigl(\lambda(\Omega)\bigr)$, with $\Omega\in\mathbb{S}_{++}^{n}$. Here $\lambda(\Omega)\,=\, \bigl[\lambda_{1}(\Omega),\,\lambda_{2}(\Omega),\ldots,\lambda_{n}(\Omega)\bigr]^{\mathsf T}$ is a vector in $\mathbb{R}_{>0}^{n}$ that collects the eigenvalues of~\(\Omega\), and $\phi(\lambda)\;=\;\sum_{i=1}^{n}\frac{1}{\lambda_{i}}$ with $\lambda\in\mathbb{R}_{>0}^{n}$.
The next lemma is a restatement of~\cite[Corollary~5.2.5]{borwein2006convex}.

\begin{lemma}[Spectral differentiability]\label{lem:spec_diff}
    Let~\(\phi : \mathbb{R}^{n} \to \mathbb{R}\) be symmetric, closed, and convex.  
    Then~\(\phi \circ \lambda\) is differentiable at a matrix~\(X \in \mathbb{S}_{++}^{n}\) if and only if~\(\phi\) is differentiable at~\(\lambda(X)\).
\end{lemma}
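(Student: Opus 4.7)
The plan is to reduce the spectral differentiability statement to a purely subdifferential question, for which an established formula of Lewis provides the bridge between the convex analysis of $\phi$ and of $\phi\circ\lambda$. Once the reduction is made, the lemma becomes a careful bookkeeping exercise about when the subdifferential of a closed convex function is a singleton.

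First I would invoke the two main ingredients. The first is Lewis's subdifferential formula for spectral functions: if $\phi$ is closed, proper, convex, and symmetric, then for every $X\in\mathbb{S}^{n}$ with any orthogonal diagonalizer $X = U\,\diag(\lambda(X))\,U^{\top}$,
\begin{equation*}
    \partial(\phi\circ\lambda)(X) \;=\; \bigl\{\, U\,\diag(y)\,U^{\top} : y\in\partial\phi(\lambda(X)), \; U \text{ diagonalizes } X \,\bigr\}.
\end{equation*}
The second is the classical equivalence that a closed convex function is Fr\'echet differentiable at an interior point of its domain if and only if its subdifferential there is a singleton. Together these facts reduce the lemma to proving that $\partial\phi(\lambda(X))$ is a singleton if and only if $\partial(\phi\circ\lambda)(X)$ is.

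For the forward direction, if $\phi$ is differentiable at $\lambda(X)$ then $\partial\phi(\lambda(X)) = \{g\}$ for some $g\in\mathbb{R}^{n}$. Symmetry of $\phi$ forces $g$ to be constant across any block of coordinates where $\lambda(X)$ is constant, because the gradient of a symmetric function is equivariant under the stabilizer of its argument. Hence rotating $U$ within an eigenspace of $X$ leaves $U\,\diag(g)\,U^{\top}$ unchanged, Lewis's formula collapses to the singleton $\{U\,\diag(g)\,U^{\top}\}$, and differentiability of $\phi\circ\lambda$ at $X$ follows. For the reverse direction I would argue by contraposition: if $|\partial\phi(\lambda(X))|\geq 2$, pick distinct $y_{1}, y_{2}\in\partial\phi(\lambda(X))$ and any single diagonalizer $U$ of $X$. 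Both $U\,\diag(y_{1})\,U^{\top}$ and $U\,\diag(y_{2})\,U^{\top}$ lie in $\partial(\phi\circ\lambda)(X)$, and their difference $U\,\diag(y_{1}-y_{2})\,U^{\top}$ is nonzero because orthogonal conjugation preserves nonzeroness. Hence $\partial(\phi\circ\lambda)(X)$ is not a singleton, so $\phi\circ\lambda$ cannot be differentiable at $X$.

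The main obstacle will be the handling of repeated eigenvalues. When $\lambda(X)$ has multiplicities greater than one, the orthogonal diagonalizer $U$ is only determined up to a rotation inside each eigenspace, so for the sufficiency direction one must use the symmetry of $\phi$ to prove that $U\,\diag(g)\,U^{\top}$ is well defined independent of this choice. This invariance follows because the gradient of a symmetric function, when evaluated at a point with repeated coordinates, is itself constant on those coordinate blocks, which makes the diagonal block scalar and hence invariant under the block rotation. Once this equivariance is carefully pinned down and Lewis's formula is cited in its full form, the rest is a routine translation between subgradient singletons and differentiability, with no further analytic content needed.
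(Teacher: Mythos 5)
The paper does not prove this lemma at all: it is explicitly presented as a restatement of \cite[Corollary~5.2.5]{borwein2006convex}, so there is no in-paper argument to compare against. Your proof is essentially the standard one behind that corollary --- Lewis's simultaneous-diagonalization formula for $\partial(\phi\circ\lambda)(X)$ combined with the fact that a convex function finite at a point is differentiable there iff its subdifferential is a singleton --- and your handling of the repeated-eigenvalue case (the gradient of a symmetric function is constant on coordinate blocks where the argument is constant, so $U\,\diag(g)\,U^{\top}$ is invariant under block rotations of $U$) is the right and necessary step. The only loose end is that in the contrapositive direction non-differentiability of $\phi$ could also mean $\partial\phi(\lambda(X))=\emptyset$ rather than containing two points; Lewis's formula then gives $\partial(\phi\circ\lambda)(X)=\emptyset$ as well, so the conclusion still holds, and in the lemma's setting ($\phi$ finite-valued and convex on $\mathbb{R}^{n}$, or in the paper's application $\lambda(X)$ interior to $\operatorname{dom}\phi$) this case is vacuous. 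With that one-line remark added, your argument is complete and matches the proof of the cited result.
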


\begin{proposition}\label{prop:differentiable}
    The MSE objective $\rho(\Omega) \,=\, \tr\bigl(\Omega^{-1}\bigr) \,=\, \sum_{i=1}^{n} \frac{1}{\lambda_{i}(\Omega)}$, is differentiable at every~\(\Omega \in \mathbb{S}_{++}^{n}\).
\end{proposition}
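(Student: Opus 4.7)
The plan is to invoke Lemma~\ref{lem:spec_diff} directly. Since the proposition already presents the MSE objective in the composite form $\rho = \phi\circ\lambda$ with $\phi(y)=\sum_{i=1}^{n} 1/y_i$, it suffices to verify two things: (i) the scalar function $\phi$ satisfies the hypotheses of Lemma~\ref{lem:spec_diff} (symmetric, closed, and convex when naturally extended to all of $\mathbb{R}^n$), and (ii) $\phi$ is differentiable at the specific point $\lambda(\Omega)$ for every $\Omega\in\mathbb{S}_{++}^{n}$.

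For hypothesis (i), I would check the three properties in turn. Symmetry is immediate because $\phi$ depends on its argument only through an unordered sum of reciprocals, so $\phi(y)=\phi(\operatorname{sort}(y))$ for every $y\in\mathbb{R}_{>0}^{n}$. Convexity follows because each scalar map $y_i\mapsto 1/y_i$ is strictly convex on $(0,\infty)$ (its second derivative is $2/y_i^{3}>0$), and the sum of convex functions is convex. To match the domain stated in Lemma~\ref{lem:spec_diff}, I would extend $\phi$ to all of $\mathbb{R}^n$ by assigning the value $+\infty$ on $\mathbb{R}^{n}\setminus\mathbb{R}_{>0}^{n}$; this preserves convexity and gives the closedness (lower semicontinuity) condition, since $\phi$ is continuous on the open cone $\mathbb{R}_{>0}^{n}$ and diverges to $+\infty$ whenever any coordinate approaches the boundary.

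For hypothesis (ii), fix any $\Omega\in\mathbb{S}_{++}^{n}$. Then every eigenvalue $\lambda_i(\Omega)$ is strictly positive, so $\lambda(\Omega)$ lies in the interior of the effective domain of $\phi$. On this open set the partial derivatives $\partial\phi/\partial y_i = -1/y_i^{2}$ exist and are continuous, hence $\phi$ is continuously differentiable, and in particular Fr\'echet differentiable, at $\lambda(\Omega)$. Applying Lemma~\ref{lem:spec_diff} with $X=\Omega$ then yields differentiability of $\rho=\phi\circ\lambda$ at $\Omega$, which is the claim.

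The main obstacle is essentially bookkeeping rather than any deep step: one must be careful that $\phi$, as written, lives only on the positive orthant, whereas Lemma~\ref{lem:spec_diff} is stated for $\phi:\mathbb{R}^{n}\to\mathbb{R}$. Making the extension to $+\infty$ explicit, and then verifying that this extension is genuinely closed and convex, is the one place that requires attention; afterwards every hypothesis of the spectral differentiability lemma is immediate. (As a byproduct, one also recovers the closed-form $\nabla\rho(\Omega)=-\Omega^{-2}$ by pushing the formula $\partial\phi/\partial y_i=-1/y_i^{2}$ through the spectral calculus, which is exactly what Algorithm~\ref{alg::linear} exploits.)
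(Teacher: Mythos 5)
Your argument is correct and follows essentially the same route as the paper's proof: both verify that $\phi(\lambda)=\sum_i 1/\lambda_i$ is symmetric, convex, and closed (the paper invokes Remark~\ref{rem:closed} for the same boundary-divergence criterion you use), note its smoothness on the open positive orthant, and then apply Lemma~\ref{lem:spec_diff}. Your extra care about extending $\phi$ by $+\infty$ outside $\mathbb{R}_{>0}^{n}$ is a slightly more explicit version of the same bookkeeping the paper handles via that remark, so there is no substantive difference.
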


In the reminder of this section, we provide a closed-form expression for the directional derivative of the MSE objective~\(\rho\). We also establish an explicit bound on the quadratic remainder term in its Taylor expansion, thereby identifying the regime in which the linear approximation remains accurate.

\begin{proposition}[Quadratic‐order Error Bound]\label{prop:quadratic_error}
Let $\Omega_{\emptyset}\in\mathbb{S}_{++}^{n}$ be the base information matrix, and define $\Delta_{\mathsf{S}} \,=\, \sum_{\,l\in\mathsf S} \Delta_{l}$ for any $\mathsf S \subseteq \mathsf U$ with $|\mathsf S|\le \kappa$. 
Assume each $\Delta_{l}\succeq 0$ satisfies $\|\Delta_{l}\|_{F}\,\le\,\zeta$, and choose $\epsilon>0$ so that ~\(\epsilon\,\|\Omega_{\emptyset}^{-1}\,\Delta_{\mathsf S}\|_{2} < 1\). Then
\begin{align}\label{eq:quadratic_expansion}
  \rho\bigl(\Omega_{\emptyset} + \epsilon\,\Delta_{\mathsf S}\bigr)
  \;& = \;
  \rho(\Omega_{\emptyset})
  \;-\;\epsilon \sum_{\,l\in\mathsf S} r_{l}\bigl(\Omega_{\emptyset}^{2}\bigr) \nonumber \\
  & +\; 
  \underbrace{
    \epsilon^{2}\,\tr\Bigl(
      \Omega_{\emptyset}^{-1}\,\Delta_{\mathsf S}\,\Omega_{\emptyset}^{-1}\,\Delta_{\mathsf S}\,\Omega_{\emptyset}^{-1}
    \Bigr)
  }_{\triangleq R_{\mathsf S}(\epsilon)} 
  \;+\;\mathcal{O}(\epsilon^{3})
\end{align}
where $r_{l}\bigl(\Omega_{\emptyset}^{2}\bigr) \,=\, \tr\bigl(\Omega_{\emptyset}^{-2}\,\Delta_{l}\bigr)$, is the leverage score of the $l$-th feature with respect to $\Omega_{\emptyset}^2$ (see~\cite{mousavi2020estimation} and the references therein for a complete definition of leverage score).  Moreover, the quadratic remainder 
$R_{\mathsf S}(\epsilon)$ is nonnegative and satisfies
{\small\begin{equation}\label{eq:quadratic_bound}
  0 \;\le\; R_{\mathsf S}(\epsilon)
  \;\le\;
  \epsilon^{2}\,\|\Omega_{\emptyset}^{-1}\|_{2}^{3}\,\|\Delta_{\mathsf S}\|_{F}^{2}
  \;\le\;
  \epsilon^{2}\,\|\Omega_{\emptyset}^{-1}\|_{2}^{3}\,\bigl(\kappa\,\zeta\bigr)^{2}.
\end{equation}}
\end{proposition}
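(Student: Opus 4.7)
The plan is to obtain the expansion~\eqref{eq:quadratic_expansion} from a Neumann-series representation of $(\Omega_{\emptyset} + \epsilon\Delta_{\mathsf S})^{-1}$ and then bound the quadratic remainder by symmetrizing it and applying standard matrix-norm inequalities. First I would factor $\Omega_{\emptyset} + \epsilon\Delta_{\mathsf S} = \Omega_{\emptyset}^{1/2}(I + \epsilon G)\Omega_{\emptyset}^{1/2}$ with $G \triangleq \Omega_{\emptyset}^{-1/2}\Delta_{\mathsf S}\Omega_{\emptyset}^{-1/2} \in \mathbb{S}_{+}^{n}$. Because $G$ is symmetric and similar to $\Omega_{\emptyset}^{-1}\Delta_{\mathsf S}$, one has $\|G\|_2 \le \|\Omega_{\emptyset}^{-1}\Delta_{\mathsf S}\|_2$, so the hypothesis $\epsilon\|\Omega_{\emptyset}^{-1}\Delta_{\mathsf S}\|_2 < 1$ guarantees convergence of $(I + \epsilon G)^{-1} = \sum_{k \ge 0}(-\epsilon G)^{k}$. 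Inverting, taking traces term by term, and invoking cyclicity yields $\tr(\Omega_{\emptyset}^{-1}) - \epsilon\tr(\Omega_{\emptyset}^{-2}\Delta_{\mathsf S}) + \epsilon^{2}\tr(\Omega_{\emptyset}^{-1}\Delta_{\mathsf S}\Omega_{\emptyset}^{-1}\Delta_{\mathsf S}\Omega_{\emptyset}^{-1})$ up to an $\mathcal{O}(\epsilon^{3})$ tail; splitting the linear term as $\sum_{l \in \mathsf S}\tr(\Omega_{\emptyset}^{-2}\Delta_{l}) = \sum_{l \in \mathsf S} r_{l}(\Omega_{\emptyset}^{2})$ produces~\eqref{eq:quadratic_expansion}.

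Next, I would prove nonnegativity of $R_{\mathsf S}(\epsilon)$ by exploiting the same symmetrization: substituting $\Delta_{\mathsf S} = \Omega_{\emptyset}^{1/2}G\Omega_{\emptyset}^{1/2}$ into the cubic trace collapses it to $\tr(\Omega_{\emptyset}^{-1}G^{2})$, the trace of a product of two positive-semidefinite matrices and therefore nonnegative. The upper bound then follows from a short chain of elementary inequalities. Using $\tr(XY) \le \|X\|_2\tr(Y)$ for Hermitian $X$ and PSD $Y$ (which applies since $\Omega_{\emptyset}^{-1} \succ 0$ and $G^{2} \succeq 0$), one has $\tr(\Omega_{\emptyset}^{-1}G^{2}) \le \|\Omega_{\emptyset}^{-1}\|_2 \|G\|_F^{2}$. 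Two applications of the submultiplicative bound $\|AB\|_F \le \|A\|_2\|B\|_F$ to $G = \Omega_{\emptyset}^{-1/2}\Delta_{\mathsf S}\Omega_{\emptyset}^{-1/2}$ give $\|G\|_F \le \|\Omega_{\emptyset}^{-1/2}\|_2^{2}\|\Delta_{\mathsf S}\|_F = \|\Omega_{\emptyset}^{-1}\|_2\|\Delta_{\mathsf S}\|_F$, producing the middle inequality of~\eqref{eq:quadratic_bound}. The outermost estimate then follows from the triangle inequality $\|\Delta_{\mathsf S}\|_F \le \sum_{l \in \mathsf S}\|\Delta_{l}\|_F \le |\mathsf S|\zeta \le \kappa\zeta$.

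The main obstacle is making the $\mathcal{O}(\epsilon^{3})$ claim quantitatively honest rather than merely formal. I would control the tail $\sum_{k \ge 3}(-\epsilon)^{k}\tr(\Omega_{\emptyset}^{-1}G^{k})$ uniformly in $\mathsf S$ by bounding $|\tr(\Omega_{\emptyset}^{-1}G^{k})| \le \|\Omega_{\emptyset}^{-1}\|_2 \|G\|_F^{2}\|G\|_2^{k-2}$ for $k \ge 2$ and summing the resulting geometric series, whose convergence is precisely guaranteed by the spectral-radius hypothesis $\epsilon\|G\|_2 < 1$. This both justifies treating $R_{\mathsf S}(\epsilon)$ as the genuine second-order term and produces a tail of uniform size $\mathcal{O}(\epsilon^{3})$ as $\epsilon \to 0^{+}$. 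Beyond this bookkeeping, every remaining step is a direct trace manipulation or a textbook norm inequality, so the argument becomes transparent once the substitution $G = \Omega_{\emptyset}^{-1/2}\Delta_{\mathsf S}\Omega_{\emptyset}^{-1/2}$ is in place.
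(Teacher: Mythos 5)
Your proposal is correct and follows essentially the same route as the paper's proof: a Neumann-series expansion of the perturbed inverse, identification of the linear term with the summed leverage scores, and the same symmetrization $G = \Omega_{\emptyset}^{-1/2}\Delta_{\mathsf S}\Omega_{\emptyset}^{-1/2}$ (the paper's $M$) together with $\tr(XY)\le\|X\|_2\tr(Y)$ and $\|AB\|_F\le\|A\|_2\|B\|_F$ to establish nonnegativity and the bound on $R_{\mathsf S}(\epsilon)$. The only differences are cosmetic — you introduce the symmetric factorization at the outset rather than only for the remainder bound, and you make the $\mathcal{O}(\epsilon^3)$ tail quantitative via a geometric-series estimate, which the paper leaves implicit.
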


{\it Implications for feature selection.}
Proposition~\ref{prop:quadratic_error} shows that the leading-order term in the Taylor expansion of the MSE objective is the (negated) sum of leverage scores, $ -\,\epsilon \sum_{l\in\mathsf S} r_{l}\bigl(\Omega_{\emptyset}^{2}\bigr)$, which is precisely the modular objective maximised in Algorithm~\ref{alg::linear}.  
Because modular objectives admit an optimal greedy solution, Algorithm~\ref{alg::linear} returns the \emph{exact} maximiser of the linear surrogate~\eqref{problem::main_l}.  
Using a heap to keep the current top~\(\kappa\) scores, the procedure runs in 
\(\mathcal{O}\bigl(N\,T^{3}\log \kappa\bigr)\) time, a reduction of roughly~\(\kappa/\log\kappa\) compared with the classical greedy algorithm.

{\it Accuracy of the linear surrogate.}
The quadratic remainder $R_{\mathsf S}(\epsilon) \;=\; \epsilon^{2}\, \tr \Bigl( \Omega_{\emptyset}^{-1}\Delta_{\mathsf S}\, \Omega_{\emptyset}^{-1}\Delta_{\mathsf S}\, \Omega_{\emptyset}^{-1} \Bigr)$, is always non-negative and satisfies the bound $0 \,\le\, R_{\mathsf S}(\epsilon) \,\le\, \epsilon^{2}\, \|\Omega_{\emptyset}^{-1}\|_{2}^{3}\, \|\Delta_{\mathsf S}\|_{F}^{2} \,\le\, \epsilon^{2}\, \|\Omega_{\emptyset}^{-1}\|_{2}^{3}\, (\kappa\zeta)^{2}$. Hence, when each individual information increment obeys $\|\Delta_{l}\|_{F}\le\zeta\ll\lambda_{\min}\bigl(\Omega_{\emptyset}\bigr)$ and at most~\(\kappa\) features are chosen, the total perturbation remains small ($\|\Delta_{\mathsf S}\|_{F}\le\kappa\zeta$).  
In this \emph{small-increment regime} the second-order term is~\(\mathcal{O}\bigl((\kappa\zeta)^{2}\bigr)\) and is dominated by the linear term for sufficiently small~\(\epsilon\).  

We adopt the small–increment condition~\(\zeta \ll \lambda_{\min}(\Omega_{\emptyset})\) for three reasons: (i) it guarantees~\(\epsilon\|\Omega_{\emptyset}^{-1}\Delta_{\mathsf S}\|_{2}<1\), so the Neumann expansion in Proposition~\ref{prop:quadratic_error} is valid; (ii) it bounds the quadratic remainder by~\(R_{\mathsf S}(\epsilon)\le\epsilon^{2}\|\Omega_{\emptyset}^{-1}\|_{2}^{3}(\kappa\zeta)^{2}=\epsilon^{2}(\kappa\zeta)^{2}\lambda_{\min}^{-3}(\Omega_{\emptyset})\ll\epsilon^{2}(\kappa\zeta)\lambda_{\min}^{-2}(\Omega_{\emptyset})\), ensuring this term is negligible compared with the linear one that scales as~\(\mathcal{O}(\kappa\zeta)\); and (iii) it keeps~\(\Omega_{\emptyset}+\epsilon\Delta_{\mathsf S}\) well conditioned and positive definite, so~\(\Omega_{\emptyset}^{-1}\) remains a stable reference for leverage scores. These points collectively justify the accuracy of the linearised surrogate in the small–increment regime.

{\it Guarantee.} Consequently, the greedy subset returned by Algorithm~\ref{alg::linear} incurs a true MSE cost that differs from the optimal by at most $\epsilon^{2}\, \|\Omega_{\emptyset}^{-1}\|_{2}^{3}\, (\kappa\zeta)^{2}$, a quantity that can be made arbitrarily small by choosing~\(\epsilon\) proportional to the typical size of the increments~\(\Delta_{l}\).  
Proposition~\ref{prop:quadratic_error} thus provides a rigorous justification for replacing the original non-submodular objective with its linearised surrogate and for employing Algorithm~\ref{alg::linear} as an efficient, near-exact feature-selection strategy.

\section{Experiments}\label{sec:experiments}
The attention pipeline discussed in this paper introduces three main challenges for real-world deployment. First, the system must integrate a feature extraction module to detect salient features from each image frame. Second, it must incorporate real onboard IMU measurements. Third, it must perform visibility checks over a future time horizon, which requires predicting landmark observability based on the robot's dynamical model and control inputs.

To evaluate the proposed selection methods under these challenges, we design two complementary sets of experiments. The first set uses sequences from the EuRoC dataset~\cite{burri2016euroc} and compares our methods against baseline approaches. This dataset provides feature measurements and IMU data, enabling a fair benchmarking against existing methods in the literature. However, since control inputs are not available in EuRoC, the visibility check is performed using ground-truth future poses rather than predicted ones based on the robot dynamical model.

To address this limitation and evaluate the full attention pipeline proposed in the paper, we conduct a second set of experiments using a 1/10-scale RC car equipped with onboard stereo cameras and an IMU. In this setup, control inputs are available, allowing us to implement visibility checks based on predicted future horizons using the robot's dynamics and control commands. Various selection strategies are tested on this platform to assess their practical performance.

The remainder of this section details the experimental setup and evaluation results for both datasets.

\begin{figure*}[tb]
    \centering
    \includegraphics[width=1.00\linewidth]{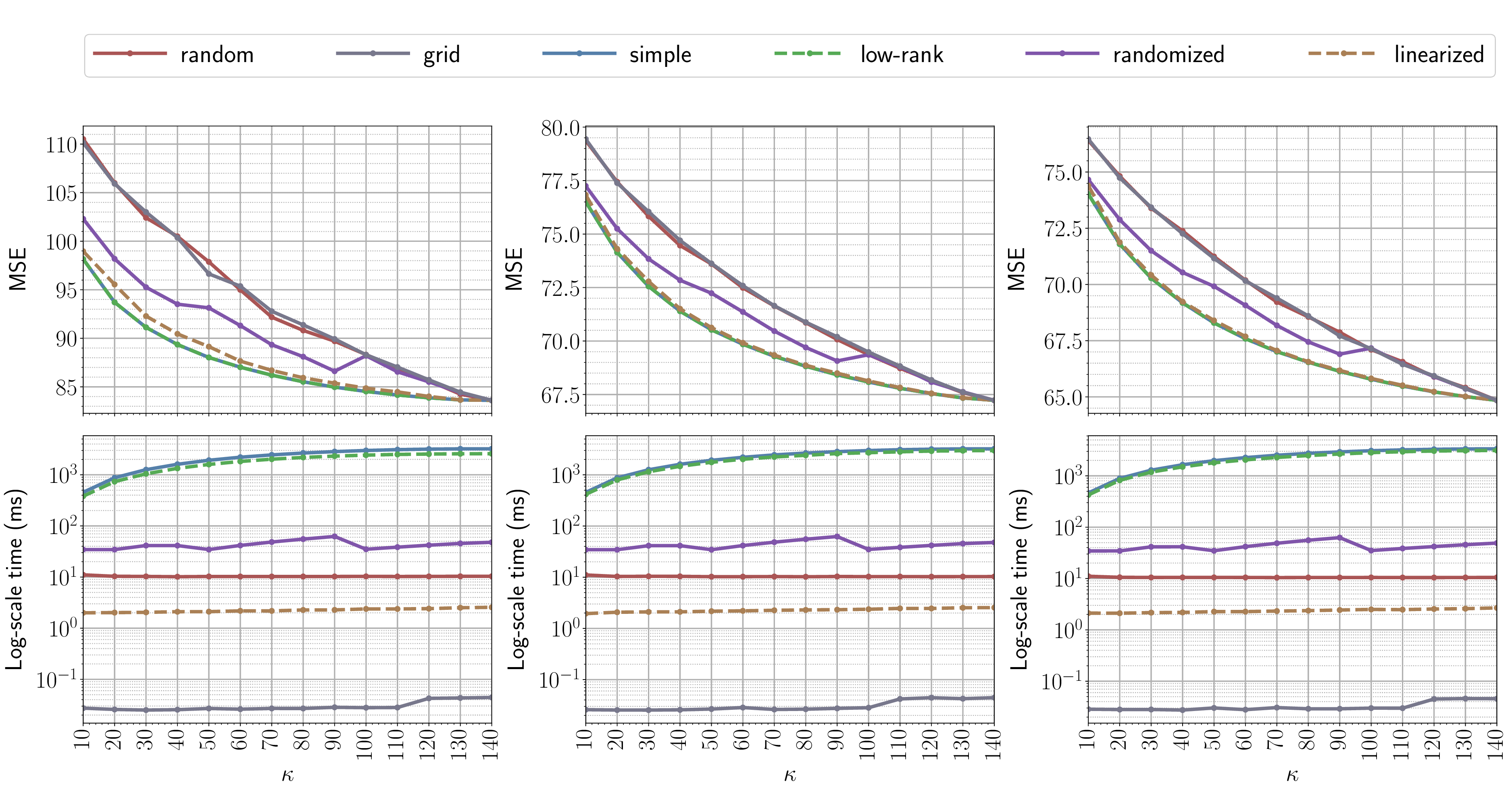}
    \caption{Performance comparison of different feature selection methods on the MH\_01\_easy sequence from the EuRoC dataset. Each column corresponds to a single randomly selected video frame. The top plot in each column shows the MSE values versus the number of selected features~\(\kappa\), while the lower plot presents the computation time on a logarithmic scale to enhance visibility and highlight discrepancies. Methods compared include uniform random selection (``random''), grid-based selection (``grid''), simple greedy (``simple''), fast low-rank greedy (``low-rank''), randomized greedy (``randomized''), and linearization-based greedy (``linearized''). For randomized methods, each experiment was repeated 20 times, and the mean values are reported. The prediction horizon $T$ is set to 13 for the information-aware selection methods, and the hyperparameter $\epsilon$ in the randomized greedy algorithm is set to 0.5. Note that MSE values and feature counts vary across frames, so results are presented for three representative frames without averaging across the sequence.}
    \label{fig:UAV_comparison}
\end{figure*}

\subsection{Benchmark Evaluation and Comparison}
For this study, we use multiple sequences from the EuRoC benchmark dataset~\cite{burri2016euroc}. The visual–inertial state estimation pipeline is based on VINS-Mono~\cite{qin2018vins}, excluding loop closure (i.e., using full odometry), which provides both the front-end and back-end processing modules. We modify the front-end to incorporate the different feature selection strategies proposed in this paper. The EuRoC benchmarks were executed on a laptop equipped with an Intel Core i9-12900H CPU (20 cores, 5.0 GHz), 32~GB of RAM, and an NVIDIA GeForce RTX 3080 Ti GPU.

The EuRoC dataset was collected using an AscTec Firefly hex-rotor UAV equipped with a stereo visual–inertial sensor suite. The stereo camera captures images at a resolution of $752 \times 480$ pixels and a frame rate of 20~Hz, while the IMU measurements are recorded at 200~Hz. In our experiments, the measurement noise parameters, along with the intrinsic and extrinsic calibrations, are set to exactly match those provided in the dataset.

In the front-end, we employ OpenCV’s implementation of the Shi–Tomasi method~\cite{shi1994good} for feature detection and the Lucas–Kanade method~\cite{lucas1981iterative} for feature tracking. The feature extractor detects up to $N = 150$ features per frame. From this set, our selection algorithms retain at most $\kappa \ll N$ features, which are then passed to the back-end for state estimation.

In practice, future pose estimates along the prediction horizon can be obtained by integrating the vehicle's dynamics using control inputs. However, since the EuRoC dataset does not provide control inputs, we approximate future poses by applying ground-truth motion increments to the current pose estimate. The feature selector operates over a prediction horizon of length~\(T\), set to~\(T = 13\) in all plots in this section. This value reflects the longest horizon that allows efficient simulation without incurring significant computational overhead. In general, larger values of~\(T\) increase the computational cost for all selection methods. Later in this section, we provide further analysis and plots illustrating the impact of~\(T\) on both performance and runtime.

We compare various VIN approaches designed to minimize the MSE and alternative objectives over the forward prediction horizon. When the algorithm is limited by a selection budget~\(\kappa\), and~\(r\) features are successfully tracked from the previous frame, only~\(\kappa - r\) new features are selected in the current frame. However, this preselection mechanism is disabled for per-frame experiments, such as the one shown in Fig.~\ref{fig:UAV_comparison}. For completeness, we initially considered including the lazy greedy algorithm (Algorithm 1 of~\cite{carlone2018attention}), known for its computational efficiency. However, it is excluded from our evaluation because it requires a lower bound (in the case of minimization) on the cost function~\( f_l(\mathsf{S}) \) for each candidate feature~\( l \) at every iteration. Existing bounds in the literature for MSE-based objectives are known to be loose, causing the lazy greedy algorithm to converge to the same performance as standard greedy, thus offering no practical advantage in this setting.

We evaluate four greedy algorithms proposed in this paper: simple greedy (Algorithm~\ref{alg::greedy}), fast low-rank greedy (Algorithm~\ref{alg::fast_greedy}), randomized greedy (Algorithm~\ref{alg::random}), and linearization-based greedy (Algorithm~\ref{alg::linear}), which are labeled in the figures as ``simple", ``low-rank", ``randomized", and ``linearized", respectively. For comparison, we also include two baseline methods. The first is a purely random selection algorithm, labeled ``random", which uniformly samples features without regard to spatial or informational structure. The second, labeled ``grid", divides each frame into a $15 \times 12$ grid and selects an equal number of features from each cell, promoting spatially uniform coverage across the image. For easier interpretation, we use the same color for each method consistently across all plots in this section.

\begin{figure}[tp]
    \centering
    \includegraphics[width=.80 \columnwidth]{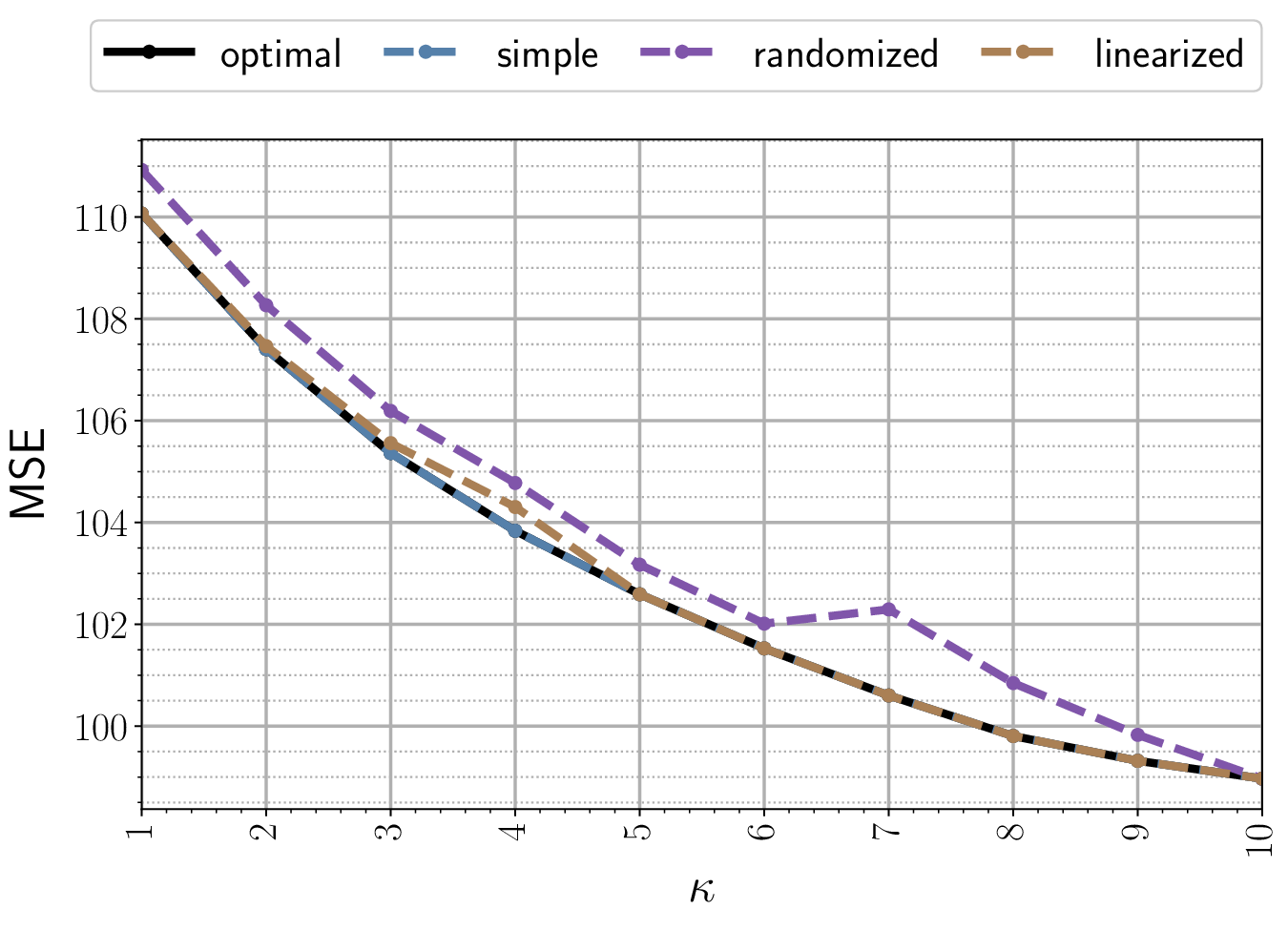}
    \caption{Performance comparison of the proposed feature selection methods on the MH\_01\_easy sequence from the EuRoC dataset. The evaluation considers selecting~\(\kappa = 1, \ldots, 10\) features from a pool of 10 candidates, allowing comparison against the optimal solution obtained via exhaustive search. The results show that the greedy algorithm achieves identical performance to the optimal method, despite the MSE-based objective not being submodular. The linearized method closely matches this performance with minimal deviation, while the randomized approach exhibits slight fluctuations due to the fixed~\(\epsilon\), yet follows a similar overall trend.}
    \label{fig:UAV_optimal_and_bound}
\end{figure}

Fig.~\ref{fig:UAV_comparison} presents the performance of the six selection methods on the MH\_01\_easy sequence from the EuRoC dataset. Each column corresponds to a single video frame. In each column, the top subplot shows the MSE values (vertical axis), and the bottom subplot shows the computation time on a logarithmic scale to enhance visibility (vertical axis), both plotted against the number of selected features~\(\kappa\) (horizontal axis). The MSE values vary significantly across frames due to changes in the robot's location along the trajectory, making it difficult to meaningfully report average values across all frames (note the differing MSE scales in each column). Additionally, the total number of features detected by the extractor varies from frame to frame, which prevents consistent plotting of mean performance curves across methods for a fixed range of~\(\kappa\). Therefore, we report results for three randomly selected frames to provide representative comparisons without aggregating over inconsistent conditions.

In each curve shown in the figure, every point on the horizontal axis corresponds to a selection budget~\(\kappa\) enforced for the given frame.  
For methods involving randomization, each experiment is repeated 20 times, and the average performance is reported. The hyperparameter~\(\epsilon\) in the randomized greedy algorithm is set to~\(0.5\) for this figure.

\begin{figure}[t]
    \centering
        \begin{tabular}{c}
            \includegraphics[width=0.75\columnwidth]{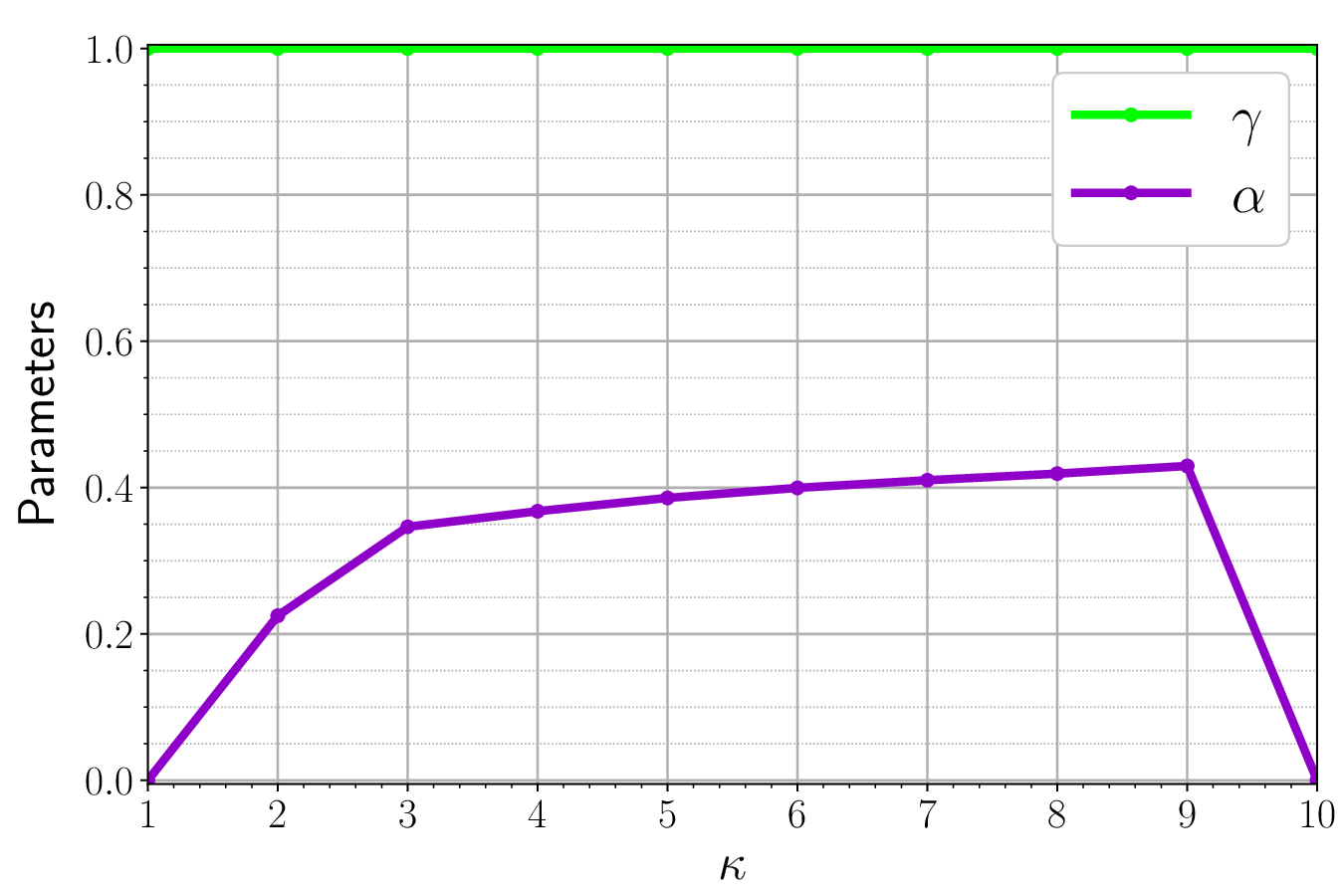}\\
            \small(a)
            \label{fig:bound-a}
        \end{tabular}
        \begin{tabular}{c}
            \includegraphics[width=0.75 \columnwidth]{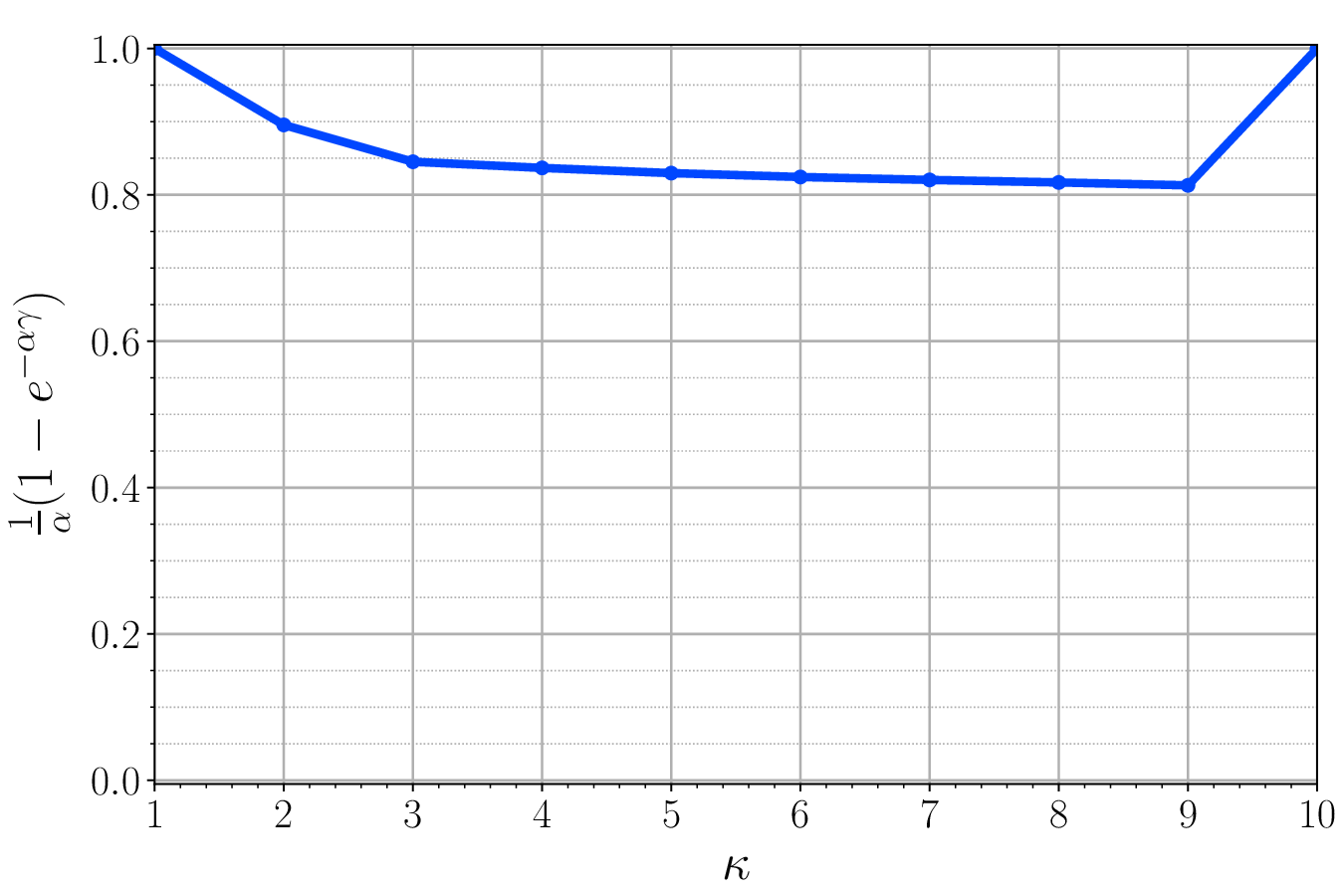}\\
            \small(b)
        \end{tabular}
        \caption{(a) Curvature ($\alpha$) and submodularity ratio ($\gamma$) as functions of $\kappa$. (b) Performance bound calculated using $\alpha$ and $\gamma$ values.}
    \label{fig:bound}%
\end{figure}

The performance of all algorithms improves as~\(\kappa\) increases. The uniform random and spatially gridded methods, while computationally efficient, underperform in terms of estimation accuracy. The randomized greedy algorithm performs better than these two, particularly at small~\(\kappa\), but still lags behind the three greedy-based strategies proposed in this paper. Although randomized greedy is relatively fast, its runtime remains higher than that of the linearization-based greedy method.

\begin{figure}[tp]
    \centering
    \includegraphics[width=.95 \columnwidth]{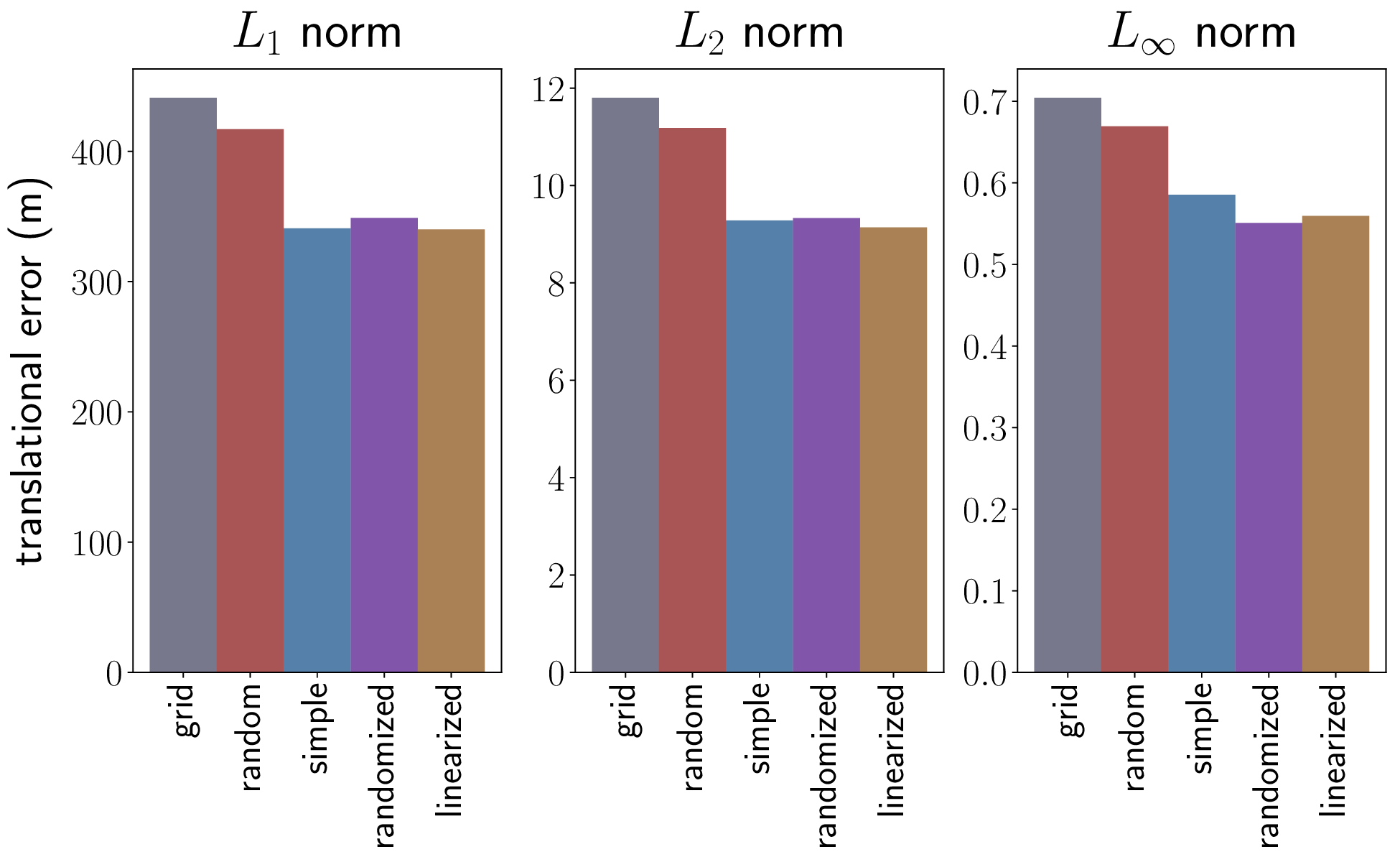}
    \caption{Translational error (in meters) between the ground-truth and estimated UAV trajectories on the MH\_01\_easy sequence from the EuRoC dataset. At each time step, the Euclidean distance between corresponding positions, computed after applying a rigid transformation for alignment, is calculated. The overall error is then summarized using the~\(L_1\),~\(L_2\), and~\(L_\infty\) norms over the sequence of these distances. The feature selection budget is fixed at~\(\kappa = 70\) for this experiment, and the selection methods correspond to those shown in Fig.~\ref{fig:UAV_comparison}.}
    \label{fig:UAV_errors}
\end{figure}

Notably, the performance of the randomized greedy method begins to converge to that of uniform random as~\(\kappa\) increases. This is a result of using a fixed~\(\epsilon\): as~\(\kappa\) grows, the number of candidates sampled in line~5 of Algorithm~\ref{alg::random} becomes a smaller fraction of the total, causing the method to behave increasingly like a random selector. Similarly, the observed fluctuations in the runtime of randomized greedy stem from this fixed~\(\epsilon\): although increasing~\(\kappa\) generally raises cost, smaller sample sizes at larger~\(\kappa\) can offset this, producing non-monotonic trends. Nevertheless, the overall time complexity remains stable and approximately constant.

As expected, the fast low-rank greedy algorithm (Algorithm~\ref{alg::fast_greedy}) closely tracks the performance of the simple greedy algorithm (Algorithm~\ref{alg::greedy}) while executing significantly faster. Interestingly, the linearization-based greedy method (Algorithm~\ref{alg::linear}) also achieves nearly identical estimation performance, but at a fraction of the computational cost, with a runtime that is effectively constant across~\(\kappa\), outperforming all other methods except for the spatially gridded baseline.

These results highlight that the choice of feature selection method should be guided by the specific requirements of the application. When computational resources are limited and fast decision-making is crucial, for instance in onboard processing on micro aerial vehicles, the linearization-based greedy algorithm emerges as the most practical option. It provides near-optimal accuracy at a fraction of the cost, with virtually constant runtime. In contrast, if achieving the lowest possible estimation error is the primary goal and computational constraints are relaxed, the fast low-rank greedy algorithm offers the best trade-off: it maintains the performance of full greedy selection while significantly reducing runtime. For offline evaluation or small-scale problems where cost is not a limiting factor, the simple greedy method remains a reliable, though expensive, benchmark. When approximate results are acceptable and further speedup is desired, randomized greedy presents a balanced middle ground that is especially effective at smaller selection budgets, though it may degrade as~\(\kappa\) grows unless~\(\epsilon\) is adaptively tuned.
Finally, in extreme low-power scenarios, such as onboard deployment with severe runtime limits, uniform random or spatially gridded methods are the cheapest alternatives, though they should be avoided if accuracy is a priority. Overall, these findings enable practitioners to make informed choices based on the performance–efficiency trade-offs relevant to their robotics pipeline.
\begin{figure}[tp]
    \centering
    \includegraphics[width=.95 \columnwidth]{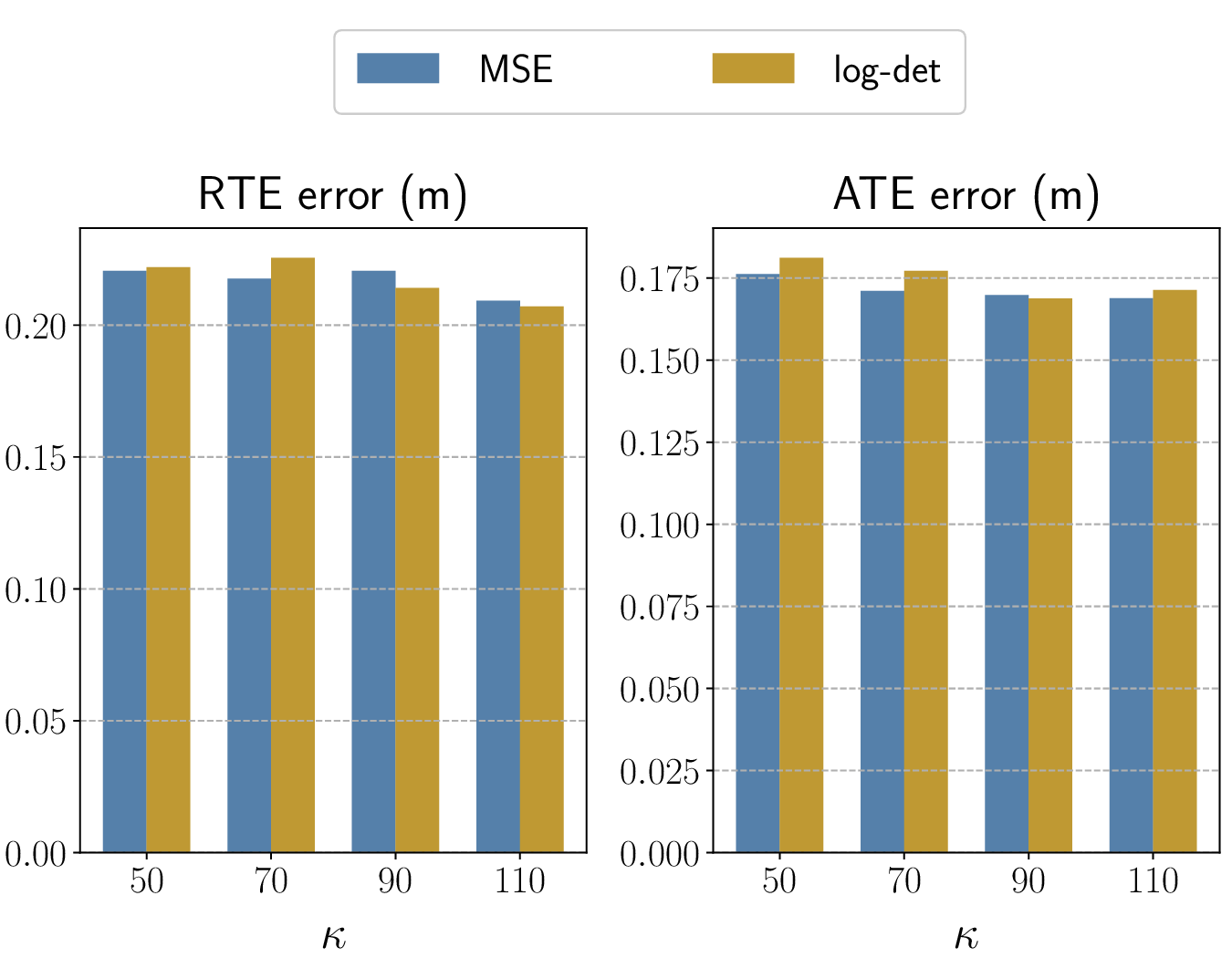}
    \caption{Comparison between greedy feature selection using MSE and log-det objectives across different values of $\kappa$. Left: Relative Translational Error (RTE). Right: Absolute Translational Error (ATE). The MH\_01\_easy sequence from the EuRoC dataset is used for evaluation.}
    \label{fig:linearized_vs_LogDet_MH_01_easy}
\end{figure}

Fig.~\ref{fig:UAV_optimal_and_bound} presents a numerical evaluation comparing various selection methods proposed in this paper against the optimal solution, thereby highlighting their relative performance bounds. To make exhaustive combinatorial search tractable, the experiment is limited to selecting from the top 10 candidate features, screened through a quality check, for a single frame of the MH\_01\_easy sequence. All other hyperparameters used in this experiment are identical to those in Fig.~\ref{fig:UAV_comparison}. 
Interestingly, the greedy and optimal results are identical. This is notable because the MSE-based VIN objective in~\eqref{eq:mse} is not generally submodular, and the greedy algorithm typically offers weaker theoretical guarantees, as characterized in~\eqref{eq:greedy_bound}. Nonetheless, this observation is consistent with findings from prior work~\cite{carlone2018attention} and related studies in other domains~\cite{zhang2015sensor,vafaee2023exploring}.

With only minor fluctuations, the linearized algorithm closely tracks the performance of both the greedy and optimal solutions. This supports our conclusion that, although linearization provides a surrogate for the MSE objective, it yields nearly identical performance while being significantly faster. As expected, the randomized greedy algorithm, despite some variability due to the fixed~\(\epsilon\), follows the same general trend as the optimal, albeit with slightly reduced accuracy.

The curvature ($\alpha$) and submodularity ratio ($\gamma$) for the greedy selection shown in Fig.~\ref{fig:UAV_optimal_and_bound} were calculated via exhaustive search using their respective definitions provided in~\eqref{eq:curve} and~\eqref{eq:submo}. These values are then utilized to obtain the performance bound suggested by Theorem~\ref{thrm:one}. The values of $\alpha$ and $\gamma$ as functions of $\kappa$ are presented in Fig.~\ref{fig:bound}(a), while the corresponding calculated performance bound using these parameters is displayed in Fig.~\ref{fig:bound}(b). 
Notably, for our non-submodular MSE-based VIN performance function~\eqref{eq:mse}, the submodularity ratio $\gamma$ for the greedy selection over this specific set remains consistently equal to one for all values of $\kappa$, indicating submodular-like behavior. However, non-submodular functions can exhibit submodular behavior in certain regions or under specific conditions. For instance, the marginal gains for some particular subsets may exhibit submodular characteristics, even if the function globally violates submodularity.

The calculated bound for this greedy feature selection shows that the performance objective of the subset obtained by the greedy algorithm exceeds 80\% of the optimal performance objective. This result perfectly explains why, in Fig.~\ref{fig:UAV_optimal_and_bound}, the objective of the greedy selection closely matches that of the optimal action.

Fig.~\ref{fig:UAV_errors} presents the~\(L_1\),~\(L_2\), and~\(L_\infty\) norms of the translational error (in meters) between the ground-truth and estimated UAV trajectories, computed over the entire flight path. To this end, the estimated and ground-truth trajectories are first aligned using the procedure described in~\cite{zhang2018tutorial}. Then, at each time step, the Euclidean distance between corresponding aligned positions is calculated. The overall error is summarized using the~\(L_1\),~\(L_2\), and~\(L_\infty\) norms over the resulting sequence of distances.  
Each bar in the figure represents one of the selection methods from Fig.~\ref{fig:UAV_comparison}; the simple greedy method includes both the standard and low-rank variants, which yield identical results.

A closer look at the three panels yields several key observations. The linearization-based greedy algorithm achieves nearly identical performance to simple greedy in terms of both cumulative error (\(L_1\)) and root-mean-square error (\(L_2\)) across the full trajectory. Interestingly, the $L_{\infty}$ plot shows that linearized greedy achieves slightly lower worst-case deviation from ground-truth compared to simple greedy. While this does not imply improved robustness in a strict sense, it suggests that the linear surrogate may better regulate error accumulation in certain trajectories. In addition, randomized greedy shows comparable performance to both simple and linearization-based greedy in~\(L_1\) and~\(L_2\), and even outperforms them in terms of~\(L_\infty\). This indicates that a modest degree of random sampling can, at times, match or even exceed the performance of more computationally intensive methods. Finally, both uniform random and grid-based baselines remain the least accurate across all error metrics, reinforcing their limited suitability for precision-sensitive applications.

These insights refine our practical guidelines: when accuracy is the top priority and computational resources allow, simple (or fast low-rank) greedy remains the most reliable choice. For scenarios with tight runtime constraints, linearization-based greedy offers an attractive balance between accuracy and efficiency. Meanwhile, randomized greedy may be favored when both computational efficiency and robustness to worst-case errors are important, as it achieves competitive performance in~\(L_1\) and~\(L_2\), and attains the lowest~\(L_\infty\) error among all methods, making it suitable for scenarios where large deviations must be minimized. Simpler heuristics such as uniform random or grid-based selection should be reserved for the most constrained applications, where runtime dominates and estimation precision is less critical.

Fig.~\ref{fig:linearized_vs_LogDet_MH_01_easy} compares the performance of greedy feature selection when using two different objective functions: MSE and log-det. This comparison highlights how the choice of objective influences the quality of the solution in practice.
The left subfigure presents the Relative Translational Error (RTE) across various values of $\kappa$ for both objectives, while the right subfigure shows the Absolute Translational Error (ATE) for the same settings. Both RTE and ATE are computed following the definitions provided in~\cite{zhang2018tutorial}. In most cases, the MSE-based objective yields lower errors, consistently outperforming the log-det approach. This is expected, as MSE directly quantifies estimation error, whereas the log-det objective serves as a surrogate for overall uncertainty, capturing the volume of the estimation covariance ellipsoid but not directly minimizing specific error metrics like MSE.

\begin{figure}[t]
    \centering
        \begin{tabular}{c}
            \includegraphics[width=0.80 \columnwidth]{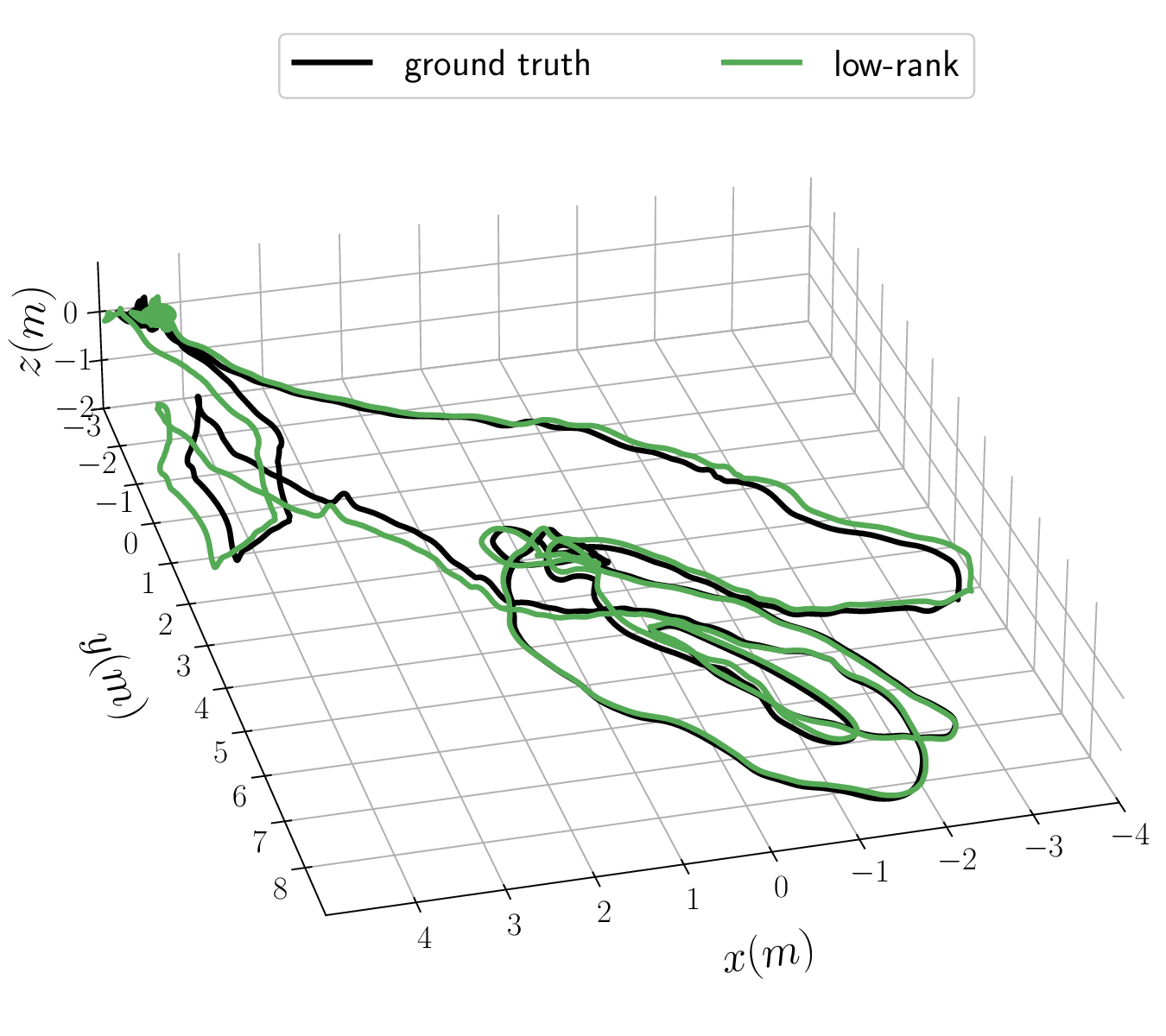}\\
            \small(a)
        \end{tabular}
        \begin{tabular}{c}
            \includegraphics[width=0.80 \columnwidth]{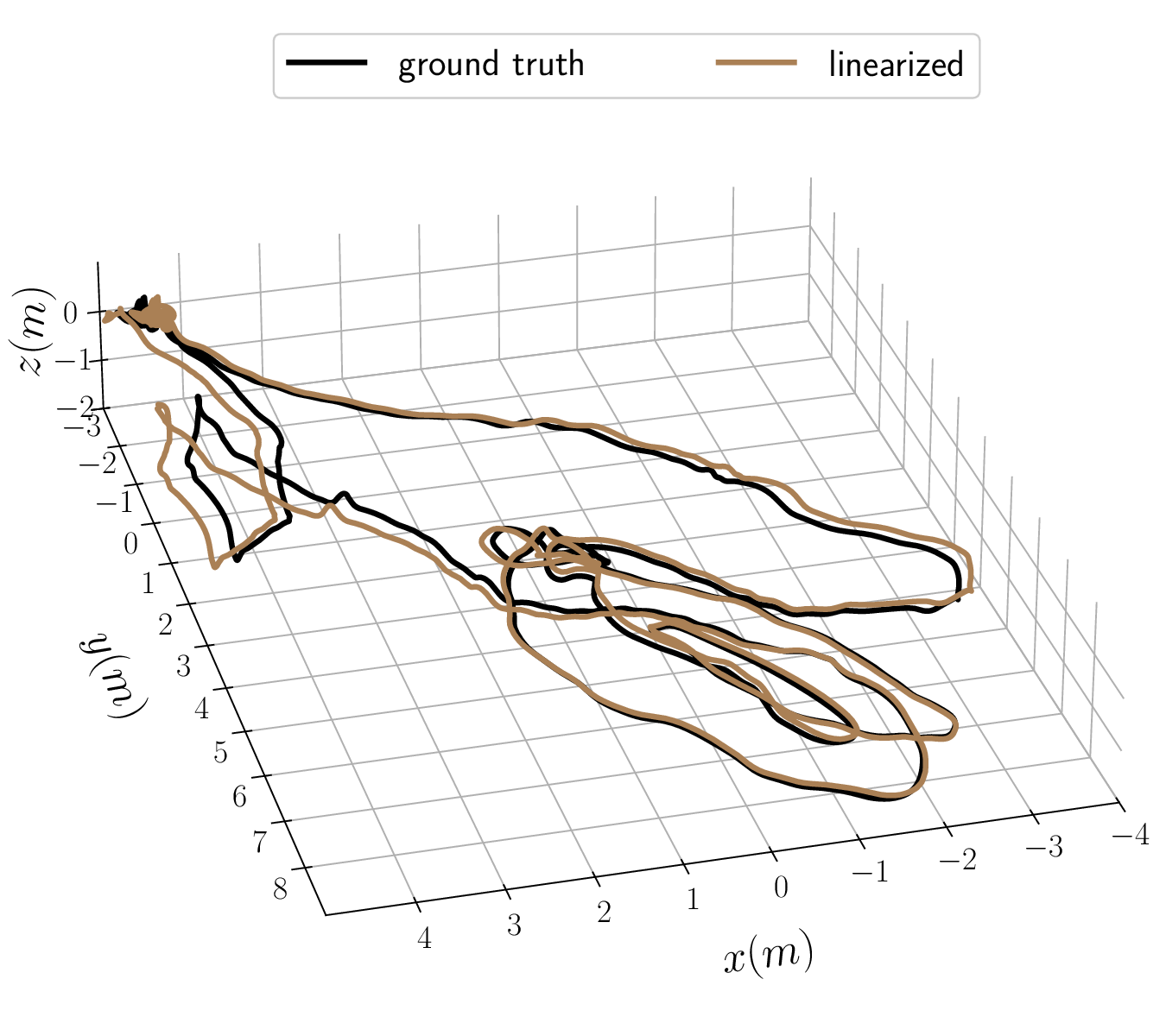}\\
            \small(b)
        \end{tabular}
        \caption {Visual comparison of the trajectories obtained using different selection techniques on the MH\_01\_easy sequence from the EuRoC dataset. The top subplot shows the estimated trajectory of the UAV using the fast low-rank greedy algorithm (in green), and the bottom subplot shows the trajectory estimated by the linearization-based greedy algorithm (Algorithm~\ref{alg::linear}), in brown. The ground-truth trajectory is shown in black in both plots for baseline comparison. Both methods follow the ground-truth trajectory closely and maintain a similar distance from it over the entire flight path, which explains their nearly identical translational error norms in Fig.~\ref{fig:UAV_errors}. For this experiment, the selection budget~\(\kappa\) is set to 70 and the prediction horizon~\(T\) is set to 13.}
    \label{fig:UAV_trajectories}%
\end{figure}

Importantly, the log-det function is known to be submodular~\cite{carlone2018attention}, which allows for the use of greedy algorithms with a worst-case approximation guarantee of $(1 - 1/e) \approx 63\%$ of the optimal. While this provides a strong theoretical guarantee, it does not always result in better empirical performance.
In contrast, the MSE objective is not submodular. However, this paper provides a constant-factor approximation guarantee for greedy selection under the MSE objective, despite its non-submodularity. As shown in Fig.~\ref{fig:bound}, our proposed approximation bound is tighter in practice, demonstrating that the greedy solution under the MSE objective performs within 80\% of the optimal solution even in the worst case. Notably, the bound developed in this paper is general and can also be applied to the log-det objective, potentially yielding tighter guarantees than the standard $(1 - 1/e)$ bound. 

To further analyze and visually assess the performance of the greedy algorithm, Fig.~\ref{fig:UAV_trajectories} presents the estimated UAV trajectories alongside the ground-truth over the entire flight path. For computational efficiency, the fast low-rank variant of greedy is used. The top subplot shows the estimated trajectory from the low-rank greedy method compared to the ground-truth, while the bottom subplot includes the trajectory obtained from the linearization-based greedy method. This method was included due to its comparable performance and significantly lower runtime, as demonstrated in earlier experiments.

The results shown in this figure support the earlier observations: both the fast low-rank greedy and linearization-based greedy methods follow the ground-truth trajectory closely, and the distance between their estimates and the ground-truth remains consistently small throughout the entire sequence. This explains why their translational error metrics (\(L_1\),~\(L_2\), and~\(L_\infty\)) in Fig.~\ref{fig:UAV_errors} are nearly identical. For this experiment, the selection budget~\(\kappa\) is set to 70.

\begin{table}[t]
    \centering
    \caption{EuRoC Results with 70 Features and MSE}
    \label{tab:euroc_50_feature_results}
    \begin{tabular}{lccc}
        \toprule
        \textbf{Sequence} & \textbf{Method} & \textbf{RTE [m]} & \textbf{ATE [m]} \\
        \midrule
        \multirow{6}{*}{MH\_01\_easy}
        & quality & $0.3843 \pm 0.076$ & $0.2323 \pm 0.114$ \\
        & random  & $0.4437 \pm 0.085$  & $0.2621 \pm 0.127$  \\
        & grid  & $0.4491 \pm 0.088$  & $0.2767 \pm 0.133$  \\
        & simple  & $0.3561 \pm 0.080$ & $0.2176 \pm 0.110$  \\
        & randomized  & $0.3613 \pm 0.064$ & $0.2187 \pm 0.105$  \\
        & linearized  & $0.3664 \pm 0.073$ & $0.2142 \pm 0.10$ \\
        \midrule
        \multirow{6}{*}{MH\_03\_medium}
        & quality  & $ 0.4753 \pm 0.197 $ & $ 0.2923 \pm 0.137 $  \\
        & random & $ 0.8184 \pm 0.284 $ & $ 0.3300 \pm 0.143 $  \\
        & grid  & $ 0.5366 \pm 0.195 $ & $ 0.3302 \pm 0.142 $  \\
        & simple  & $ 0.5392 \pm 0.203 $ & $ 0.3197 \pm 0.154 $  \\
        & randomized  & $ 0.4971 \pm 0.187 $ & $ 0.3066 \pm 0.135 $  \\
        & linearized  & $ 0.4825 \pm 0.192 $ & $ 0.2947 \pm 0.132 $  \\
        \midrule
        \multirow{6}{*}{MH\_05\_difficult}
        & quality & $ 0.7011 \pm 0.214 $ & $ 0.3150 \pm 0.075 $  \\
        & random  & $ 0.6655 \pm 0.172 $ & $ 0.3005 \pm 0.091 $  \\
        & grid  & $ 0.6482 \pm 0.166 $ & $ 0.3091 \pm 0.084 $  \\
        & simple  & $ 0.7204 \pm 0.220 $ & $ 0.3189 \pm 0.091 $  \\
        & randomized  & $ 0.6623 \pm 0.168 $ & $ 0.3173 \pm 0.078 $  \\
        & linearized  & $ 0.6606 \pm 0.200 $ & $ 0.3039 \pm 0.074 $  \\
        \midrule        
    \end{tabular}
    \label{table:sequences}
\end{table}
To exploit more deeply the EuRoC dataset for more meaningful results, Table~\ref{table:sequences} presents the performance of different feature selection strategies on three sequences from this benchmark, covering a range of difficulty levels. Challenging datasets such as MH\_05\_difficult involve fast motion and rapid viewpoint changes. All results are reported under a strict feature budget of~\(\kappa = 70\) features per frame. This constraint ensures fair comparison across methods and highlights the ability of each selection strategy to operate effectively under limited visual information.
\begin{figure}[tb]
    \centering
    \includegraphics[width=0.90\columnwidth]{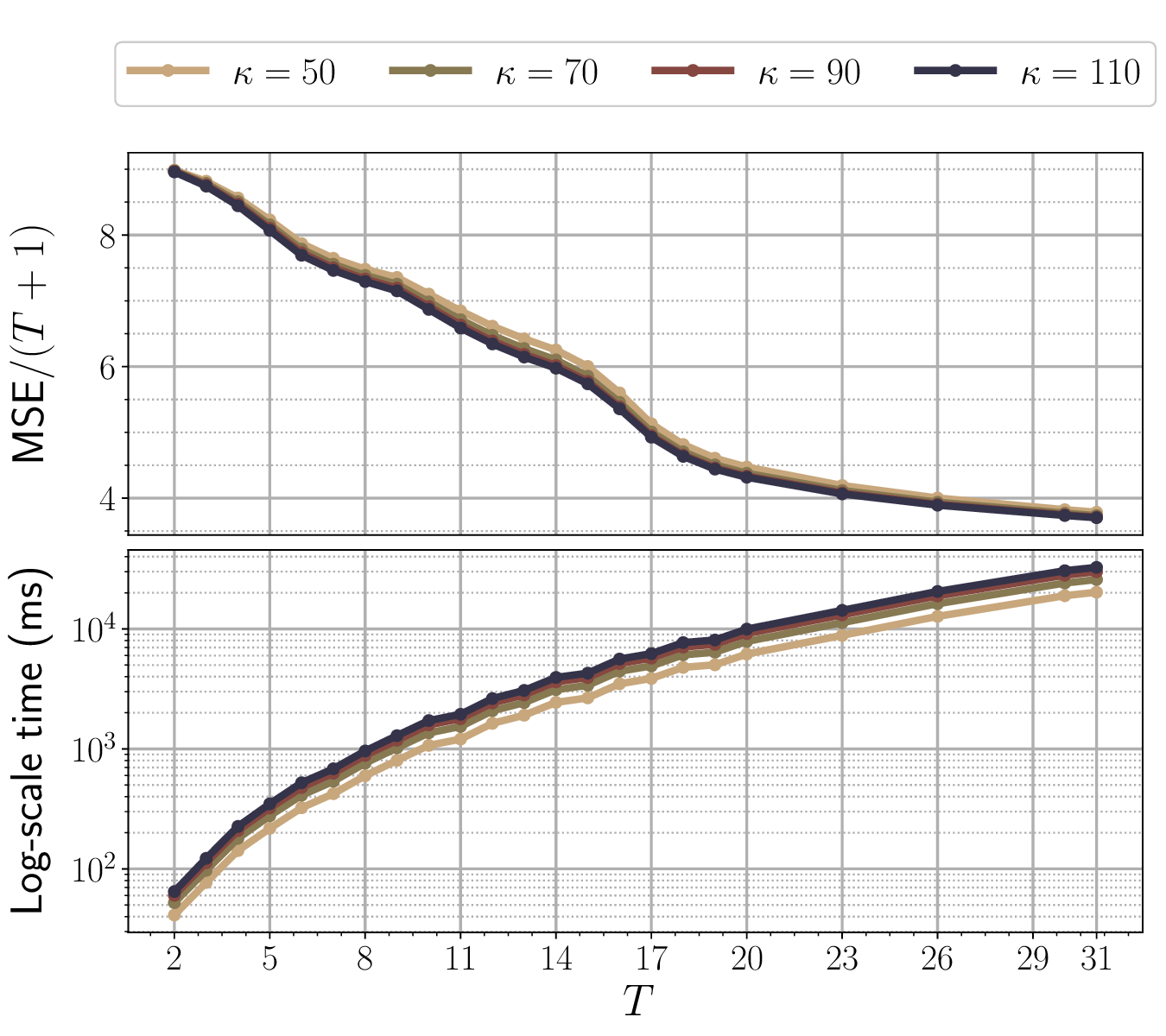}
    \caption{Effect of prediction horizon~\(T\) on estimation performance and computational cost for a single frame from the MH\_01\_easy sequence in the EuRoC dataset. The simple greedy algorithm is applied to the MSE-based objective. Top: Scaled MSE (trace of the inverse information matrix) as a function of horizon length~\(T\); performance improves with longer horizons until saturation. Bottom: Computational time (log scale) versus~\(T\); runtime increases with horizon length. Each curve represents a different feature budget~\(\kappa\).}
    \label{fig:horizon_length_analysis_MH_01_easy}
\end{figure}

We include a simple baseline method, denoted ``quality", which selects the~\(\kappa\) features with the highest Shi-Tomasi scores using the \texttt{goodFeaturesToTrack} function in OpenCV. This method is commonly used in VIN pipelines and only accounts for the appearance quality of the visual feature. Table~\ref{table:sequences} reports the Relative Translational Error (RTE) for all approaches, with the Absolute Translational Error (ATE) also included for completeness. As before, both RTE and ATE are computed following the definitions provided in~\cite{zhang2018tutorial}.

The results confirm the trends observed earlier. On the easier sequence (MH\_01\_easy), the simple greedy method achieves the lowest error, with randomized and linearization-based greedy methods performing nearly as well. The ``quality" baseline performs competitively on this sequence, which is expected given its moderate motion and stable feature tracks. However, on more dynamic sequences such as MH\_03\_medium and MH\_05\_difficult, methods that incorporate motion-aware criteria (such as greedy, randomized, and linearized) consistently outperform the quality-based, random, and grid-based baselines. 

In MH\_05\_difficult, where motion is more aggressive, the linearization-based method achieves the lowest ATE and RTE among all strategies, except for one baseline in each case. This result suggests that its surrogate objective, despite being an approximation of MSE, remains effective even in challenging settings. Randomized greedy also performs well across all sequences, striking a good balance between computational cost and accuracy, especially in medium-difficulty settings. 

These results further emphasize that appearance-based methods alone (e.g., ``quality") may suffice for simple scenes but quickly degrade in more dynamic environments. In contrast, strategies that account for motion dynamics, such as the proposed greedy-based methods, offer more consistent and robust performance across a range of operating conditions. 

Several existing feature selection approaches, such as~\cite{zhao2020good,zhang2021balancing}, employ a similar idea but are limited to a one-step prediction horizon (\(T = 1\)). To examine the effect of increasing the prediction horizon~\(T\) on both performance and computational cost, we conducted an experiment using the MH\_01\_easy sequence from the EuRoC dataset. In this experiment, we applied the simple greedy algorithm to an MSE-based objective function.

Fig.~\ref{fig:horizon_length_analysis_MH_01_easy} summarizes the results. The top plot shows the scaled MSE performance, measured as the trace of the inverse information matrix, as a function of the horizon~\(T\). Since the size of the information matrix increases with~\((T+1)\), we normalize the MSE by~\(1/(T+1)\) to enable a fair comparison across different horizon lengths. The bottom plot presents the corresponding computational time (on a logarithmic scale) versus~\(T\). Each curve corresponds to a different feature budget~\(\kappa\).

Two main observations can be made from the Fig.~\ref{fig:horizon_length_analysis_MH_01_easy}. First, for a fixed value of~\(T\), larger feature budgets~\(\kappa\) lead to better MSE performance but also higher computational cost. Second, increasing the horizon~\(T\) consistently improves performance up to a point. This improvement occurs because a longer horizon allows the algorithm to account for features that remain visible over extended periods, which tend to be more informative and reduce estimation error. However, once the horizon exceeds the visibility duration of these features, the performance gain saturates. Based on the figure, this saturation becomes evident when~\(T\) reaches around 23 or higher. Given that the EuRoC camera operates at 20 Hz, a horizon of~\(T = 23\) corresponds to approximately one second into the future.

We verified that the same qualitative scaling trends with respect to $\kappa$ and $T$ hold for all algorithmic variants (low-rank, randomized, and linearized greedy). Including every corresponding plot would require multiple subfigures beyond the page limit; therefore, we make the complete set of results available in the accompanying project repository\footnotemark[1].

The MSE in~Fig.~\ref{fig:horizon_length_analysis_MH_01_easy} is computed for a single frame based on predicted feature visibility and the corresponding information matrix. Therefore, the MSE continues to improve with increasing~\(T\) until this additional visibility information is fully exploited. In contrast, computational cost increases steadily with~\(T\), as shown in the bottom plot.

The MSE and computational cost reported in Fig.~\ref{fig:horizon_length_analysis_MH_01_easy} are computed for a single frame with the highest number of detected features in the sequence, using precomputed feature visibility and the corresponding information matrix. As the prediction horizon~\(T\) increases, the MSE improves until all available visibility information is fully exploited, while the computational cost steadily increases, as shown in the bottom plot. The purpose of this figure is to illustrate the trend in MSE and computational time as a function of the prediction horizon~\(T\), rather than to evaluate the real-time applicability of the selection method.

The next subsection details the real-time implementation of the proposed methods on a realistic QCar trajectory and presents corresponding runtime results.

\begin{figure}[tb]
    \centering
    \includegraphics[width=0.95\columnwidth]{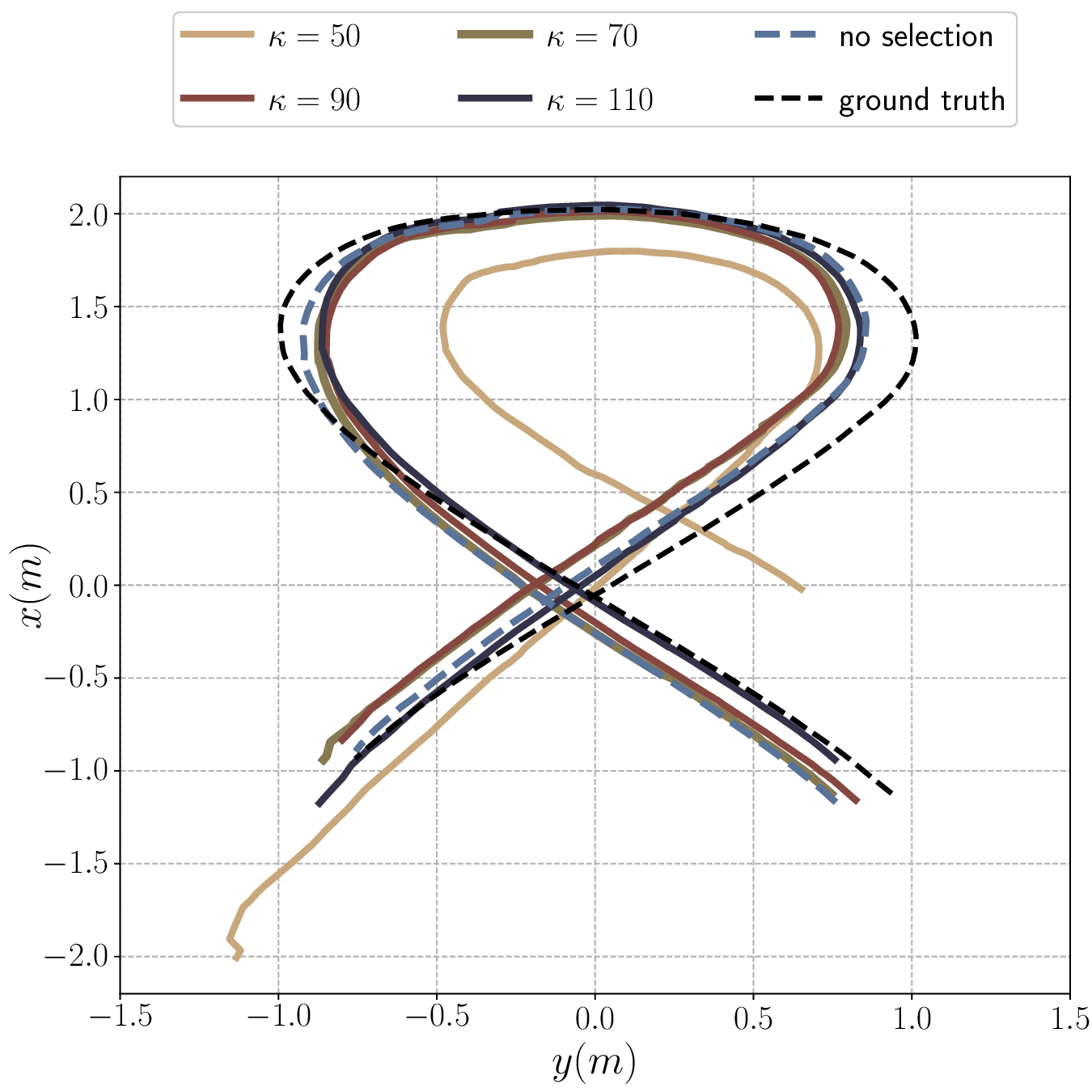}
    \caption{Comparison of estimated trajectories under different feature budgets, using MSE-based selection with the simple greedy algorithm. The ground-truth trajectory from motion capture and the trajectory obtained without feature selection (i.e., using all features) are shown for reference.}
    \label{fig:QCar_aligned_trajectories}
\end{figure}

\subsection{Experimental Case Study} \label{sec:experiments:QCar}
While the EuRoC dataset serves as a valuable benchmark for evaluating feature selection algorithms under diverse visual and motion conditions, it does not include control input data. This limitation prevents deployment of the full anticipation-based feature selection pipeline proposed in this work, which relies on forward simulation of robot dynamics for predicting future feature visibility. 

To overcome this limitation and evaluate the complete pipeline, we conducted a complementary experiment using the QCar platform, a 1/10-scale autonomous vehicle designed for academic research in robotics and self-driving technologies~\cite{QCar_Quanser}. This setup allows us to integrate control-aware future visibility prediction into our feature selection strategy.

The QCar is equipped with a 9-axis IMU and a ZED stereo camera. The IMU exhibits accelerometer and gyroscope noise standard deviations of~\(\sigma_a = 0.2\) and~\(\sigma_g = 0.2\), respectively, with corresponding bias random walks~\(\sigma_{a_b} = 0.001\) and~\(\sigma_{g_b} = 0.001\). IMU data is collected at 50~Hz, while the stereo camera captures synchronized images at a resolution of~\(1280 \times 720\) pixels at 30~Hz.

In the experiment, the QCar follows a cancer-ribbon shaped trajectory designed to introduce a range of headings. All sensor data, including IMU measurements, stereo images, and control commands, are recorded and formatted to align with the EuRoC dataset structure\footnote{\href{https://github.com/SiamiLab/NonsubmodularVisualAttention}{The GitHub repository provides access to the dataset, implementation code, and video demonstrations associated with this work.}}. Ground-truth trajectories are obtained using a network of eight motion capture cameras (OptiTrack Prime X) installed in a~\(6 \times 6~\text{m}^2\) indoor lab environment. The experimental setup, comprising the QCar, the executed trajectory, and a representative set of real-world objects, was presented earlier in Fig.~\ref{fig:diagram}.
\begin{figure}[tb]
    \centering
    \includegraphics[width=0.95\columnwidth]{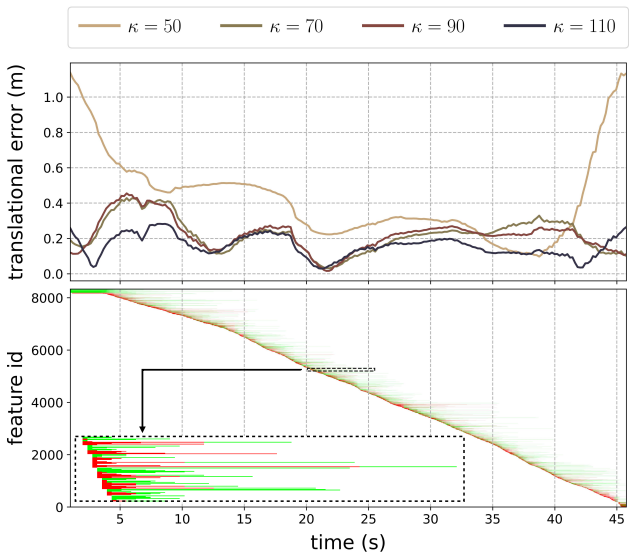}
    \caption{Top: Translational error over time between the estimated and ground-truth trajectories for different feature budgets, corresponding to Fig.~\ref{fig:QCar_aligned_trajectories}. Feature selection is performed using the simple greedy algorithm applied to the MSE-based objective. As expected, increasing the number of selected features reduces the estimation error. Bottom: Feature selection history over time for~\(\kappa = 90\). Each row represents a feature ID; green indicates selected features, red indicates observed but unselected features, and white indicates features not visible. The diagonal pattern reflects the short visibility window of each feature, and the consistent reselection of visible features demonstrates the expected behavior of the algorithm.}
    \label{fig:landmark_selection_in_time}
\end{figure}

We extended the open-source VINS-Mono framework~\cite{qin2018vins}, excluding loop closure (i.e., using full odometry), to incorporate our feature selection techniques. In the anticipation-based approach, we simulate future motion using a discrete-time kinematic bicycle model. This model propagates future states based on the current pose estimate from the VIO backend and recent control commands (linear velocity and steering angle). Specifically, at each prediction step~\(h\), the heading~\(\psi_h\) and position~\(t_h \in \mathbb{R}^3\) are updated according to:
\begin{align}\label{eq:bicycle}
    t_{h+1} &= t_h + u_h
    \begin{bmatrix} \cos\psi_h\\ \sin\psi_h\\0
    \end{bmatrix} \cdot \Delta t, \nonumber \\
    \psi_{h+1} &= \psi_h + \frac{u_h}{L} \tan\delta_h \cdot \Delta t,    
\end{align}
where~\(u_h\) and~\(\delta_h\) are the linear speed and steering angle at step~\(h\),~\(L\) is the vehicle’s wheelbase, and~\(\Delta t\) is the integration time step. The predicted trajectory~\(\{(t_h, \psi_h)\}\) allows forward visibility prediction and enables more informed feature selection. Although a constant velocity is assumed during propagation, small disturbances are naturally present in practice.

We adapted the open-source VINS-Mono framework~\cite{qin2018vins} to incorporate our feature selection methods. As in the EuRoC experiment, we use OpenCV’s Shi–Tomasi~\cite{shi1994good} for feature detection and Lucas–Kanade~\cite{lucas1981iterative} for tracking, extracting up to~\(N = 150\) features per frame. Our selection algorithms retain at most~\(\kappa \ll N\) features, which are then used by the back-end for state estimation.

\begin{figure}[tb]
    \centering
    \includegraphics[width=0.90\columnwidth]{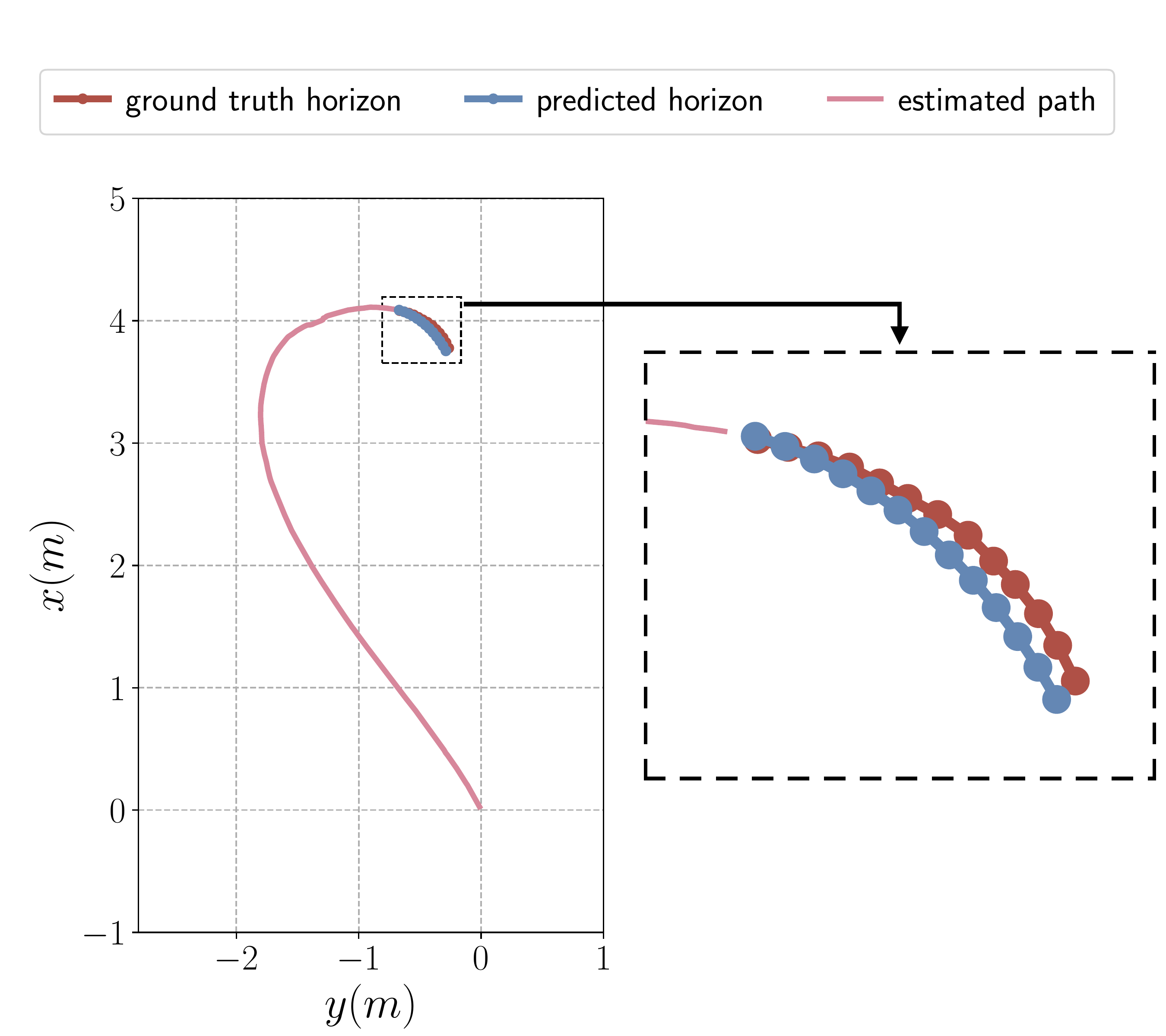}
    \caption{Comparison between the predicted future horizon generated by the kinematic bicycle model and the ground-truth trajectory. The right panel shows a zoomed-in view of the predicted horizon to highlight the alignment between the simulated and actual motion.} 
    \label{fig:horizon_generator_plot}
\end{figure}

As in the EuRoC experiments, the prediction horizon is set to~\(T = 13\).
To evaluate the effect of the feature budget on estimation accuracy, Fig.~\ref{fig:QCar_aligned_trajectories} shows estimated trajectories under different values of~\(\kappa\), the number of features selected for back-end optimization. The simple greedy algorithm is applied to our MSE-based objective for feature selection. For comparison, we also show the ground-truth trajectory from the motion capture system and the trajectory obtained without any feature selection (i.e., using all extracted features at all times), labeled as ``no selection". As expected, increasing the number of selected features leads to more accurate estimates, with trajectories that deviate less from the ground truth. Notably, the trajectory obtained under a moderate selection budget of $\kappa = 70$ achieves visual accuracy comparable to the no selection case, demonstrating the efficiency of the proposed selection method.

Quantitatively, the ATE computed for each trajectory following~\cite{zhang2018tutorial} further supports this discussion.  
The ``no selection'' case achieves \(0.1681 \pm 0.069\,\mathrm{m}\) while the greedy selection with \(\kappa\!=\!50,70,90,\) and \(110\) yields \(0.4966 \pm 0.248\), \(0.2394 \pm 0.094\), \(0.2400 \pm 0.091\), and \(0.1684 \pm 0.060\,\mathrm{m}\), respectively, confirming that budgets of \(\kappa \ge 70\) attain accuracy statistically comparable to using all features.

The top plot in Fig.~\ref{fig:landmark_selection_in_time} shows the translational error (vertical axis) between the estimated and ground-truth trajectories from Fig.~\ref{fig:QCar_aligned_trajectories}, plotted over time (horizontal axis). While the error fluctuates along the trajectory due to changes in robot motion and feature visibility, increasing the feature budget consistently reduces the overall estimation error, as expected.
The bottom plot depicts the selection history of visual features over time when selecting up to~\(\kappa = 90\) features per frame (from a maximum of 150). The vertical axis corresponds to feature IDs. White indicates the feature was not observed at that time, red indicates it was observed but not selected, and green indicates it was selected. The diagonal banding pattern reflects that features are typically visible only for short durations as the robot moves. Moreover, once a feature is selected, it tends to be consistently reselected in subsequent frames until it leaves the field of view, demonstrating the expected temporal consistency of the selection algorithm.

In addition, Fig.~\ref{fig:horizon_generator_plot} presents a snapshot of the algorithm at a specific point along the trajectory. At this moment, the predicted future positions using the bicycle model (discussed earlier) over a horizon of~\(T = 13\) are shown alongside the actual observed feature horizon. As illustrated, the predicted and actual horizons closely align at this randomly selected point. This figure serves as a demonstration of the accuracy of the motion prediction used in the anticipation-based feature selection.

Table~\ref{tab:qcar_runtime} presents the average per-frame runtime for different feature selection strategies in the QCar experiment under a fixed feature budget of~\(\kappa = 70\). The ``No Selection'' baseline uses all available features (up to 150 per frame) without any filtering, requiring no run time for selection but resulting in a significantly higher optimization time in Ceres. In contrast, all other methods apply selection before back-end optimization, reducing the number of features passed to Ceres and therefore improving its runtime. 
Notably, the linearized and randomized methods achieve the lowest total runtime, with the linearized approach being the most efficient overall. The simple and low-rank greedy methods, while more computationally intensive during selection, still significantly reduce the cost of the optimization stage.
\begin{table}[t]
    \centering
    \caption{ Runtime comparison for QCar experiment under $\kappa=70$ feature budget. Times are averaged per frame over the full trajectory.}
    \label{tab:qcar_runtime}
    \begin{tabular}{lccc}
    \toprule
    \textbf{Selector} & \textbf{Selection [ms]} & \textbf{Ceres [ms]} & \textbf{Total [ms]} \\
    \midrule
    No Selection      & --        & 24.75       & 24.75 \\
    Simple            & 82.01     & 13.14       & 95.14 \\
    Low-rank          & 72.25     & 13.85       & 86.10 \\
    Randomized        & 11.18     & 13.18       & 24.36 \\
    Linearized        & 7.75      & 13.61       & 21.36 \\
    \bottomrule
    \end{tabular}
\end{table}

It is important to note that the runtimes reported here differ from those in Figs.~\ref{fig:UAV_comparison} and~\ref{fig:horizon_length_analysis_MH_01_easy}, where the selection algorithms were evaluated intentionally on frames containing the maximum number of features (i.e., 150). In the QCar experiment, however, the actual number of candidate features per frame is often lower due to limited visibility and occlusion. Moreover, features selected in previous frames are directly carried over to the next frame if they remain visible. As a result, the number of new features that need to be selected is reduced, contributing to the lower selection times observed in this table.

In addition, the runtimes reported under the ``Selection" column include the total computation required per frame, including generating the forward prediction horizon, performing visibility checks, constructing the information matrices, and applying the selection algorithms. In contrast, the runtimes shown in Figs.~\ref{fig:UAV_comparison} and~\ref{fig:horizon_length_analysis_MH_01_easy} reflect only the time required to apply the selection algorithms, assuming all required inputs are already available. The lower computation times in the table, especially for the linearized method, highlight its suitability for real-time, on-the-fly applications.

To further support this experiment, we conducted an additional analysis to provide insight into the per-frame runtime of two contrasting selection methods: the computationally intensive yet effective \textit{simple greedy} method, and the efficient and fast \textit{linearized} method. Fig.~\ref{fig:time_analysis_over_frames} presents the per-frame runtime (vertical axis) of the two methods over a sequence of frames (horizontal axis) in the QCar experiment. As shown, the linearized method consistently achieves significantly lower runtime compared to the simple greedy method.

The observed variability in the runtime of the simple greedy method is expected. This variability arises due to fluctuations in the total number of visible features and, more importantly, the number of features that need to be newly selected at each frame, since some features are carried over from previous frames if they remain visible. These factors directly impact the selection time in greedy approaches. In contrast, the linearized method avoids iterative selection and instead computes a single score per feature based on the linear part of the objective, which is the main computational step, regardless of how many features are to be selected. It then selects the top-$\kappa$ features, making its runtime largely insensitive to the number of features selected in each frame.
These results highlight the suitability of the linearized method for real-time use. The experiment uses the same time horizon and feature budget as Table~\ref{tab:qcar_runtime}, with $T = 13$ and $\kappa = 70$.

The QCar experiments demonstrate the feasibility of deploying our full anticipation-based feature selection pipeline in a control-aware setting. Leveraging control inputs and a motion prediction model enables forward visibility checks, which inform the selection process. Experimental results validate that increasing the feature budget consistently enhances state estimation accuracy, and runtime evaluations reveal practical trade-offs among feature selection strategies. Overall, the proposed anticipation-based selection methods offer a promising mechanism for balancing feature usage, accuracy, and computational cost in visual-inertial estimation tasks, and may generalize to other multi-step estimation frameworks.
\begin{figure}[tb]
    \centering
    \includegraphics[width=0.90\columnwidth]{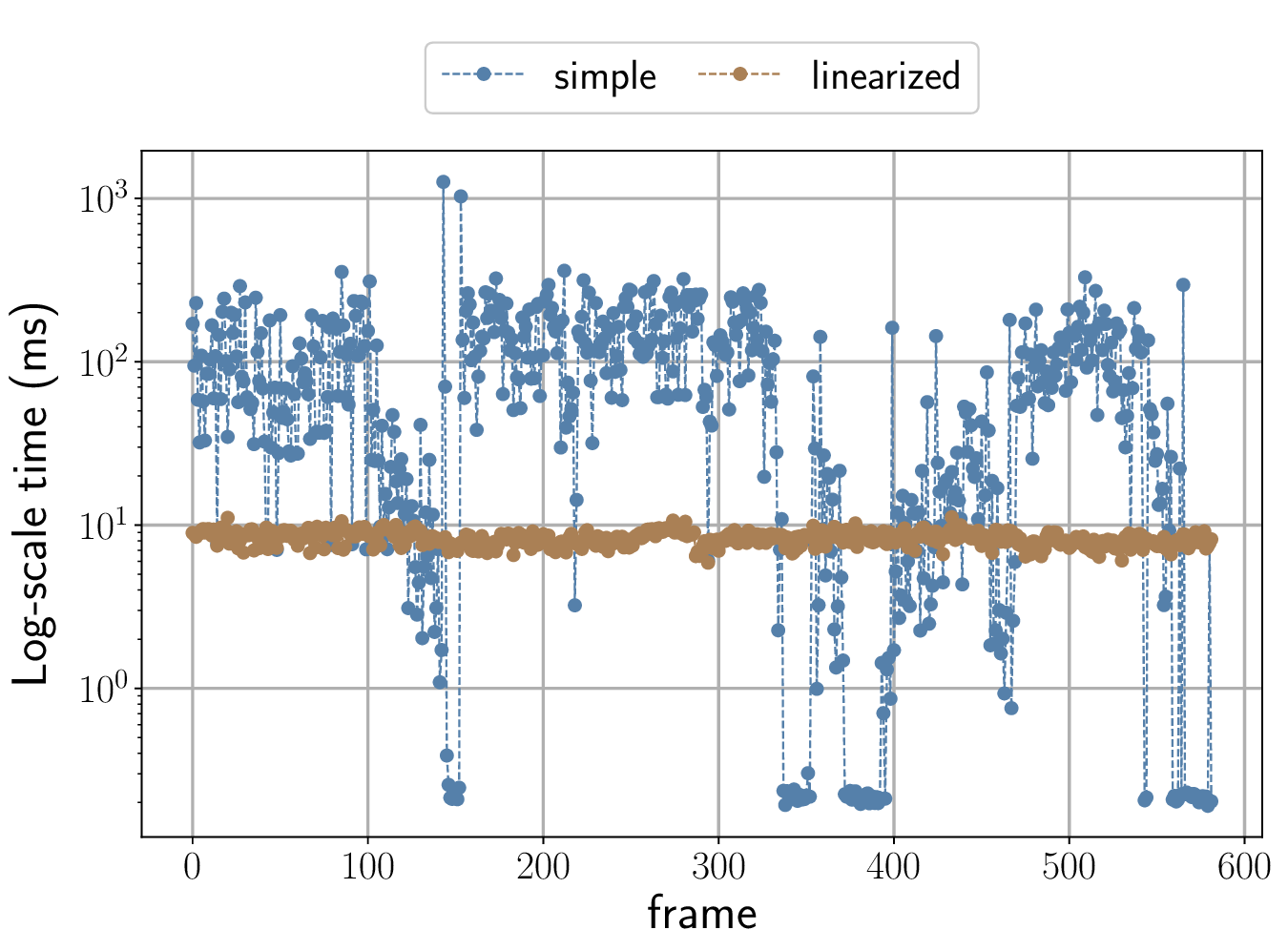}
    \caption{Per-frame runtime comparison between the simple greedy and linearized selection methods in the QCar experiment with time horizon $T = 13$ and feature budget $\kappa = 70$. The linearized method exhibits consistently lower and more stable runtimes, highlighting its suitability for real-time applications.}
    \label{fig:time_analysis_over_frames}
\end{figure}
\section{Concluding Remarks}
We have presented and rigorously evaluated four practical algorithms for task-aware feature selection in visual-inertial navigation across two experimental settings. On the EuRoC MAV benchmark, chosen for its standardized indoor sequences and wide adoption for fair comparison, we showed that our fast low-rank greedy variant matches the classic greedy method in accuracy while reducing computation time, and that our linearization-based approach operates in real time with performance close to the greedy baseline. On our custom QCar platform, designed to capture control-aware navigation dynamics, we evaluated how each method adapts under realistic motion and sensing conditions, demonstrating reliable performance without relying on idealized assumptions.

Despite these strengths, our study has two notable limitations. First, the theoretical approximation bounds based on submodularity ratio and curvature, while providing worst-case guarantees, can be conservative and may not fully predict empirical performance in all scenarios. Second, the randomized greedy variant introduces sampling variability and may require careful tuning of the randomization parameter to balance consistency and computational load.

These results confirm that, depending on application constraints, one can choose \emph{maximum accuracy} (fast low-rank greedy), \emph{extreme speed} (linearization-based greedy), or \emph{balanced performance} (randomized greedy), all with rigorous approximation guarantees. In particular, the real-time capability of the linearization-based method makes it especially well suited for on-board deployment on resource-constrained platforms without sacrificing estimation quality.

Our work underlines both the theoretical depth and the practical breadth of task-aware sensor selection, demonstrating its effectiveness from controlled benchmark scenarios to real-world robotic systems.
Looking forward, several promising directions remain:
\begin{itemize}
  \item \textit{Multi-agent extensions:} exploit cross-robot feature sharing and cooperative information gains in teams operating in dynamic environments.
  \item \textit{New sensor modalities:} generalize the modular Taylor-approximation framework to LiDAR, depth, or event-camera measurements under richer noise and motion models.
  \item \textit{Adaptive budgeting:} develop online schemes to adjust the selection budget in response to scene complexity and control objectives.
  \item \textit{Tighter theoretical bounds:} incorporate higher-order approximations or stochastic batch selection to refine performance guarantees.
    \item \textit{Matrix-free strategies:} investigate ultra-lightweight selection schemes that eliminate matrix operations entirely, enabling deployment on resource-constrained platforms requiring extreme-speed decisions.

\end{itemize}
Addressing these open questions will further enhance the applicability and impact of task-aware sensor selection across diverse robotic applications.

\appendix
\subsection{Background and Definitions}
This section presents background material and definitions that support the theoretical developments in the paper.
\begin{definition}[Closed Function]
    Let~\(f : \mathbb{R}^{n} \to \mathbb{R} \cup \{+\infty\}\) be an extended-real-valued function, and define its (effective) domain by $\operatorname{dom} f \;=\; \bigl\{x \in \mathbb{R}^{n} \mid f(x) < +\infty\bigr\}$. The function~\(f\) is said to be \emph{closed} if, for every~\(\beta \in \mathbb{R}\), the sublevel set $\bigl\{x \in \operatorname{dom} f \mid f(x) \le \beta\bigr\}$, is a closed subset of~\(\mathbb{R}^{n}\).
\end{definition}
\vspace{-1mm}
\begin{remark}\label{rem:closed}
    If~\(f\) is continuous on a closed domain, then~\(f\) is closed.  
    When~\(\operatorname{dom} f\) is open,~\(f\) is closed if and only if~\(f(x_k) \to +\infty\) for every sequence~\(\{x_k\}\) that approaches a boundary point of~\(\operatorname{dom} f\) {\cite{boyd2004convex}}.
\end{remark}
\vspace{-1mm}
\subsection{Missing Proofs}\label{sec:missing_proofs}
This section presents the proofs that complement the theoretical findings discussed in the main text of the paper.

\subsubsection{Proof of Theorem~\ref{thrm:one}}
Recall that $\tr \Omega_{\mathsf{S}}^{-1} = \sum_{i=1}^{n} \frac{1}{\lambda_i(\Omega_{\mathsf{S}})}$, where $n = 9T + 9$. Let $\Omega_{\mathsf{U}} = \Omega_{\emptyset} + \sum_{l \in \mathsf{U}} \Delta_l$ represent the information matrix considering all extracted features at the given time frame.

In the first part of the proof, we establish a lower bound for the submodularity ratio $\gamma$. Referring to Definition~\ref{def:sub-ratio}, this can be achieved by finding a lower bound for the fraction
\begin{equation}\label{eq:sub_frac}
    \frac{\sum_{l\in \mathsf{R} \setminus \mathsf{S}} f_l(\mathsf{S})}{f_{\mathsf{R}}(\mathcal{S})} = \frac{\sum_{l \in \mathsf{R} \setminus \mathsf{S}} \left[ f(\mathsf{S} \cup \{l\}) - f(\mathsf{S}) \right]}{f(\mathsf{R} \cup \mathsf{S}) - f(\mathsf{S})}.
\end{equation}

For the numerator of this fraction, we have
\begin{align}\label{eq:414}
     & \sum_{l \in \mathsf{R} \setminus \mathsf{S}} \left[f(\mathsf{S} \cup \{l\}) - f(\mathsf{S}) \right] \nonumber \\
     & = \sum_{l \in \mathsf{R} \setminus \mathsf{S}} \Big[\sum_{i=1}^{n}\frac{1}{\lambda_i(\Omega_{\mathsf{S}})} - \frac{1}{\lambda_i(\Omega_{\mathsf{S} \cup \{l\}})}\Big] \nonumber \\
     & \geq \frac{\sum_{l \in \mathsf{R} \setminus \mathsf{S}} \sum_{i=1}^{n} \lambda_i(\Omega_{\mathsf{S} \cup \{l\}}) - \lambda_i(\Omega_{\mathsf{S}})}{\lambda_{\max}^2(\Omega_{\mathsf{U}})} \nonumber \\
     & = \frac{\sum_{l \in \mathsf{R} \setminus \mathsf{S}}  \left[ \tr \, \Omega_{\mathsf{S} \cup \{l\}}  - \tr \, \Omega_{\mathsf{S}} \right]}{\lambda_{\max}^2(\Omega_{\mathsf{U}})} \nonumber \\
     & = \lambda_{\max}^{-2}(\Omega_{\mathsf{U}}) \sum_{l \in \mathsf{R} \setminus \mathsf{S}}  \tr \, \Delta_l \geq \delta \cdot | \mathsf{R} \setminus \mathsf{S}| \cdot \lambda_{\max}^{-2}(\Omega_{\mathsf{U}}),
\end{align}
where $\delta \triangleq \min_{j \in \mathsf{U}} \tr \Delta_j$.
Analogously, for the denominator, we have
\begin{align}\label{eq:428} 
    f(\mathsf{R} \cup \mathsf{S}) - f(\mathsf{S}) 
    & = \sum_{i=1}^{n}\frac{1}{\lambda_i(\Omega_{\mathsf{S}})} - \sum_{j=1}^{n} \frac{1}{\lambda_j(\Omega_{\mathsf{S} \cup \mathsf{R}})} \nonumber \\
    & \stackrel{\text{(i)}}{\leq} \sum_{i = n - |\mathsf{R} \setminus \mathsf{S}| + 1}^{n} \frac{1}{\lambda_i(\Omega_{\mathsf{S}})} - \sum_{j=1}^{|\mathsf{R} \setminus \mathsf{S}|} \frac{1}{\lambda_j(\Omega_{\mathsf{S} \cup \mathsf{R}})} \nonumber \\
    & \leq |\mathsf{R} \setminus \mathsf{S}| \left( \frac{1}{\lambda_{\min}(\Omega_{\emptyset})} - \frac{1}{\lambda_{\max}(\Omega_{\mathsf{U}})} \right) \nonumber \\
    & = |\mathsf{R} \setminus \mathsf{S}| \, \frac{\lambda_{\max}(\Omega_{\mathsf{U}}) - \lambda_{\min}(\Omega_{\emptyset})}{\lambda_{\min}(\Omega_{\emptyset}) \cdot \lambda_{\max}(\Omega_{\mathsf{U}})},
\end{align}
where inequality (i) holds due to the interlacing inequality of eigenvalues.
The combination of~\eqref{eq:414} and~\eqref{eq:428} provides the first part of the proof as
\begin{equation*}
    \gamma \geq \frac{\delta \cdot \lambda_{\min} (\Omega_{\emptyset})}{\lambda_{\max}(\Omega_{\mathsf{U}}) \cdot (\lambda_{\max}(\Omega_{\mathsf{U}}) - \lambda_{\min} (\Omega_{\emptyset}))} = \underline{\gamma}.
\end{equation*}

In the final part of the proof, we obtain the bound for the curvature. To do this, we determine the bound for $1 - \alpha$. Referring to~\eqref{eq:curve}, this task is equivalent to establishing a lower bound for
\begin{equation}\label{eq:curve_frac}
    \frac{f(\mathsf{S} \cup \mathsf{R}) - f(\mathsf{S} \setminus \{l\} \cup \mathsf{R})}{f(\mathsf{S}) - f(\mathsf{S} \setminus \{l\})}.
\end{equation}

For the numerator of this fraction, we have
\begin{align}\label{eq:456}
    & f(\mathsf{S} \cup \mathsf{R}) - f(\mathsf{S} \setminus \{l\} \cup \mathsf{R}) \nonumber \\
    & = \sum_{i = 1}^{n} \frac{1}{\lambda_i(\Omega_{\mathsf{S} \setminus \{l\} \cup \mathsf{R}})} - \sum_{j = 1}^{n} \frac{1}{\lambda_j(\Omega_{\mathsf{S} \cup \mathsf{R}})} \geq \delta \, \lambda_{\max}^{-2}(\Omega_{\mathsf{U}}),
\end{align}
where we use a similar derivation as in~\eqref{eq:414}.
Let us now consider the denominator,
\begin{equation*}
    f(\mathsf{S}) - f(\mathsf{S} \setminus \{l\}) = \sum_{i=1}^{n} \frac{1}{\lambda_i(\Omega_{\mathsf{S} \setminus \{l\}})} - \sum_{j=1}^{n} \frac{1}{\lambda_j(\Omega_{\mathsf{S}})}.
\end{equation*}
This can be upper-bounded using the \emph{Cauchy interlacing inequality} as follows:
\begin{equation}\label{eq:468}
    \frac{1}{\lambda_{\min}(\Omega_{\mathsf{S} \setminus \{l\}})} - \frac{1}{\lambda_{\max}(\Omega_{\mathsf{S}})} \leq \frac{\lambda_{\max}(\Omega_{\mathsf{U}}) - \lambda_{\min}(\Omega_{\emptyset})}{\lambda_{\min}(\Omega_{\emptyset}) \cdot \lambda_{\max}(\Omega_{\mathsf{U}})}.
\end{equation}

Combining~\eqref{eq:456} and~\eqref{eq:468} yields a lower bound on~\(1-\alpha\):
\begin{align*}
  1-\alpha
  &=\frac{f(\mathsf{S}\cup\mathsf{R})-f(\mathsf{S}\setminus\{l\}\cup\mathsf{R})}
         {f(\mathsf{S})-f(\mathsf{S}\setminus\{l\})}\\[2pt]
  &\ge
    \frac{\delta\,\lambda_{\min}\bigl(\Omega_{\emptyset}\bigr)}
         {\lambda_{\max}\bigl(\Omega_{\mathsf{U}}\bigr)\,
          \bigl(\lambda_{\max}\bigl(\Omega_{\mathsf{U}}\bigr)-\lambda_{\min}\bigl(\Omega_{\emptyset}\bigr)\bigr)}.
\end{align*}
Consequently,
\begin{equation}
  \alpha\;\le\;
  \overline{\alpha}
  \;=\;
  1-
  \frac{\delta\,\lambda_{\min}\bigl(\Omega_{\emptyset}\bigr)}
       {\lambda_{\max}\bigl(\Omega_{\mathsf{U}}\bigr)\,
        \bigl(\lambda_{\max}\bigl(\Omega_{\mathsf{U}}\bigr)-\lambda_{\min}\bigl(\Omega_{\emptyset}\bigr)\bigr)},
\end{equation}
which completes the proof.

\subsubsection{Proof of Proposition~\ref{prop:differentiable}}
Define~\(\phi(\lambda) = \sum_{i=1}^{n} 1/\lambda_i\) for all~\(\lambda \in \mathbb{R}^{n}_{>0}\).  
The function~\(\phi\) is continuous, symmetric, and convex.  
Its domain is open, and~\(\phi(\lambda) \to +\infty\) as~\(\min_{i}\lambda_i \to 0^{+}\); by Remark~\ref{rem:closed},~\(\phi\) is therefore closed.  
Because~\(\phi\) is smooth on~\(\mathbb{R}^{n}_{>0}\), it is differentiable at every~\(\lambda \in \mathbb{R}^{n}_{>0}\).  
Lemma~\ref{lem:spec_diff} then implies that~\(\rho = \phi \circ \lambda\) is differentiable at every~\(\Omega \in \mathbb{S}_{++}^{n}\).

\subsubsection{Proof of Proposition~\ref{prop:quadratic_error}}
Set $A \,\triangleq\, \Omega_{\emptyset}$, and $B \,\triangleq\, \Delta_{\mathsf S}$, so that $A\succ 0$ and $B \succeq 0$.  Under the assumption
$\epsilon\,\|A^{-1}B\|_{2} < 1$, we expand $A + \epsilon\,B \,=\, A \bigl(I + \epsilon\,A^{-1}B\bigr)$, and apply the Neumann‐series to $(I + \epsilon\,A^{-1}B)^{-1}$:
\[
\bigl(I + \epsilon\,A^{-1}B\bigr)^{-1}
  \;=\; I 
  \;-\; \epsilon\,A^{-1}B 
  \;+\; \epsilon^{2}(A^{-1}B)^{2}
  \;+\;\mathcal{O}(\epsilon^{3}).
\]
Multiplying on the right by $A^{-1}$ gives
\begin{align*}
    (A + \epsilon\,B)^{-1} \;=&\; A^{-1} \;-\; \epsilon\,A^{-1}B\,A^{-1} \nonumber \\
    & \;+\; \epsilon^{2}\,A^{-1}B\,A^{-1}B\,A^{-1} \;+\;\mathcal{O}(\epsilon^{3}).
\end{align*}
Taking the trace yields
\begin{align*}
    \tr\bigl((A +  \epsilon\,B)^{-1}&\bigr) \;=\; \tr(A^{-1}) \;-\; \epsilon\,\tr(A^{-1}BA^{-1}) \nonumber \\ &+\; \epsilon^{2}\,\tr\bigl(A^{-1}BA^{-1}BA^{-1}\bigr) \;+\;\mathcal{O}(\epsilon^{3}).
\end{align*}
Since $\rho(X)=\tr(X^{-1})$, we obtain
\begin{align*}
  \rho(A + \epsilon\,B)
  \; = & \;  \rho(A) \;-\; \epsilon\,\tr\bigl(A^{-2}B\bigr) \nonumber \\
  & +\; \epsilon^{2}\,\tr\bigl(A^{-1}BA^{-1}BA^{-1}\bigr) \;+\;\mathcal{O}(\epsilon^{3}).  
\end{align*}
Notice that
\begin{align*}
    \tr\bigl(A^{-2}B\bigr) & = \tr\Bigl(\Omega_{\emptyset}^{-2}\sum_{\,l\in\mathsf S}\Delta_{l}\Bigr) = \sum_{\,l\in\mathsf S}\tr\bigl(\Omega_{\emptyset}^{-2}\Delta_{l}\bigr) \nonumber \\
    & = \sum_{\,l\in\mathsf S} r_{l}\bigl(\Omega_{\emptyset}^{2}\bigr).
\end{align*}
This proves equation~\eqref{eq:quadratic_expansion}.  

To bound the second‐order term, define $M \,\triangleq\, A^{-1/2}\,B\,A^{-1/2}$, so that $M\succeq 0$ and $B = A^{1/2}\,M\,A^{1/2}$. Then
\begin{align*}
    & \tr\bigl(A^{-1}BA^{-1}BA^{-1}\bigr) \nonumber \\
    &= \tr\bigl(A^{-1/2}\,(A^{-1/2}BA^{-1/2})\,(A^{-1/2}BA^{-1/2})\,A^{-1/2}\bigr) \nonumber \\
    &= \tr\bigl(M^{2}\,A^{-1}\bigr).
\end{align*}
Because $A^{-1}\succeq 0$, we have
\[
    \tr\bigl(M^{2}A^{-1}\bigr) \;\le\; \|A^{-1}\|_{2}\,\tr\bigl(M^{2}\bigr) \;=\; \|A^{-1}\|_{2}\,\|M\|_{F}^{2}.
\]
Moreover,
\begin{align*}
    \|M\|_{F} & = \bigl\|A^{-1/2}\,B\,A^{-1/2}\bigr\|_{F} \nonumber \\
    & \le\; \bigl\|A^{-1/2}\bigr\|_{2}^{2}\,\|B\|_{F} \;=\;  \|A^{-1}\|_{2}\,\|\Delta_{\mathsf S}\|_{F}. 
\end{align*}
Combining gives
\begin{align*}
    \tr\bigl(A^{-1}BA^{-1}BA^{-1}\bigr) \;& \leq \;  \|A^{-1}\|_{2}\,\bigl(\|A^{-1}\|_{2}\,\|\Delta_{\mathsf S}\|_{F}\bigr)^{2} \nonumber \\
    & =\;  \|A^{-1}\|_{2}^{3}\,\|\Delta_{\mathsf S}\|_{F}^{2}.
\end{align*}
Since $\|\Delta_{\mathsf S}\|_{F}\le\sum_{l\in\mathsf S}\|\Delta_{l}\|_{F}\le \kappa\,\zeta$, we obtain the bound in~\eqref{eq:quadratic_bound}.  This completes the proof.

\bibliography{bibliography}

\begin{thebibliography}{10}
\providecommand{\url}[1]{#1}
\csname url@samestyle\endcsname
\providecommand{\newblock}{\relax}
\providecommand{\bibinfo}[2]{#2}
\providecommand{\BIBentrySTDinterwordspacing}{\spaceskip=0pt\relax}
\providecommand{\BIBentryALTinterwordstretchfactor}{4}
\providecommand{\BIBentryALTinterwordspacing}{\spaceskip=\fontdimen2\font plus
\BIBentryALTinterwordstretchfactor\fontdimen3\font minus
  \fontdimen4\font\relax}
\providecommand{\BIBforeignlanguage}[2]{{%
\expandafter\ifx\csname l@#1\endcsname\relax
\typeout{** WARNING: IEEEtran.bst: No hyphenation pattern has been}%
\typeout{** loaded for the language `#1'. Using the pattern for}%
\typeout{** the default language instead.}%
\else
\language=\csname l@#1\endcsname
\fi
#2}}
\providecommand{\BIBdecl}{\relax}
\BIBdecl

\bibitem{yang2018grand}
G.-Z. Yang, J.~Bellingham, P.~E. Dupont, P.~Fischer, L.~Floridi, R.~Full,
  N.~Jacobstein, V.~Kumar, M.~McNutt, R.~Merrifield \emph{et~al.}, ``The grand
  challenges of science robotics,'' \emph{Science Robotics}, vol.~3, no.~14, p.
  eaar7650, 2018.

\bibitem{sala2006landmark}
P.~Sala, R.~Sim, A.~Shokoufandeh, and S.~Dickinson, ``Landmark selection for
  vision-based navigation,'' \emph{IEEE Transactions on Robotics}, vol.~22,
  no.~2, pp. 334--349, 2006.

\bibitem{lerner2007landmark}
R.~Lerner, E.~Rivlin, and I.~Shimshoni, ``Landmark selection for task-oriented
  navigation,'' \emph{IEEE Transactions on Robotics}, vol.~23, no.~3, pp.
  494--505, 2007.

\bibitem{mu2017two}
B.~Mu, L.~Paull, A.-A. Agha-Mohammadi, J.~J. Leonard, and J.~P. How,
  ``Two-stage focused inference for resource-constrained minimal collision
  navigation,'' \emph{IEEE Transactions on Robotics}, vol.~33, no.~1, pp.
  124--140, 2017.

\bibitem{carlone2018attention}
L.~Carlone and S.~Karaman, ``Attention and anticipation in fast visual-inertial
  navigation,'' \emph{IEEE Transactions on Robotics}, vol.~35, no.~1, pp.
  1--20, 2018.

\bibitem{mousavi2020estimation}
H.~K. Mousavi and N.~Motee, ``Estimation with fast feature selection in robot
  visual navigation,'' \emph{IEEE Robotics and Automation Letters}, vol.~5,
  no.~2, pp. 3572--3579, 2020.

\bibitem{pandey2024scalable}
V.~Pandey, A.~Amini, G.~Liu, U.~Topcu, Q.~Sun, K.~Daniilidis, and N.~Motee,
  ``Scalable networked feature selection with randomized algorithm for robot
  navigation,'' \emph{arXiv preprint arXiv:2403.12279}, 2024.

\bibitem{davison2005active}
A.~J. Davison, ``Active search for real-time vision,'' in \emph{Proceedings of
  the 10th IEEE International Conference on Computer Vision (ICCV)},
  vol.~1.\hskip 1em plus 0.5em minus 0.4em\relax IEEE, 2005, pp. 66--73.

\bibitem{strasdat2009landmark}
H.~Strasdat, C.~Stachniss, and W.~Burgard, ``Which landmark is useful? learning
  selection policies for navigation in unknown environments,'' in
  \emph{Proceedings of the 2009 IEEE International Conference on Robotics and
  Automation (ICRA)}.\hskip 1em plus 0.5em minus 0.4em\relax IEEE, 2009, pp.
  1410--1415.

\bibitem{siami2017growing}
M.~Siami and N.~Motee, ``Growing linear dynamical networks endowed by spectral
  systemic performance measures,'' \emph{IEEE Transactions on Automatic
  Control}, vol.~63, no.~7, pp. 2091--2106, 2017.

\bibitem{siami2018network}
------, ``Network abstraction with guaranteed performance bounds,'' \emph{IEEE
  Transactions on Automatic Control}, vol.~63, no.~10, pp. 3301--3316, 2018.

\bibitem{siami2020deterministic}
M.~Siami, A.~Olshevsky, and A.~Jadbabaie, ``Deterministic and randomized
  actuator scheduling with guaranteed performance bounds,'' \emph{IEEE
  Transactions on Automatic Control}, vol.~66, no.~4, pp. 1686--1701, 2020.

\bibitem{siami2020separation}
M.~Siami and A.~Jadbabaie, ``A separation theorem for joint sensor and actuator
  scheduling with guaranteed performance bounds,'' \emph{Automatica}, vol. 119,
  p. 109054, 2020.

\bibitem{nemhauser1978analysis}
G.~L. Nemhauser, L.~A. Wolsey, and M.~L. Fisher, ``An analysis of
  approximations for maximizing submodular set functions—i,''
  \emph{Mathematical Programming}, vol.~14, pp. 265--294, 1978.

\bibitem{badanidiyuru2014fast}
A.~Badanidiyuru and J.~Vondr{\'a}k, ``Fast algorithms for maximizing submodular
  functions,'' in \emph{Proceedings of the 25th Annual ACM-SIAM Symposium on
  Discrete Algorithms (SODA)}.\hskip 1em plus 0.5em minus 0.4em\relax SIAM,
  2014, pp. 1497--1514.

\bibitem{mirzasoleiman2015lazier}
B.~Mirzasoleiman, A.~Badanidiyuru, A.~Karbasi, J.~Vondr{\'a}k, and A.~Krause,
  ``Lazier than lazy greedy,'' in \emph{Proceedings of the AAAI Conference on
  Artificial Intelligence}, vol.~29, no.~1, 2015.

\bibitem{buchbinder2024deterministic}
N.~Buchbinder and M.~Feldman, ``Deterministic algorithm and faster algorithm
  for submodular maximization subject to a matroid constraint,'' \emph{arXiv
  preprint arXiv:2408.03583}, 2024.

\bibitem{das2011submodular}
A.~Das and D.~Kempe, ``Submodular meets spectral: Greedy algorithms for subset
  selection, sparse approximation and dictionary selection,'' in
  \emph{Proceedings of the 28th International Conference on Machine Learning
  (ICML)}.\hskip 1em plus 0.5em minus 0.4em\relax Omnipress, 2011, pp.
  1057--1064.

\bibitem{summers2019performance}
T.~Summers and M.~Kamgarpour, ``Performance guarantees for greedy maximization
  of non-submodular controllability metrics,'' in \emph{Proceedings of the 18th
  European Control Conference (ECC)}.\hskip 1em plus 0.5em minus 0.4em\relax
  IEEE, 2019, pp. 2796--2801.

\bibitem{hashemi2018randomized}
A.~Hashemi, M.~Ghasemi, H.~Vikalo, and U.~Topcu, ``A randomized greedy
  algorithm for near-optimal sensor scheduling in large-scale sensor
  networks,'' in \emph{Proceedings of the 2018 American Control Conference
  (ACC)}.\hskip 1em plus 0.5em minus 0.4em\relax IEEE, 2018, pp. 1027--1032.

\bibitem{zhao2020good}
Y.~Zhao and P.~A. Vela, ``Good feature matching: Toward accurate, robust
  vo/vslam with low latency,'' \emph{IEEE Transactions on Robotics}, vol.~36,
  no.~3, pp. 657--675, 2020.

\bibitem{hesch2014camera}
J.~A. Hesch, D.~G. Kottas, S.~L. Bowman, and S.~I. Roumeliotis,
  ``Camera-imu-based localization: Observability analysis and consistency
  improvement,'' \emph{The International Journal of Robotics Research},
  vol.~33, no.~1, pp. 182--201, 2014.

\bibitem{leutenegger2015keyframe}
S.~Leutenegger, S.~Lynen, M.~Bosse, R.~Siegwart, and P.~Furgale,
  ``Keyframe-based visual--inertial odometry using nonlinear optimization,''
  \emph{The International Journal of Robotics Research}, vol.~34, no.~3, pp.
  314--334, 2015.

\bibitem{sibley2010sliding}
G.~Sibley, L.~Matthies, and G.~Sukhatme, ``Sliding window filter with
  application to planetary landing,'' \emph{Journal of Field Robotics},
  vol.~27, no.~5, pp. 587--608, 2010.

\bibitem{forster2015imu}
C.~Forster, L.~Carlone, F.~Dellaert, and D.~Scaramuzza, ``Imu preintegration on
  manifold for efficient visual-inertial maximum-a-posteriori estimation,''
  \emph{Georgia Institute of Technology Technical Report}, 2015.

\bibitem{lupton2011visual}
T.~Lupton and S.~Sukkarieh, ``Visual-inertial-aided navigation for high-dynamic
  motion in built environments without initial conditions,'' \emph{IEEE
  Transactions on Robotics}, vol.~28, no.~1, pp. 61--76, 2011.

\bibitem{campos2021orb}
C.~Campos, R.~Elvira, J.~J.~G. Rodríguez, J.~M.~M. Montiel, and J.~D. Tardós,
  ``Orb-slam3: An accurate open-source library for visual, visual--inertial,
  and multimap slam,'' \emph{IEEE Transactions on Robotics}, vol.~37, no.~6,
  pp. 1874--1890, 2021.

\bibitem{bian2017guarantees}
A.~A. Bian, J.~M. Buhmann, A.~Krause, and S.~Tschiatschek, ``Guarantees for
  greedy maximization of non-submodular functions with applications,'' in
  \emph{Proceedings of the 34th International Conference on Machine Learning
  (ICML)}.\hskip 1em plus 0.5em minus 0.4em\relax PMLR, 2017, pp. 498--507.

\bibitem{hashemi2020randomized}
A.~Hashemi, M.~Ghasemi, H.~Vikalo, and U.~Topcu, ``Randomized greedy sensor
  selection: Leveraging weak submodularity,'' \emph{IEEE Transactions on
  Automatic Control}, vol.~66, no.~1, pp. 199--212, 2020.

\bibitem{burri2016euroc}
M.~Burri, J.~Nikolic, P.~Gohl, T.~Schneider, J.~Rehder, S.~Omari, M.~W.
  Achtelik, and R.~Siegwart, ``The euroc micro aerial vehicle datasets,''
  \emph{The International Journal of Robotics Research}, vol.~35, no.~10, pp.
  1157--1163, 2016.

\bibitem{QCar_Quanser}
Q.~Inc., ``Qcar: Sensor-rich autonomous vehicle for self-driving
  applications,'' \url{https://www.quanser.com/products/qcar/#details}, 2024.

\bibitem{olshevsky2017non}
A.~Olshevsky, ``On (non) supermodularity of average control energy,''
  \emph{IEEE Transactions on Control of Network Systems}, vol.~5, no.~3, pp.
  1177--1181, 2017.

\bibitem{vafaee2024learning}
R.~Vafaee and M.~Siami, ``Learning-based sparse sensing with performance
  guarantees,'' \emph{IEEE Transactions on Automatic Control}, 2024.

\bibitem{vafaee2023exploring}
------, ``Exploring non-submodular scheduling for large-scale sensor
  networks,'' \emph{IEEE Control Systems Letters}, 2023.

\bibitem{chamon2020approximate}
L.~F.~O. Chamon, G.~J. Pappas, and A.~Ribeiro, ``Approximate supermodularity of
  kalman filter sensor selection,'' \emph{IEEE Transactions on Automatic
  Control}, vol.~66, no.~1, pp. 49--63, 2020.

\bibitem{carlone2014eliminating}
L.~Carlone, Z.~Kira, C.~Beall, V.~Indelman, and F.~Dellaert, ``Eliminating
  conditionally independent sets in factor graphs: A unifying perspective based
  on smart factors,'' in \emph{Proceedings of the 2014 IEEE International
  Conference on Robotics and Automation (ICRA)}.\hskip 1em plus 0.5em minus
  0.4em\relax IEEE, 2014, pp. 4290--4297.

\bibitem{williamson2011design}
D.~P. Williamson and D.~B. Shmoys, \emph{The design of approximation
  algorithms}.\hskip 1em plus 0.5em minus 0.4em\relax Cambridge university
  press, 2011.

\bibitem{borwein2006convex}
J.~Borwein and A.~Lewis, \emph{Convex Analysis}.\hskip 1em plus 0.5em minus
  0.4em\relax Springer, 2006.

\bibitem{qin2018vins}
T.~Qin, P.~Li, and S.~Shen, ``Vins-mono: A robust and versatile monocular
  visual-inertial state estimator,'' \emph{IEEE Transactions on Robotics},
  vol.~34, no.~4, pp. 1004--1020, 2018.

\bibitem{shi1994good}
J.~Shi and C.~Tomasi, ``Good features to track,'' in \emph{Proceedings of IEEE
  Conference on Computer Vision and Pattern Recognition (CVPR)}.\hskip 1em plus
  0.5em minus 0.4em\relax IEEE, 1994, pp. 593--600.

\bibitem{lucas1981iterative}
B.~D. Lucas and T.~Kanade, ``An iterative image registration technique with an
  application to stereo vision,'' in \emph{IJCAI'81: Proceedings of the 7th
  International Joint Conference on Artificial Intelligence}, vol.~2, 1981, pp.
  674--679.

\bibitem{zhang2015sensor}
H.~Zhang, R.~Ayoub, and S.~Sundaram, ``Sensor selection for optimal filtering
  of linear dynamical systems: Complexity and approximation,'' in
  \emph{Proceedings of the 54th IEEE Conference on Decision and Control
  (CDC)}.\hskip 1em plus 0.5em minus 0.4em\relax IEEE, 2015, pp. 5002--5007.

\bibitem{zhang2018tutorial}
Z.~Zhang and D.~Scaramuzza, ``A tutorial on quantitative trajectory evaluation
  for visual (-inertial) odometry,'' in \emph{2018 IEEE/RSJ International
  Conference on Intelligent Robots and Systems (IROS)}.\hskip 1em plus 0.5em
  minus 0.4em\relax IEEE, 2018, pp. 7244--7251.

\bibitem{zhang2021balancing}
L.~Zhang, D.~Wisth, M.~Camurri, and M.~Fallon, ``Balancing the budget: Feature
  selection and tracking for multi-camera visual-inertial odometry,''
  \emph{IEEE Robotics and Automation Letters}, vol.~7, no.~2, pp. 1182--1189,
  2021.

\bibitem{boyd2004convex}
S.~P. Boyd and L.~Vandenberghe, \emph{Convex Optimization}.\hskip 1em plus
  0.5em minus 0.4em\relax Cambridge University Press, 2004.

\end{thebibliography}

\begin{IEEEbiography}[{\includegraphics[width=1in,height=1.25in,clip,keepaspectratio]{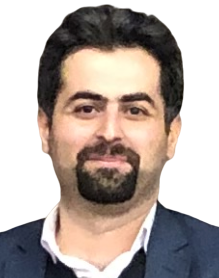}}]{Reza Vafaee}
(Member, IEEE) received the second master’s degree in pure and applied mathematics (with highest distinction) from Montclair State University, Montclair, NJ, USA, in 2020, and the Ph.D. degree in electrical engineering from Northeastern University, Boston, MA, USA, in 2024.  
He is currently a Postdoctoral Fellow in the Department of Computer Science at Boston College, Chestnut Hill, MA, USA.  
\\His research interests include the mathematical and algorithmic foundations of network optimization and control with applications in robotics and multi-agent systems, estimation theory, and spectral graph theory.
\end{IEEEbiography}

\begin{IEEEbiography}[{\includegraphics[width=1in,height=1.25in,clip,keepaspectratio]{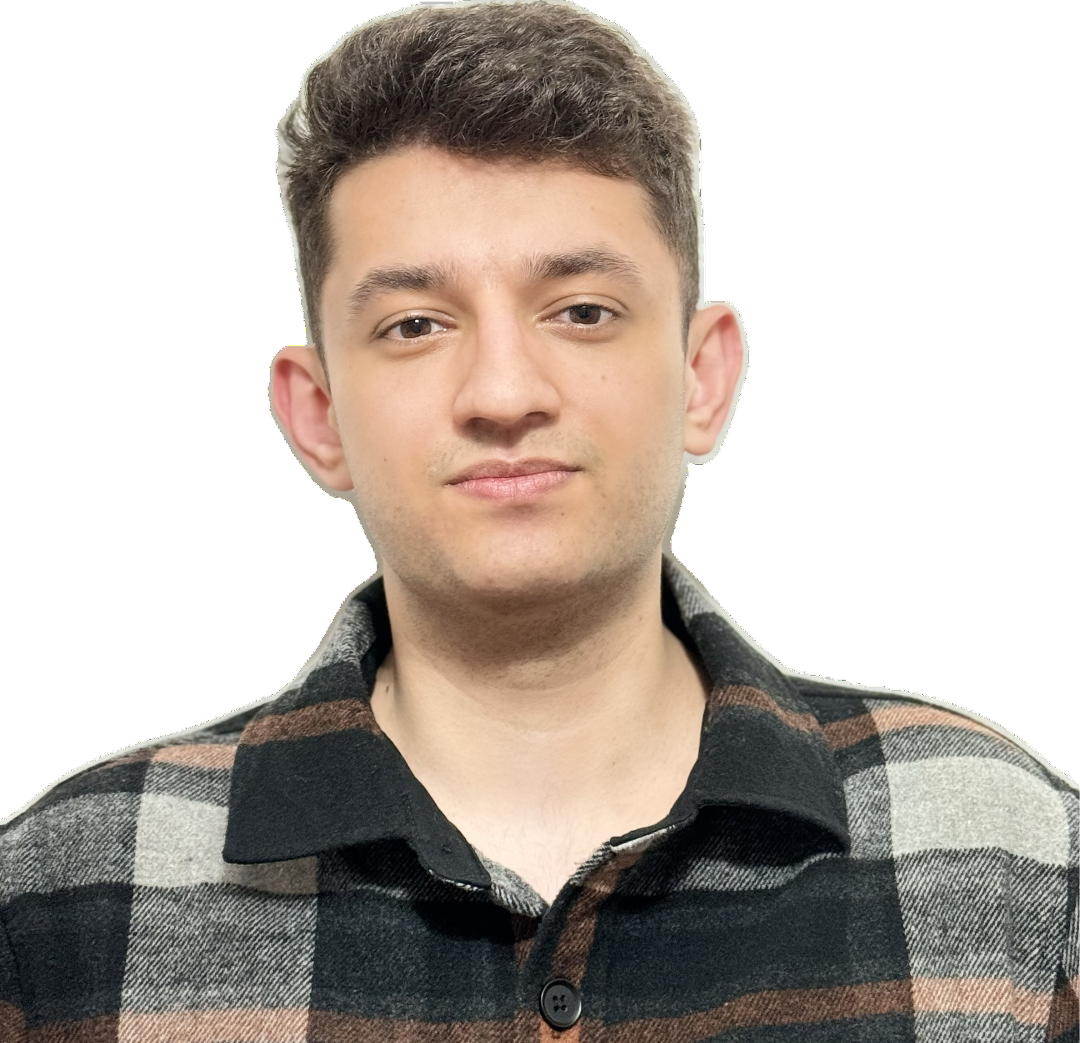}}]{Kian Behzad}
(Graduate Student Member, IEEE) received the B.Sc. degree in Electrical Engineering from Amirkabir University of Technology, Tehran, Iran. He is currently working toward the Ph.D. degree in Electrical Engineering at Northeastern University (NEU), Boston, MA, USA, where he is a research assistant in the Siami Lab.\\
He is conducting research focusing on robotics, WiFi sensing, and control systems, with applications in autonomous systems and wireless sensing. His research interests include visual–inertial navigation, optimization, and active sensing.
\end{IEEEbiography}

\begin{IEEEbiography}[{\includegraphics[width=1in,height=1.25in,clip,keepaspectratio]{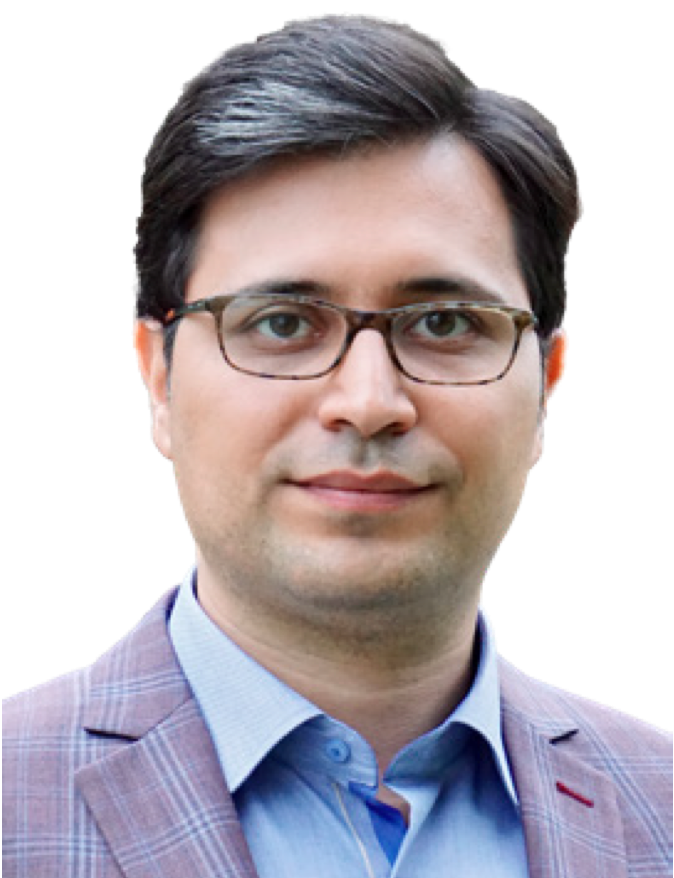}}]{Milad Siami} (Senior Member, IEEE) received the B.Sc. degrees in electrical engineering and pure mathematics, and the M.Sc. degree in electrical engineering from Sharif University of Technology, Tehran, Iran, in 2009 and 2011, respectively, and the M.Sc. and Ph.D. degrees in mechanical engineering from Lehigh University, Bethlehem, PA, USA, in 2014 and 2017. He was a Postdoctoral Associate at the Massachusetts Institute of Technology from 2017 to 2019. \\
He is currently an Associate Professor with the Department of Electrical and Computer Engineering, Northeastern University, Boston, MA, USA. His research interests include distributed control, distributed optimization, and sparse sensing in robotics and cyber–physical systems. 
\end{IEEEbiography}

\begin{IEEEbiography}[{\includegraphics[width=1in,height=1.25in,clip,keepaspectratio, trim=50 0 50 0]{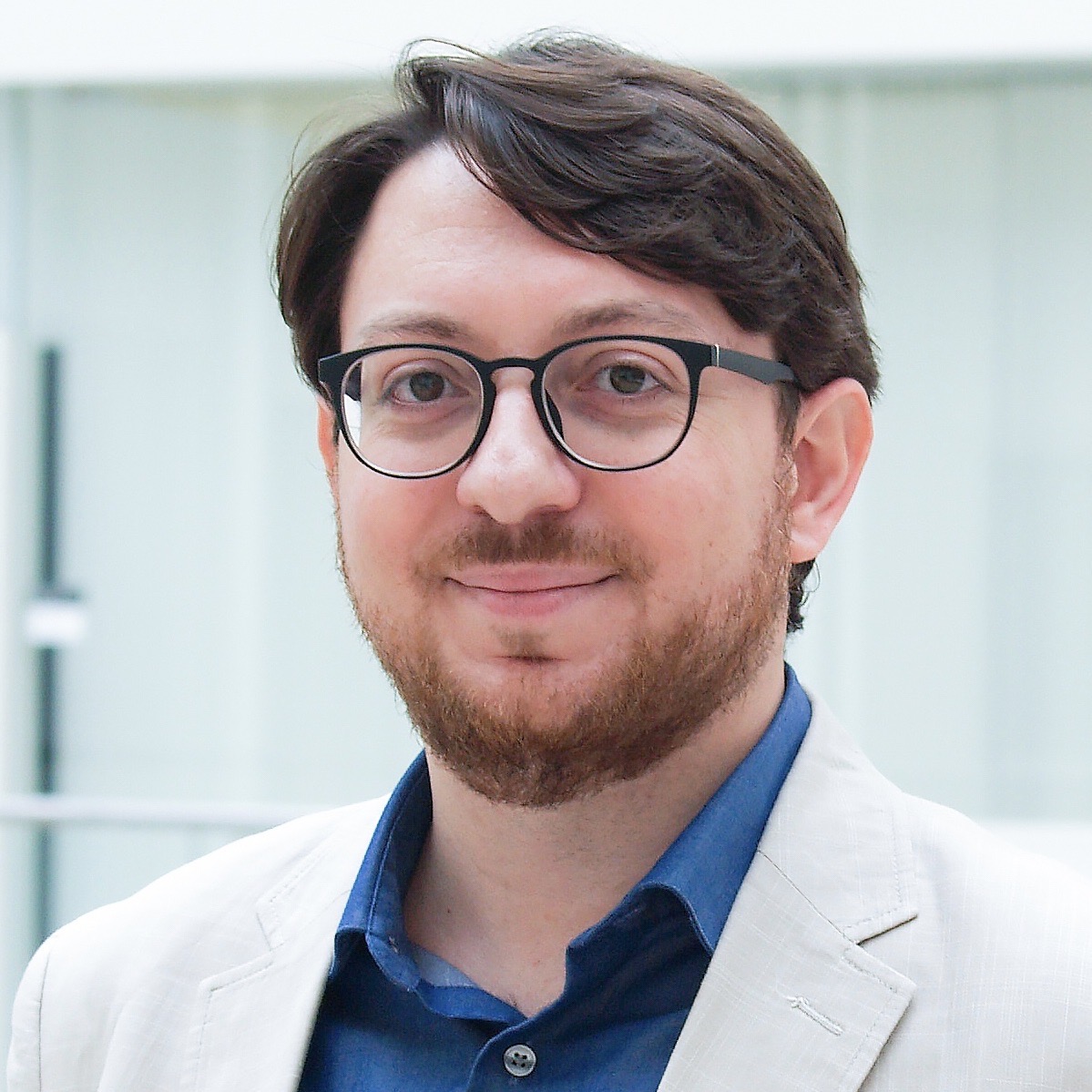}}]
{Luca Carlone} (Senior Member, IEEE) received the Ph.D. degree in robotics from the Polytechnic University of Turin, Turin, Italy, in 2012. He is currently an Associate Professor with the Department of Aeronautics and Astronautics, Massachusetts Institute of Technology, Cambridge, MA, USA, where he is also a Principal Investigator with the Laboratory for Information and Decision Systems (LIDS).\\ His research interests include estimation, optimization, and learning, with applications to robot perception and decision making. He has received several honors, including the NSF CAREER Award, the RSS Early Career Award, a Sloan Research Fellowship, and multiple Best Paper Awards at leading robotics conferences.
\end{IEEEbiography}

\begin{IEEEbiography}[{\includegraphics[width=1in,height=1.25in,clip,keepaspectratio, trim=.2in .1in .2in .1in]{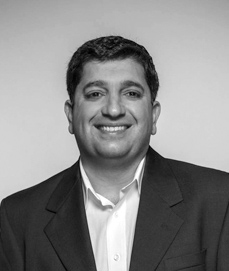}}]{Ali Jadbabaie} (Fellow, IEEE) received the Ph.D. degree in control and dynamical systems from the California Institute of Technology, Pasadena, CA, USA, in 2001. He is currently the JR East Professor and Head of the Department of Civil and Environmental Engineering at the Massachusetts Institute of Technology, Cambridge, MA, USA, where he is also a core faculty member with the Institute for Data, Systems, and Society and a Principal Investigator with the Laboratory for Information and Decision Systems (LIDS). \\
His research interests include networked dynamical systems, distributed optimization, and multiagent coordination and control. He is a Fellow of IEEE and a recipient of the Vannevar Bush Faculty Fellowship, NSF CAREER Award, and the George S. Axelby Best Paper Award.
\end{IEEEbiography}

 \end{document}